\def\url@leostyle{%
\@ifundefined{selectfont}{\def\UrlFont{\sf}}{\def\UrlFont{\scriptsize\ttfamily}}} \makeatother\urlstyle{leo}
\pgfplotsset{compat=1.16}
\newcommand{\tikzmark}[1]{\tikz[overlay,remember picture] \node (#1) {};}
\newcommand*{\AddNote}[4]{%
\begin{tikzpicture}[overlay, remember picture]
\draw [decoration={brace,amplitude=0.5em},decorate, thick]
($(#3)!(#1.north)!($(#3)-(0,1)$)$) --  
($(#3)!(#2.south)!($(#3)-(0,1)$)$)
    node [align=center, text width=2.5cm, pos=0.5, anchor=west] {#4};
\end{tikzpicture}
}%
\newtheorem{theorem}{Theorem}
\newtheorem{proposition}[theorem]{Proposition}
\newtheorem{lem}[theorem]{Lemma}
\theoremstyle{definition}
\newtheorem{definition}[theorem]{Definition}
\newtheorem{hyp}[theorem]{Assumption}
\theoremstyle{remark}
\newtheorem{remark}[theorem]{Remark}
\definecolor{Red}{rgb}{1.0,0,0.0}
\definecolor{Blue}{rgb}{0,0.0,1.0}
\definecolor{Green}{rgb}{0.2,0.5,0.2}
\def\cA{\mathcal{A}}
\def\cB{\mathcal{B}}
\def\cD{\mathcal{D}}
\def\cF{\mathcal{F}}
\def\cN{\mathcal{N}}
\def\cP{\mathcal{P}}
\def\cQ{\mathcal{Q}}
\def\cS{\mathcal{S}}
\def\cW{\mathcal{W}}
\def\cX{\mathcal{X}}
\def\bE{\mathbb{E}}
\def\bN{\mathbb{N}}
\def\bP{\mathbb{P}}
\def\bR{\mathbb{R}}
\def\bT{\mathbb{T}}
\def\bX{\mathbb{X}}
\def\bY{\mathbb{Y}}
\def\dX{d_{\bX}}
\def\dY{d_{\bY}}
\def\sB{\mathscr{B}}
\def\sE{\mathscr{E}}
\def\sF{\mathscr{F}}
\def\mA{\mathsf{A}}
\def\mB{\mathsf{B}}
\def\mC{\mathsf{C}}
\def\mD{\mathsf{D}}
\def\mK{\mathsf{K}}
\def\mS{\mathsf{S}}
\def\mT{\mathsf{T}}
\def\mW{\mathsf{W}}
\newcommand{\wt}{\widetilde}
\def\esp#1{\mathbb E\left[#1\right]}
\def\pro#1{\mathbb P \left[#1\right]}
\def\ind{\mathbbm{1}}
\def\Prbox{\hat P^{r}}
\def\Pwh{\widehat P^{\cA}}
\def\PkNN{\check P^{k}}
\def\PNN{\tilde P} 
\def\PNNt{\PNN^\theta}
\def\PNNT{\PNN^\Theta}
\def\Pgen{\overline P}
\newcommand{\1}{\mathbbm{1}}            % preferable way of writing indicator function
\newcommand{\set}[1]{\{#1\}}            % set: {xyz} to be used for inline formulas
\DeclareMathOperator{\dif}{d \!}        % used for differential, same as in commath.sty
\DeclareMathOperator*{\argmin}{arg\,min} % argmin
\DeclareMathOperator{\Var}{Var}          % \Var for variance
\DeclareMathOperator{\ud}{\mathrm d\!}
\DeclareMathOperator{\diag}{diag}
\DeclareMathOperator{\Binomial}{\mathrm{Binomial}}
\DeclareMathOperator{\Uniform}{\mathrm{Uniform}}
\DeclareMathOperator{\Normal}{\mathrm{Normal}}
\title{Learning conditional distributions on continuous spaces}
\author{
Cyril B\'en\'ezet
\thanks{Ordered alphabetically.}
\thanks{Laboratoire de Mathématiques et Modélisation d'{\'E}vry (LaMME), Université d'{\'E}vry-Val-d'Essonne, ENSIIE, UMR CNRS 8071, IBGBI 23 Boulevard de France, 91037 {\'E}vry Cedex, France. \textbf{Email:}  cyril.benezet@ensiie.fr.}
\and
Ziteng Cheng     
\footnotemark[1]	
\thanks{Department of Statistical Sciences, University of Toronto, Canada. \textbf{Email:} ziteng.cheng@utoronto.ca.}
\and
Sebastian Jaimungal      
\footnotemark[1]
\thanks{Department of Statistical Sciences, University of Toronto, Canada. \textbf{Email:} sebastian.jaimungal@utoronto.ca. \url{http://sebastian.statistics.utoronto.ca/}} 
}
\date{
%First Circulated: June 3, 2020\\
%This Version: November 09, 2021
%\today
}
\begin{document}

\maketitle

%\begin{abstract}\noindent
%Abstract goes here Abstract goes here
%Abstract goes here Abstract goes here Abstract goes here Abstract goes here
%\end{abstract}

%\textcolor{blue}{\rule{16cm}{1mm}}

%\noindent \rule{460pt}{.4pt}

\smallskip

\begin{abstract}
We investigate sample-based learning of conditional distributions on multi-dimensional unit boxes, allowing for different dimensions of the feature and target spaces. Our approach involves clustering data near varying query points in the feature space to create empirical measures in the target space. We employ two distinct clustering schemes: one based on a fixed-radius ball and the other on nearest neighbors. We establish upper bounds for the convergence rates of both methods and, from these bounds, deduce optimal configurations for the radius and the number of neighbors. We propose to incorporate the nearest neighbors method into neural network training, as our empirical analysis indicates it has better performance in practice. For efficiency, our training process utilizes approximate nearest neighbors search with random binary space partitioning. Additionally, we employ the Sinkhorn algorithm and a sparsity-enforced transport plan. Our empirical findings demonstrate that, with a suitably designed structure, the neural network has the ability to adapt to a suitable level of Lipschitz continuity locally. For reproducibility, our code is available at \url{https://github.com/zcheng-a/LCD_kNN}.    
\end{abstract}

% {\footnotesize
% \begin{tabular}{l@{} p{350pt}}
% \hline \\[-.5em]
% \textsc{Abstract}: \ & Abstract \\[.5em]
%  & \\
% % \textsc{Keywords:} \ &  kernel method, $k$-nearest-neighbor method, deep learning, Sinkhorn algorithm \\[0.5em]
% %  \textsc{MSC2010:} \ & 60G51, 60J25, 60J65 \\[1em]
% \hline
% \end{tabular}
% }

%\tableofcontents

%\begin{multicols}{2}
%
%
%\end{multicols}

\normalem
\section{Introduction}\label{sec:Intro}
Learning the conditional distribution is a crucial aspect of many decision-making scenarios. While this learning task is generally challenging, it presents unique complexities when explored in a continuous space setting. Below, we present a classic example (cf. \cite{Booth1992Bootstrap,Pflug2016From}) that highlights this core challenge.  

For simplicity, we suppose the following model 
\begin{align*}
    Y = \tfrac12 X + \tfrac12 U,
\end{align*}
where the feature variable $X$ and the noise $U$ are independent $\Uniform([0,1])$, and $Y$ is the target variable. Upon collecting a finite number of independent samples $\cD=\set{(X_m,Y_m)}_{m=1}^M$, we aim to estimate the conditional distribution of $Y$ given $X$. Throughout, we  treat this conditional distribution as a measure-valued function of $x$, denoted by $P_x$. A naive approach is to first form an empirical joint measure
\begin{align*}
\hat\psi := \frac1M \sum_{m=1}^M \delta_{(X_m, Y_m)},
\end{align*}
where $\delta$ stands for the Dirac meaasure, and then use the conditional distribution induced from $\hat\psi$ as an estimator. As the marginal distribution of $X$ is continuous, with probability $1$ (as $\bP(X_m = X_{m'})=0$ for all $m \neq m'$), we have that\footnote{In accordance to the model, we set the conditional distribution to $\Uniform([0,1])$ at points where it is not well-defined.}
\begin{align*}
\widehat P_x = \begin{cases}
\delta_{Y_m}, & x=X_m \text{ for some } m,\\
\Uniform([0,1]), & \text{otherwise.}
\end{cases} 
\end{align*}
Regardless of the sample size $M$, $\widehat P_x$ fails to approximate the true conditional distribution, 
$$P_x = \Uniform\left([x,x+\tfrac12]\right),\quad x\in[0,1].$$

Despite the well-known convergence of the (joint) empirical measure to the true distribution \cite{Dudley1969Speed,Fournier2015Rate}, the resulting conditional distribution often fails to provide an accurate approximation of the true distribution. This discrepancy could be due to the fact that calculating conditional distribution is an inherently unbounded operation. As a remedy, clustering is a widely employed technique. Specifically, given a query point $x$ in the feature space, we identify samples where $X_m$ is close to $x$ and use the corresponding $Y_m$'s to estimate $P_x$. Two prominent methods within the clustering approach are the kernel method and the nearest neighbors method\footnote{These should not be confused with similarly named methods used in density function estimation.}. Roughly speaking, the kernel method relies primarily on proximity to the query point for selecting $X_m$'s, while the nearest neighbors method focuses on the rank of proximity. Notably, discretizing the feature space (also known as quantization), a straightforward yet often effective strategy, can be seen as a variant of the kernel method with static query points and flat kernels. %Furthermore, the nearest neighbors method can be considered a dynamic adaptation of the kernel method, where the kernel adjusts based on the neighborhood of the query point. 

The problem of estimating conditional distributions can be addressed within the non-parametric regression framework, by employing clustering or resorting to non-parametric least squares, among others. Alternatively, it is feasible to estimate the conditional density function directly: a widely-used method involves estimating the joint and marginal density functions using kernel smoothing and then calculating their ratio. This method shares similarities with the clustering heuristics mentioned earlier. For a more detailed review of these approaches, we refer to Section \ref{subsec:RelatedWorks}.

This work draws inspiration from recent advancements in estimating discrete-time stochastic processes using conditional density function estimation \cite{Pflug2016From} and quantization methods \cite{Backhoff2022Estimating,Acciaio2023Convergence}. A notable feature of these works is their use of the Wasserstein distance to calculate local errors: the difference between the true and estimated conditional distributions at a query point $x$. One could average these local errors across different values of $x$'s to gauge the global error. Employing Wasserstein distances naturally frames the study within the context of weak convergence, thereby enabling discussions in a relatively general setting, although this approach may yield somewhat weaker results in terms of the mode of convergence. Moreover, utilizing a specific distance rather than the general notion of weak convergence enables a more tangible analysis of the convergence rates and fluctuations. We would like to point out that the advancements made in \cite{Backhoff2022Estimating,Acciaio2023Convergence}, as well as our analysis in this paper, relies on recent developments concerning the Wasserstein convergence rate of empirical measures under i.i.d.~sampling from a static distribution (cf. \cite{Fournier2015Rate}).

\subsection{Main contributions}
First, we introduce some notations to better illustrate the estimators that we study. Let $\bX$ and $\bY$ be multi-dimensional unit cubes, with potentially different dimensions, for feature and target spaces. For any integer $M \ge 1$, any $\mD = \{(x_m,y_m)\}_{m=1}^M \in (\bX\times\bY)^M$, and any Borel set $A \subset \bX$, we define a probability measure on $\bY$ by
\begin{align}\label{eq: mu hat}
\hat\mu^{\mD}_A :=
\begin{cases}
\left(\sum_{m=1}^M \1_A(x_m) \right)^{-1} \sum_{m=1}^{M} \1_A(x_m) \delta_{y_m}, & \sum_{m=1}^{M} \1_A(x_m) > 0, \\
\lambda_\bY, & \mbox{otherwise, }
\end{cases}
\end{align}
where $\lambda_\bY$ is the Lebesgue measure on $\bY$ and, for $y \in \bY$, $\delta_y$ is a Dirac measure with atom at $y$. In general, one could consider weighting $\delta_{y_m}$'s (cf. \cite[Section 5]{Gyorfi2002Distribution}, \cite[Chapter 5]{Biau2015Lectures}), which may offer additional benefits in specific applications. As such adjustments are unlikely to affect the convergence rate, however, we use uniform weighting for simplicity.

With (random) data $\cD=\set{(X_m,Y_m)}_{m=1}^M$, we aim to estimate the conditional distribution of $Y$ given $X$. We view this conditional distribution as a measure-valued function $P:\bX\to\cP(\bY)$ and use a subscript for the input argument and write $P_x$.
%Let $\mD_\bX$ be the entry-wise projection of $\mD$ onto $\bX$, and 
Consider a clustering scheme\footnote{In general, the clustering scheme may require information on $(x_1,\dots,x_M)$. For example, clustering the $k$-nearest-neighbor near a query point $x$ requires to know all $x_m$'s. } given by the map $\cA^{\mD}:\bX\to 2^{\bX}$.  
We investigate estimators of the form $x\mapsto\hat\mu^{\cD}_{\cA^{\cD}(x)}$.  
We use $\Pwh$ to denote said estimator and suppress $\cD$ from the notation for convenience. In later sections, we consider two kinds of maps $\cA^{\mD}$ (i) a ball with fixed radius centered at $x$, called an $r$-box and (ii) the $k$ nearest neighbors of $x$, called $k$-nearest-neighbor estimator. Wee Definitions \ref{def:rbox} and \ref{def:kNN} for more details.

One of our main contribution pertains to analyzing the error 
\begin{align}\label{eq:AveWError}
\int_{\bX} \cW\left(P_x, \Pwh_x\right)\;\nu(\dif x),
\end{align}
where $\cW$ is the $1$-Wasserstein distance (cf. \cite[Particular Case 6.2]{Villani2008book}) and $\nu\in\cP(\bX)$ is arbitrary  and provides versatility to the evaluation criterion. A canonical choice for $\nu$ is the Lebesgue measure on $\bX$, denoted by $\lambda_\bX$. This is particularly relevant in control settings where $\bX$ represents the state-action space and accurate approximations across various state and action scenarios are crucial for making informed decisions. The form of error above is also foundational in stochastic process estimation under the adapted Wasserstein distance (cf. \cite[Lemma 3.1]{Backhoff2022Estimating}), making the techniques we develop  potentially relevant in other contexts. 
Under the assumption that $P$ is Lipschitz continuous (Assumption \ref{hyp: kernel lip}) and  standard assumptions on the data collection process (Assumption \ref{hyp: data}), we analyze the convergence rate and fluctuation by bounding the following two quantities
\begin{gather*}
\bE\left[ \int_{\bX} \cW\left(P_x, \Pwh_x \right)\nu(\dif x) \right] \quad\text{and}\quad \Var\left[ \int_{\bX} \cW\left(P_x, \Pwh_x\right)\nu(\dif x) \right].
\end{gather*}
Moreover, by analyzing the above quantities, we gain insights into the optimal choice of the clustering mapping $\cA$. For the detail statements of these results, we refer to Theorems \ref{thm:ExpectedRaterbox}, \ref{thm:ExpectedRatekNN}, \ref{thm:Concenrboxnew}, and \ref{thm:ConcenkNNnew}. We also refer to Section \ref{subsec:Comments} for related comments.

To illustrate another aspect of our contribution, we note by design $x\mapsto\Pwh_x$ is piece-wise constant. This characteristic introduces limitations. Notably, it renders the analysis of performance at the worst-case $x$ elusive. Contrastingly, by building a Lipschitz-continuous parametric estimator $\PNNT$ from the raw estimator $\Pwh$, in Proposition \ref{prop:PThetanew} we demonstrate that an upper bound on the aforementioned expectation allows us to derive a worst-case performance guarantee. Guided by Proposition \ref{prop:PThetanew}, we explore a novel approach of training a neural network for estimation, by using $\Pwh$ as training data and incorporating suitably imposed Lipschitz continuity. To be comprehensive, we include in Section \ref{subsec:RelatedWorks} a review of studies on Lipschitz continuity in neural networks.

In Section \ref{subsec:ImplOverview}, we define $\PNNt$ as a neural network that approximates $P$, where $\theta$ represents the network parameters. We train $\PNNt$ with the objective:
\begin{align*}
\argmin_{\theta}\sum_{n=1}^N \cW\left(\Pwh_{\tilde X_n}, \PNNt_{\tilde X_n}\right),
\end{align*}
where $(\tilde X_n)_{n=1}^N$ is a set of randomly selected query points.  For implementation purposes, we use the $k$-nearest-neighbor estimator in the place of $\Pwh$ (see Definition \ref{def:kNN}). To mitigate the computational costs stemming from the nearest neighbors search, we employ the technique of Approximate Nearest Neighbor Search with Random Binary Space Partitioning (ANN-RBSP), as discussed in Section \ref{subsubsec:ANNS}. In Section \ref{subsubsec:W}, we compute $\cW$ using the Sinkhorn algorithm, incorporating  normalization and enforcing sparsity for improved accuracy. 
To impose a suitable level of local Lipschitz continuity on $\PNNt$, in Section \ref{subsubsec:Net}, we employ a neural network with a specific architecture and train the networks using a tailored procedure. The key component of this architecture is the convex potential layer introduced in \cite{Meunier2022Dynamical}. In contrast to most extant literature that imposes Lipschitz continuity on neural networks, our approach does not utilize specific constraint or regularization of the objective function, but relies on certain self-adjusting mechanism embedded in the training.  

In Section \ref{subsec:Experiments}, we evaluate the performance of the trained $\PNNt$, denoted by $\PNNT$, using three sets of synthetic data in $1$D and $3$D spaces. Our findings indicate that $\PNNT$ generally outperforms $\Pwh$, even though it is initially trained to match $\Pwh$. This superior performance persists even when comparing $\PNNT$ to different $\Pwh$ using various $k$ values, without retraining $\PNNT$. Furthermore, despite using the same training parameters, $\PNNT$ consistently demonstrates the ability to adapt to a satisfactory level of local Lipschitz continuity across all cases. Moreover, in one of the test cases, we consider a kernel that exhibits a jump discontinuity, and we find that $\PNNT$ handles this jump case well despite Lipschitz continuity does not hold.

Lastly, we provide further motivation of our approach by highlighting some potential applications for $\PNNT$. The first application is in model-based policy gradient method in reinforcement learning. We anticipate that the enforced Lipschitz continuity allows us to directly apply the policy gradient update via compositions of $\PNNT$ and cost function for more effective optimality searching. The second application of $\PNNT$ is in addressing optimisation in risk-averse Markov decision processes, where dynamic programming requires knowledge beyond the conditional expectation of the risk-to-go (cf. \cite{Chow2015Risk,Huang2017Risk,Coache2023conditionally,Cheng2023Distributional}). The study of these applications is left for further research.

\subsection{Related works}\label{subsec:RelatedWorks}
In this section, we will first review the clustering approach in estimating conditional distributions, and then proceed to review recent studies on Lipschitz continuity in neural networks.

\subsubsection{Estimating conditional distributions via clustering}
The problem of estimating conditional distributions is frequently framed as non-parametric regression problems for real-valued functions. For instance,  when $d_\bY = 1$, estimate the conditional $\alpha$-quantile of $Y$ given $X$. Therefore, we begin by reviewing some of the works in non-parametric regression. 

The kernel method in non-parametric regression traces its origins back to the Nadaraya-Watson estimator \cite{Nadaraya64Estimating, Watson64Smooth}, if not earlier. Subsequent improvements have been introduced, such as integral smoothing \cite{Gasser79Kernel} (also known as the Gasser-Müller estimator), local fitting with polynomials instead of constants \cite{Fan92Design}, and adaptive kernels \cite{Hall1999Methods}. Another significant area of discussion is the choice of kernel bandwidth, as detailed in works like \cite{Hardle1985Optimal, Gijbels2004Bandwidth, Kohler2014Review}. Regarding convergence rates, analyses under various settings can be found in \cite{Stone1982Optimal, Hall1990Nonparametric, Kohler2009Optimal, Li2010Optimal}, with \cite{Stone1982Optimal} being particularly relevant to our study for comparative purposes. According to \cite{Stone1982Optimal}, if the target function is Lipschitz continuous, with i.i.d.~sampling and that the sampling distribution in the feature space has a uniformly positive density, then the optimal rate of the $\|\cdot\|_1$-distance between the regression function and the estimator is of the order $M^{-\frac{1}{d_\bX+2}}$. For a more comprehensive review of non-parametric regression using kernel methods, we refer to the books \cite{Gyorfi2002Distribution, Ferraty2006Nonparametric, Wasserman2006All} and references therein.

Non-parametric regression using nearest neighbors methods originated from classification problems \cite{Fix1951Discriminatory}. Early developments in this field can be found in \cite{Mack1981Local, Devroye1982Necessary, Bhattacharya1990Kernel}. For a comprehensive introduction to nearest neighbors methods, we refer to \cite{Gyorfi2002Distribution}. More recent reference \cite{Biau2015Lectures} offers further detailed exploration of the topic. The nearest neighbor method can be viewed as a variant of the kernel method that adjusts the bandwidth based on the number of local data points—a property that has gained significant traction. Recently, the application of the nearest neighbor method has expanded into various less standard settings, including handling missing data \cite{Rachdi2021k}, reinforcement learning \cite{Shad2018Q,Giegrich2024K}, and time series forecasting \cite{Martinez2017methodology}. For recent advancements in convergence analysis beyond the classical setting, see \cite{Zhao2019Minimax, Padilla2020Adaptive, Ryu2022Minimax, Demirkayaa2024Optimal}.

Although the review above mostly focuses on clustering approach, other effective approaches exist, such as non-parametric least square, or more broadly, conditional elicitability (e.g., \cite{Gyorfi2002Distribution, Muandet2017Kernel, Wainwright2019High, Coache2023conditionally}). Non-parametric least square directly fits the data using a restricted class of functions. At first glance, this approach appears distinct from clustering. However, they share some similarities in their heuristics: the rigidity of the fitting function, due to imposed restrictions, allows data points near the query point to affect the estimation, thereby implicitly incorporating elements of clustering. 

Apart from non-parametric regression, conditional density function estimation is another significant method for estimating conditional distributions. One approach is based on estimating joint and marginal density functions, and then using the ratio of these two to produce an estimator for the conditional density function. A key technique used in this approach is kernel smoothing. Employing a static kernel for smoothing results in a conditional density estimator that shares similar clustering heuristics to those found in the kernel method of non-parametric regression. For a comprehensive overview of conditional density estimation, we refer to reference books \cite{Scott2015Multivariate, Simonoff1996Smoothing}. For completeness, we also refer to \cite[Section 5.1]{Muandet2017Kernel} for a perspective on static density function estimation from the standpoint of reproducing kernel Hilbert space. Further discussions on estimation using adaptive kernels can be found in, for example, \cite{Bashtannyk2001Bandwidth,Lacour2007Adaptive,Bertin2016Adaptive,Zhao2023Adaptive}. 

Despite extensive research in non-parametric regression and conditional density function estimation, investigations from the perspective of weak convergence have been relatively limited, only gaining more traction in the past decade. Below, we highlight a few recent studies conducted in the context of estimating discrete-time stochastic processes under adapted Wasserstein distance, as the essence of these studies are relevant to our evaluation criterion \eqref{eq:AveWError}. \cite{Pflug2016From} explores the problem asymptotically, employing tools from conditional density function estimation with kernel smoothing. Subsequently, \cite{Backhoff2022Estimating} investigates a similar problem with a hypercube as state space, employing the quantization method. Their approach removes the need to work with density functions. They calculate the convergence rate, by leveraging recent developments in the Wasserstein convergence rate of empirical measures \cite{Fournier2015Rate}. Moreover, a sub-Gaussian concentration with parameter $M^{-1}$ is established. The aforementioned results are later extended to $\bR^d$ in \cite{Acciaio2023Convergence}, where a non-uniform grid is used to mitigate assumptions on moment conditions. Most recently, \cite{Hou2024Convergence} examines smoothed variations of the estimators proposed in \cite{Backhoff2022Estimating, Acciaio2023Convergence}. Other developments on estimators constructed from smoothed quantization can be found in \cite{Smid2024Approximation}.

Lastly, regarding the machine learning techniques used in estimating conditional distributions, conditional generative models are particularly relevant. For reference, see \cite{Mirza2014Conditional, Papamakarios2017Masked, Vaswani2017Attention, Fetaya2020Understanding}. These models have achieved numerous successes in image generation and natural language processing. We suspect that, due to the relatively discrete (albeit massive) feature spaces in these applications, clustering is implicitly integrated into the training procedure. In continuous spaces, under suitable setting, clustering may also become an embedded part of the training procedure. For example, implementations in \cite{Li2020ACFlow,Vuletic2024Fin,Hosseini2024Conditional} do not explicitly involve clustering and use training objectives that do not specifically address the issues highlighted in the motivating example at the beginning of the introduction. Their effectiveness could possibly be attributed to certain regularization embedded within the neural network and training procedures. Nevertheless, research done in continuous spaces that explicitly uses clustering approaches when training conditional generative models holds merit. Such works are relatively scarce. For an example of this limited body of research, we refer to \cite{Xu2022Conditional}, where the conditional density function estimator from \cite{Pflug2016From} is used to train an adversarial generative network for stochastic process generation.

\subsubsection{Lipschitz continuity in neural networks}
Recently, there has been increasing interest in understanding and enforcing Lipschitz continuity in neural networks. The primary motivation is to provide a certifiable guarantee for classification tasks performed by neural networks: it is crucial that minor perturbations in the input object have a limited impact on the classification outcome. 

One strategy involves bounding the Lipschitz constant of a neural network, which can then be incorporated into the training process. For refined upper bounds on the (global) Lipschitz constant, see, for example, \cite{Bartlett2017Spectrally, Virmaux2018Lipschitz, Tsuzuku2018Lipschitz, Fazlyab2019Efficient, Xue2022Chordal,Fazlyab2024Certified}. For local bounds, we refer to \cite{Jordan2021Exactly, Bhowmick2021LipBaB, Shi2022Efficiently} and the references therein. We also refer to \cite{Zhang2022Rethinking} for a study of the Lipschitz property from the viewpoint of boolean functions.

Alternatively, designing neural network architectures that inherently ensure desirable Lipschitz constants is another viable strategy. Works in this direction include \cite{Meunier2022Dynamical, Singla2022Improved, Wang2023Direct, Araujo2023Unified}. Notably, the layer introduced in \cite{Meunier2022Dynamical} belongs to the category of residual connection \cite{He2016Deep}.

Below, we review several approaches that enforce Lipschitz constants during neural network training. \cite{Tsuzuku2018Lipschitz, Liu2022Learning} explore training with a regularized objective function that includes upper bounds on the network's Lipschitz constant. \cite{Gouk2021Regularisation} frame the training problem into constrained optimization and train with projected gradients descent. Given the specific structure of the refined bound established in \cite{Fazlyab2019Efficient}, \cite{Pauli2022Training} combines training with semi-definite programming. They develop a version with a regularized objective function and another that enforces the Lipschitz constant exactly. \cite{Fazlyab2024Certified} also investigates training with a regularized objective but considers Lipschitz constants along certain directions. \cite{Huang2021Training} devises a training procedure that removes components from the weight matrices to achieve smaller local Lipschitz constants. \cite{Trockman2021Orthogonalizing} initially imposes orthogonality on the weight matrices, and subsequently enforces a desirable Lipschitz constant based on that orthogonality. Ensuring desirable Lipschitz constants with tailored architectures, \cite{Singla2022Improved, Wang2023Direct} train the networks directly. 
%We note that \cite{Singla2022Improved} also implements last layer normalization and uses additional regularization that does not directly influence Lipschitz continuity. 
Although the architecture proposed in \cite{Meunier2022Dynamical} theoretically ensures the Lipschitz constant, it requires knowledge of the spectral norm of the weight matrices, which does not admit explicit expression in general. Their training approach combines power iteration for spectral norm approximation with the regularization methods used in \cite{Tsuzuku2018Lipschitz}.

Finally, we note that due to their specific application scenarios, these implementations concern relatively stringent robustness requirements and thus necessitate more specific regularization or constraints. In our setting, it is generally desirable for the neural network to automatically adapt to a suitable level of Lipschitz continuity based on the data, while also avoiding excessive oscillations from over-fitting. The literature directly addressing this perspective is limited (especially in the setting of conditional distribution estimation). We refer to \cite{Bai2021Recent,Bountakas2023Defense, Cohen2019Certified} for discussions that could be relevant.

\subsection{Organization of the paper}
Our main theoretical results are presented in Section \ref{sec:Theory}. Section \ref{sec:ImplNN} is dedicated to the training of $\PNNT$. We will outline the key components of our training algorithm and demonstrate its performance on three sets of synthetic data. We will prove the theoretical results in Section \ref{sec:Proofs}. Further implementation details and ablation analysis are provided in Section \ref{sec:ImplD}. In Section \ref{sec:Weakness}, we discuss the weaknesses and potential improvements of our implementation. Appendix \ref{sec:MorePlots} and \ref{sec:Config} respectively contain additional plots and a table that summarizes the configuration of our implementation. Additionally, Appendix \ref{sec:AnotherFluc} includes a rougher version of the fluctuation results.

\section*{Notations and terminologies}
\noindent
Throughout,  we adopt the following set of notations and terminologies.
\begin{itemize}[label=$\circ$]

\item On any normed space $(E, \| \cdot \|)$, for all $x \in E$ and $\gamma > 0$, $B(x,\gamma)$ denotes the closed ball of radius $\gamma$ around $x$, namely $B(x,\gamma)=\left\{x' \in E \,\middle|\; \|x-x'\|\le\gamma \right\}$.

\item For any measurable space $(E,\sE)$, $\cP(E)$ denotes the set of probability distributions on $(E,\sE)$. For all $x \in E$, $\delta_x \in \cP(E)$ denotes the Dirac mass at $x$.

\item We endow normed spaces $(E,\|\cdot\|)$  with their Borel sigma-algebra $\sB(E)$, and $\cW$ denotes the $1$-Wasserstein distance on $\cP(E)$.

\item On $\bX = [0,1]^d$, we denote by $\lambda_\bX$ the Lebesgue measure. We say a measure $\nu\in\cP(\bX)$ is dominated by Lebesgue measure with a constant $\overline C>0$ if $\nu(A)\le\overline C\lambda_\bX(A)$ for all $A\in\sB([0,1]^d)$.

\item The symbol $\sim$ denotes equivalence in the sense of big O notation, indicating that each side dominates the other up to a multiplication of some positive absolute constant. More precisely, $a_n\sim b_n$ means there are finite constants $c,C>0$ such that 
\begin{align*}
c\, a_n \le b_n \le C\, a_n,\quad n\in\bN.
\end{align*}
Similarly, $\lesssim$ implies that one side is of a lesser or equal, in the sense of big O notation, compared to the other.
\end{itemize}

\section{Theoretical results} \label{sec:Theory}
In Section \ref{subsec:Setup}, we first formally set up the problem and introduce some technical assumption. We then study in Section \ref{subsec:rBox} and \ref{subsec:kNN} the convergence and fluctuation of two versions of $\Pwh$, namely, the $r$-box estimator and the $k$-nearest-neighbor estimator. Related comments are organized in Section \ref{subsec:Comments}. Moreover, in Section \ref{subsec:ToNN}, we provide a theoretical motivation for the use of $\PNNT$, the Lipschitz-continuous parametric estimator trained from $\Pwh$.

\subsection{Setup}\label{subsec:Setup}
For $\dX, \dY \ge 1$ two integers, we consider $\bX := [0,1]^{\dX}$ and $\bY := [0,1]^{\dY}$, endowed with their respective sup-norm $\|\cdot\|_\infty$. %For any $x \in \bX$ and $\gamma > 0$, we denote by $B(x,\gamma) := \left\{ x' \in \bX \,\middle|\, \| x-x'\|_\infty \le \gamma \right\}$ the close ball of radius $\gamma$ around $x$.

\begin{remark}
The sup-norm is chosen for simplicity of the theoretical analysis only: as all  norms on $\bR^n$ are equivalent (for any generic $n \ge 1$), our results are  valid, up to different multiplicative constants, for any other choice of norm.
\end{remark}

We aim to estimate an unknown probabilistic kernel 
\begin{align}
\label{eq: kernel}
P : \bX &\to \cP(\bY) \\
x &\mapsto P_x(\ud y).
\end{align}
%where $\cP(\bY)$ is the set of probability measures on $(\bY,\cB(\bY))$, where $\cB(\bY)$ is the usual Borel sigma-algebra of $\bY$. We also endow $\cP(\bY)$ with its $1$-Wasserstein distance, denoted $\cW$, so that $(\cP(\bY),\cW)$ is a metric space. Similarly, we denote by $\cP(\bX)$ the set of probability measures on $(\bX,\cB(\bX))$.

To this end, given an integer-valued \emph{sampled size} $M \ge 1$, we consider a set of (random) data points $\cD := \{(X_m, Y_m)\}_{m=1}^M$ associated to $P$. We also define the set of projections of the data points onto the feature space as $\cD_\bX := \{X_m\}_{m=1}^M$.

Throughout this section, we work under the following technical assumptions.
\begin{hyp}[Lipschitz continuity of kernel] 
\label{hyp: kernel lip}
There exists $L \ge 0$ such that, for all $(x,x') \in \bX^2$,
\begin{align*}
\cW(P_x, P_{x'}) \le L\|x-x'\|_\infty.
\end{align*}
\end{hyp}

\begin{hyp} \label{hyp: data}
The following is true:
\begin{itemize}

\item[(i)] $\cD$ is i.i.d.\!\! with probability distribution $\psi := \xi \otimes P$, where $\xi \in \cP(\bX)$ and where $\xi \otimes P \in \cP(\bX \times \bY)$ is (uniquely, by Caratheodory extension theorem) defined by
\begin{align*}
\left(\xi \otimes P\right)(A \times B) := \int_\bX \1_A(x) P_x(B) \xi(\ud x), \quad A \in \sB(\bX), B \in \sB(\bY).
\end{align*}

\item[(ii)] There exists $\underline c \in (0,1]$ such that, for all $A \in \sB(\bX)$, $\xi(A) \ge \underline c \;\lambda_{\bX}(A)$.
\end{itemize}
\end{hyp}

These assumptions allow us to analyze  convergence and gain insights into the optimal clustering hyper-parameters without delving into excessive technical details. Assumption \ref{hyp: kernel lip} is mainly used for determining the convergence rate. If the convergence rate is not of concern, it is possible to establish asymptotic results with less assumptions. We refer to \cite{Devroye1982Necessary,Backhoff2022Estimating} for relevant results. The conditions placed on $\xi$ in Assumption \ref{hyp: data} are fairly standard, though less stringent alternatives are available. For instance, Assumption \ref{hyp: data} (i) can be weakened by considering suitable dependence \cite{Hall1990Nonparametric} or ergodicity in the context of stochastic processes \cite{Rudolf2018Perturbation}. Assumption \ref{hyp: data} (ii), implies there is mass almost everywhere and is aligned with the motivation from control settings discussed in the introduction. 
Assumptions \ref{hyp: kernel lip} and  \ref{hyp: data} are not exceedingly stringent and provides a number of insights into the estimation problem. More general settings are left for further research.% will be perused elsewhere.

The estimators discussed in subsequent sections are of the form $\Pwh$, as introduced right after \eqref{eq: mu hat}, for two specific choices of clustering schemes $\cA$ constructed with the data $\cD$.
\begin{remark}
In the following study, we assert all the measurability needed for $\Pwh$ to be well-defined. These measurability can be verified using standard measure-theoretic tools listed in, for example, \cite[Section 4 and 15]{Aliprantis2006book}.
\end{remark}

\subsection{Results on $r$-box estimator}\label{subsec:rBox}
The first estimator, which we term the \emph{$r$-box estimator}, is defined as follows.

\begin{definition}\label{def:rbox}
Choose $r$, a real number, s.t. $0<r<\frac12$. The \emph{$r$-box estimator} for $P$ is defined by
\begin{align}\label{eq: r-box def}
\Prbox : \bX &\to \cP(\bY) \\
x &\mapsto \Prbox_x := \hat\mu^\cD_{\cB^r(x)},
\end{align}
where, for all $x \in \bX$, $\cB^r(x) := B(\beta^r(x), r)$ and $\beta^r(x) := r \vee x \wedge (1-r)$, where $r\vee\cdot$ and $\cdot\wedge (1-r)$ are applied entry-wise. %$\beta^r(x) := (r \vee x_i \wedge 1-r)_{i=1}^{\dX}$. 

\end{definition}

\begin{remark}\label{rem: rbox radius}
The set $\cB^r(x)$ is defined such that it is a ball of radius around $x$ whenever $x$ is at least $r$ away from the boundary $\partial\bX$ (in all of its components), otherwise, we move the point $x$ in whichever components are within $r$ from $\partial\bX$ to be a distance $r$ away from $\partial\bX$. Consequently, for all $0<r<\frac12$ and for all $x \in \bX$, $\cB^r(x)$ has a \emph{bona fide} radius of $r$, as the center $\beta^r(x)$ is smaller or equal to $r$ away from $\partial\bX$. %In addition, note that $\cB^r(x) \subset B(x,2r)$.
\end{remark}

For the $r$-box estimator, we have the following convergence results. 
The theorem below discusses the convergence rate of the average Wasserstein distance between the unknown kernel evaluated at any point and its estimator, when the radius $r$ is chosen optimally with respect to the data sample $M$. Section \ref{subsec:Pf:thm:ExpectedRaterbox} is dedicated to its proof.

\begin{theorem}\label{thm:ExpectedRaterbox}
Under Assumptions \ref{hyp: kernel lip} and \ref{hyp: data}, choose $r$ as follows
\begin{align*}
 	r \sim \begin{cases} 
M^{-\frac{1}{\dX + 2}}, & \dY=1, 2 \\
M^{-\frac{1}{\dX + \dY}}, & \dY\ge 3.
\end{cases}
\end{align*}
Then, there is a constant $C>0$ (which depends only on $\dX,\dY,L,\underline c$), such that, for all probability distribution $\nu \in \cP(\bX)$, we have
\begin{align}\label{eq:ExpectedRaterbox}
\esp{ \int_{\bX} \cW\left(P_x, \Prbox_x\right) \nu(\dif x) } \le \sup_{x\in\bX} \esp{ \cW\left(P_x, \Prbox\right) }  \le 
C\times \begin{cases}
M^{-\frac{1}{{d_\bX} + 2}}, & {d_\bY}=1,\\
M^{-\frac{1}{{d_\bX} + 2}}\ln(M), & {d_\bY}=2,\\
M^{-\frac{1}{{d_\bX} + {d_\bY}}}, & {d_\bY}\ge 3.
\end{cases}
\end{align}
\end{theorem}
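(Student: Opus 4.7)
The plan is to decompose the error through an intermediate \emph{local population} kernel
\[
\Pgen_x := \frac{1}{\xi(\cB^r(x))} \int_{\cB^r(x)} P_{x'}\, \xi(\dif x'),
\]
which is well-defined since Assumption \ref{hyp: data} (ii) combined with $\cB^r(x) \subset \bX$ and $\lambda_\bX(\cB^r(x)) = (2r)^{\dX}$ gives $\xi(\cB^r(x)) \ge \underline c\,(2r)^{\dX} > 0$. The triangle inequality splits the error into a bias term $\cW(P_x, \Pgen_x)$ and a statistical term $\cW(\Pgen_x, \Prbox_x)$. The left-hand inequality in \eqref{eq:ExpectedRaterbox} is immediate from Fubini and $\nu(\bX)=1$, so I would focus on bounding $\sup_{x \in \bX} \bE[\cW(P_x, \Prbox_x)]$.

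For the bias, I would use convexity of $\cW$ in its first argument together with Assumption \ref{hyp: kernel lip}: any $x' \in \cB^r(x)$ satisfies $\|x-x'\|_\infty \le \|x-\beta^r(x)\|_\infty + \|\beta^r(x)-x'\|_\infty \le 2r$ (each component of $\beta^r(x)$ equals that of $x$ when the latter lies in $[r,1-r]$, and is pushed to the nearest endpoint by at most $r$ otherwise), yielding $\cW(P_x, \Pgen_x) \le 2Lr$ uniformly in $x$.

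For the statistical term, let $N := \sum_{m=1}^M \1_{\cB^r(x)}(X_m)$ and $p := \xi(\cB^r(x))$. Conditional on $N = n \ge 1$, the selected $Y_m$'s are i.i.d.~with law $\Pgen_x$, so the Wasserstein empirical-measure rates of \cite{Fournier2015Rate} on $[0,1]^{\dY}$ yield
\[
\bE\!\left[ \cW(\Pgen_x, \Prbox_x) \,\middle|\, N \right] \lesssim \phi_{\dY}(N)\,\1_{N \ge 1} + \1_{N = 0},
\]
with $\phi_1(n) = n^{-1/2}$, $\phi_2(n) = n^{-1/2}\ln(1+n)$, and $\phi_{\dY}(n) = n^{-1/\dY}$ for $\dY \ge 3$; the $\1_{N=0}$ term absorbs the default value $\lambda_\bY$ for $\Prbox_x$ together with the trivial bound $\cW \le \mathrm{diam}(\bY) = 1$. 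Since $N \sim \Binomial(M,p)$ with $p \ge \underline c\,(2r)^{\dX}$, a Chernoff bound gives $\bP(N < Mp/2) \le e^{-Mp/8}$ and $\bP(N=0) = (1-p)^M \le e^{-Mp}$; on the complementary event I would replace $N$ by its deterministic lower bound $Mp/2$ to obtain
\[
\bE\!\left[\cW(\Pgen_x, \Prbox_x)\right] \lesssim \phi_{\dY}(Mr^{\dX}) + e^{-c\,Mr^{\dX}},
\]
uniformly in $x$, for some constant $c > 0$ depending only on $\underline c$ and $\dX$.

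Combining bias and statistical bounds gives $\sup_x \bE[\cW(P_x, \Prbox_x)] \lesssim r + \phi_{\dY}(Mr^{\dX})$, and equating the two terms (i.e., $r^{\dY} \sim M^{-1} r^{-\dX}$ up to logarithmic factors) produces exactly $r \sim M^{-1/(\dX+2)}$ for $\dY \in \{1,2\}$ and $r \sim M^{-1/(\dX+\dY)}$ for $\dY \ge 3$, with the extra $\ln M$ factor appearing only when $\dY=2$. The main obstacle will be the rigorous control of $\bE[\phi_{\dY}(N)\,\1_{N \ge 1}]$: since $n \mapsto n^{-1/\dY}$ is convex, Jensen's inequality points the wrong way, so one must leverage Chernoff concentration of the Binomial $N$ around its mean $Mp$, ensuring that the rare events $\{N < Mp/2\}$ and $\{N=0\}$ contribute only an exponentially small remainder once $Mr^{\dX}$ is of moderate size, which is precisely what the optimal choice of $r$ guarantees.
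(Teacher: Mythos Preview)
Your proposal is correct and takes a genuinely different route from the paper. You introduce the local population kernel $\Pgen_x$ and observe that, conditional on $N=n\ge 1$, the retained $Y_m$'s are i.i.d.\ with law $\Pgen_x$, so that $\Prbox_x$ is literally an $n$-sample empirical measure of $\Pgen_x$ and Fournier--Guillin applies directly; the bias $\cW(P_x,\Pgen_x)\le 2Lr$ then comes from convexity of $\cW$. The paper instead never names $\Pgen_x$: it conditions on $N_B=m$, writes the expectation as an integral over $(x_1,\dots,x_m)\in B^m$, and uses a telescoping lemma (Lemma~\ref{lem:AbsDiffIntegrals}) to replace each $P_{x_\ell}$ by $P_x$ at cost $\tfrac{L}{m}\|x_\ell-x\|_\infty\le 2Lr$, arriving at the same intermediate bound
\[
\esp{\cW(P_x,\Prbox_x)} \le 2Lr + \sum_{m=0}^M \binom{M}{m}\xi(B)^m\xi(B^c)^{M-m} R(m).
\]
The second difference is how the binomial average of $R(m)\lesssim m^{-1/\dY}$ is controlled. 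You propose Chernoff concentration, splitting on $\{N\ge Mp/2\}$ and bounding the complement by $e^{-cMp}$. The paper instead proves a dedicated combinatorial lemma (Lemma~\ref{lem:sum p}): by the identity $\binom{M}{m}p^m = \tfrac{1}{(M+1)p}\binom{M+1}{m+1}p^{m+1}$ and Jensen for the \emph{concave} map $v\mapsto v^{1-1/d}$ applied to a $\Binomial(M+1,p)$ variable, one gets the clean inequality
\[
\sum_{m=1}^M \binom{M}{m}p^m(1-p)^{M-m} m^{-1/d} \le \big((M+1)p\big)^{-1/d} + \big((M+1)p\big)^{-1}.
\]
Your Chernoff route is more robust (it works for any bounded decreasing $\phi_{\dY}$ without needing the specific power form), while the paper's lemma is sharper in the constants and avoids the exponential remainder term altogether. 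Both yield the same rates after balancing $r$ against $(Mr^{\dX})^{-1/(2\vee\dY)}$.
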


Next, we bound the associated variance whose proof is postponed to Section \ref{subsec:Pf:thm:Concenrboxnew}.

\begin{theorem}\label{thm:Concenrboxnew}
Under Assumptions \ref{hyp: data}, consider $r\in(0,\frac12]$. Let $\nu \in \cP(\bX)$ be dominated by $\lambda_\bX$ with a constant $\overline C>0$. Then, 
\begin{align}\label{eq:Concenrboxnew}
\Var\left[ \int_{\bX} \cW\left(P_x, \Prbox_x\right) \dif \nu(x) \right] \le \frac{4^{\dX+1}\overline C^2}{\underline c^2 (M+1)}.
\end{align}

\end{theorem}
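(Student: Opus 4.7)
My plan is to apply the Efron-Stein inequality to $F(\cD) := \int_{\bX} \cW(P_x, \Prbox_x) \dif \nu(x)$. Let $\cD^{(m)}$ denote $\cD$ with $(X_m, Y_m)$ resampled by an independent copy $(X_m', Y_m')$. Then
\[ \Var[F(\cD)] \le \tfrac{1}{2} \sum_{m=1}^M \esp{(F(\cD) - F(\cD^{(m)}))^2}, \]
and the triangle inequality for $\cW$ applied pointwise in $x$ gives $|F(\cD) - F(\cD^{(m)})| \le \int_{\bX} \cW(\Prbox_x[\cD], \Prbox_x[\cD^{(m)}]) \dif \nu(x)$. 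I would first establish the pointwise stability bound
\[ \cW\bigl(\Prbox_x[\cD], \Prbox_x[\cD^{(m)}]\bigr) \le \frac{\1_{\cB^r(x)}(X_m)}{N^r(x) \vee 1} + \frac{\1_{\cB^r(x)}(X_m')}{\tilde N^r(x) \vee 1}, \]
with $N^r(x) := \sum_i \1_{\cB^r(x)}(X_i)$ and $\tilde N^r(x)$ the analogous count for $\cD^{(m)}$. This follows from explicit transport plans: swapping or adding/removing a single atom in an $N$-atom uniform empirical measure on $\bY$ costs at most $\mathrm{diam}(\bY)/N \le 1/N$ in Wasserstein distance, with the edge case $N = 1$ handled via the default value $\lambda_\bY$ of the estimator and the trivial bound $\cW \le 1$.

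The core of the argument is Cauchy-Schwarz coupled with a volume bound. Setting $V_m := \nu(\{x : X_m \in \cB^r(x)\})$, Cauchy-Schwarz yields $\bigl(\int f \cdot \1_{\cB^r(\cdot)}(X_m) \dif \nu\bigr)^2 \le V_m \int f^2 \cdot \1_{\cB^r(\cdot)}(X_m) \dif \nu$ for each of the two terms appearing after squaring. A direct geometric computation using the projection $\beta^r : \bX \to [r, 1-r]^{\dX}$ together with $\nu \le \overline C \lambda_{\bX}$ shows $V_m \lesssim \overline C \, r^{\dX}$ almost surely, with a constant depending only on $\dX$. Summing over $m$, the identity $\sum_m \1_{\cB^r(x)}(X_m) = N^r(x)$ combined with $\1_{\{N \ge 1\}}/N \le 2/(N+1)$ reduces the first contribution to a multiple of $\overline C \, r^{\dX} \int \esp{1/(N^r(x)+1)} \dif \nu(x)$. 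Since $N^r(x) \sim \Binomial(M, p_x)$ with $p_x := \xi(\cB^r(x))$, the closed-form identity $\esp{(1+\Binomial(M, p))^{-1}} = (1 - (1-p)^{M+1})/((M+1)p)$ delivers an upper bound of order $1/((M+1) p_x)$. The analogous analysis for the $X_m'$-term uses the less standard but elementary bound $\esp{(1+\Binomial(M-1, p))^{-2}} \le 2/(M(M+1) p^2)$, which follows from $(1+k)^{-2} \le 2/((1+k)(2+k))$ and a standard re-indexing.

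Finally, Assumption \ref{hyp: data}(ii) provides $p_x \ge \underline c (2r)^{\dX}$, so the $r^{\dX}$ volume factor from $V_m$ cancels exactly against the $r^{-\dX}$ inverse-probability factor, leaving a bound of order $\overline C / (\underline c (M+1))$ up to constants depending only on $\dX$. The main technical obstacle is calibrating the Cauchy-Schwarz step: using only $\cW \le 1$ produces a variance bound scaling like $M r^{2\dX}$, useless for small $r$, while using $\cW \le 1/N$ without extracting the weight $V_m$ leaves an uncancelled $r^{-\dX}$ factor after applying the probability lower bound. The delicate simultaneous use of the transport-cost bound $1/N$, the volume bound $V_m \lesssim r^{\dX}$, and the probability lower bound $p_x \gtrsim r^{\dX}$ is precisely what yields an $r$-independent final constant.
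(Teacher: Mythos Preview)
Your approach is correct and follows the same route as the paper: Efron--Stein, the pointwise stability bound $\cW\le (1+\sum_{\ell\neq m}\1_{\cB^r(x)}(X_\ell))^{-1}$ on the event $\{X_m\in\cB^r(x)\}\cup\{X_m'\in\cB^r(x)\}$, the volume estimate $\nu(\{x:X_m\in\cB^r(x)\})\le\overline C(4r)^{\dX}$, a binomial inverse-moment identity, and the cancellation of $r^{\dX}$ against $p_x^{-1}\le(\underline c(2r)^{\dX})^{-1}$. Your execution differs only in that you apply Cauchy--Schwarz against $\nu$ while retaining the indicator in both factors (instead of first replacing $\nu$ by $\overline C\lambda_\bX$ and then using Jensen to push the square inside), and you exploit $\sum_m\1_{\cB^r(x)}(X_m)=N^r(x)$ for one of the two terms; this in fact sharpens the final constant to order $2^{\dX}\overline C/\underline c$ in place of the paper's $4^{\dX}\overline C^2/\underline c^2$.
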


\subsection{Results on $k$-nearest-neighbor estimator}\label{subsec:kNN}

Here, we focus in the second estimator -- the \emph{$k$-nearest-neighbor estimator}, defined as follows. 
\begin{definition}\label{def:kNN}
Let $k \ge 1$ an integer. The \emph{$k$-nearest-neighbor estimator} for $P$ is defined by
\begin{align}\label{eq: k-nn def}
\PkNN : \bX &\to \cP(\bY) \\
x &\mapsto \PkNN_x := \hat\mu^\cD_{\cN^{k,\cD_\bX}(x)}, 
\end{align}
where, for any integer $M \ge 1$ and any $\mD_\bX \in \bX^M$, $\cN^{k,\mD_\bX}(x)$ contains (exactly) $k$ points of $\mD_\bX$ which are closest to $x$, namely
\begin{align*}
\cN^{k,\mD_\bX}(x) := \Big\{ x' \in \mD_\bX \,\big|\, \| x-x'\|_\infty \mbox{ is among the } k \mbox{-smallest of } (\|x-x'\|_\infty)_{x' \in \mD_\bX} \Big\},
\end{align*}
Here, in case of a tie when choosing the $k$-th smallest, we break the tie randomly with uniform probability.
\end{definition}

We have the following analogs of the convergence results (Theorems \ref{thm:ExpectedRaterbox} and \ref{thm:Concenrboxnew}) for the $k$-nearest-neighbor estimator. The proofs are postponed to Section \ref{subsec:Pf:thm:ExpectedRatekNN} and Section \ref{subsec:Pf:thm:ConcenkNNnew}, respectively.

\begin{theorem}\label{thm:ExpectedRatekNN}
Under Assumptions \ref{hyp: kernel lip} and \ref{hyp: data}, and choosing $k$ as
\begin{align*}
k \sim \begin{cases}
M^{\frac2{{d_\bX}+2}}, & {d_\bY} = 1,2,\\
M^{\frac{{d_\bY}}{{d_\bX}+{d_\bY}}}, & {d_\bY}\ge 3,
\end{cases}
\end{align*}
there is a constant $C>0$ (which depends only on ${d_\bX}, {d_\bY}, L, \underline c$), such that, for all probability distribution $\nu \in \cP(\bX)$, we have
\begin{align}\label{eq:ExpectedRatekNN}
\esp{ \int_{\bX} \cW\left(P_x, \PkNN_x\right) \nu(\dif x) } \le \sup_{x\in\bX} \esp{ \cW\left(P_x, \PkNN_x\right) }  \le C \times \begin{cases}
M^{-\frac{1}{{d_\bX} + 2}}, & {d_\bY}=1,\\
M^{-\frac{1}{{d_\bX} + 2}}\ln M, & {d_\bY}=2,\\
M^{-\frac{1}{{d_\bX} + {d_\bY}}}, & {d_\bY}\ge 3.
\end{cases}
\end{align}
\end{theorem}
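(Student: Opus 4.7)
The plan is to mimic the bias--fluctuation decomposition employed for the $r$-box estimator in the proof of Theorem \ref{thm:ExpectedRaterbox}, replacing the deterministic radius $r$ by the random $k$-nearest-neighbor radius
\[
R_k(x) := \max_{X_m \in \cN^{k,\cD_\bX}(x)} \|x - X_m\|_\infty.
\]
Introducing the conditional-mean mixture
\[
\bar P^{k,\cD_\bX}_x := \frac{1}{k} \sum_{X_m \in \cN^{k,\cD_\bX}(x)} P_{X_m},
\]
I would apply the triangle inequality $\cW(P_x, \PkNN_x) \le \cW(P_x, \bar P^{k,\cD_\bX}_x) + \cW(\bar P^{k,\cD_\bX}_x, \PkNN_x)$, which separates a \emph{bias} coming from the spatial spread of the selected neighbors from an \emph{empirical fluctuation} of the associated $Y_m$'s around their conditional mean.

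For the bias, joint convexity of $\cW$ in its second argument together with Assumption \ref{hyp: kernel lip} yields $\cW(P_x, \bar P^{k,\cD_\bX}_x) \le L\, R_k(x)$, so the core quantitative step is to prove $\esp{R_k(x)} \lesssim (k/M)^{1/d_\bX}$ uniformly in $x \in \bX$. This would follow from a binomial tail estimate: the event $\{R_k(x) > \rho\}$ is contained in $\{\sum_{m=1}^M \1\{X_m \in B(x,\rho)\cap \bX\} < k\}$, and the per-trial success probability is at least $\underline c\, \rho^{d_\bX}$ by Assumption \ref{hyp: data}(ii) together with the elementary sup-norm volume bound $\lambda_\bX(B(x,\rho)\cap\bX)\ge \rho^{d_\bX}$ valid for all $x \in \bX$ and $\rho\in(0,1)$. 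Integrating the tail in $\rho$, and using the trivial estimate $R_k(x)\le 1$ in the short-range regime, then produces the claimed scaling.

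For the fluctuation term, the key is the conditional-independence structure: given $\cD_\bX$, the nearest-neighbor observations $(Y_{m_j})_{j=1}^k$ are independent $\bY$-valued random variables with marginals $P_{X_{m_j}}$, whose uniform mixture is exactly $\bar P^{k,\cD_\bX}_x$. Invoking the version of the Fournier--Guillin empirical-measure bound \cite{Fournier2015Rate} that applies to independent (not necessarily identically distributed) samples sharing a common compact support -- the same extension that underlies the proof of Theorem \ref{thm:ExpectedRaterbox} -- one obtains
\[
\esp{ \cW\!\left(\bar P^{k,\cD_\bX}_x, \PkNN_x\right) \,\big|\, \cD_\bX } \lesssim \begin{cases} k^{-1/2}, & d_\bY=1,\\ k^{-1/2}\ln k, & d_\bY=2,\\ k^{-1/d_\bY}, & d_\bY\ge 3. \end{cases}
\]
Taking total expectation and combining with the bias estimate yields a uniform-in-$x$ control of the form $L(k/M)^{1/d_\bX} + \phi(k,d_\bY)$; balancing the two summands in $k$ reproduces exactly the announced choice of $k$ and the announced rate, and the $\nu$-integrated bound in \eqref{eq:ExpectedRatekNN} is immediate from the supremum.

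The main obstacle I anticipate is the random-radius control. Unlike the deterministic $r$ of the box estimator, $R_k(x)$ couples the bias to the realization of $\cD_\bX$, so one must verify the binomial tail estimate genuinely uniformly in $x$ despite boundary effects and handle carefully the short-range regime where the bound $\underline c\, \rho^{d_\bX}$ becomes uninformative. Assumption \ref{hyp: data}(ii), together with the sup-norm geometry of $\bX$ that prevents loss of volume near $\partial\bX$, is precisely what makes this step work and is where the analysis substantively departs from the $r$-box case.
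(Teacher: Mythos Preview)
Your decomposition via the mixture $\bar P^{k,\cD_\bX}_x$ is sound and leads to the correct rates, but it is genuinely different from the paper's route, and your justification of the fluctuation step mis-reads what happens in Theorem~\ref{thm:ExpectedRaterbox}.

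The paper does \emph{not} invoke an independent-but-not-identically-distributed version of \cite{Fournier2015Rate}, neither for the $r$-box estimator nor here. Instead it proves a Lipschitz substitution lemma (Lemma~\ref{lem:AbsDiffIntegrals}): since $y_\ell \mapsto \cW\big(\tfrac1J\sum_j\delta_{y_j},P_x\big)$ is $\tfrac1J$-Lipschitz, Kantorovich--Rubinstein duality gives
\[
\left|\int_{\bY^J}\cW\Big(\tfrac1J\sum_j\delta_{y_j},P_x\Big)\bigotimes_{j}P_{x_j}(\ud y_j)-\int_{\bY^J}\cW\Big(\tfrac1J\sum_j\delta_{y_j},P_x\Big)\bigotimes_{j}P_{x}(\ud y_j)\right|\le\frac{L}{J}\sum_j\|x_j-x\|_\infty.
\]
Applied with $J=k$ and $x_j$ the ordered neighbors, this yields (Lemma~\ref{lem:EspWkNN})
\[
\esp{\cW(P_x,\PkNN_x)}\le R(k)+\frac{L}{k}\sum_{m=1}^k\esp{Z^x_{(m)}},
\]
where $R(k)$ is the \emph{i.i.d.}\ Fournier--Guillin bound for sampling from $P_x$ itself, and the bias involves the \emph{average} of the first $k$ order statistics $Z^x_{(m)}$ rather than the maximum $R_k(x)=Z^x_{(k)}$. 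The order statistics are then controlled via binomial tails and a Beta-integral identity (Lemma~\ref{lem:UBSumEspZ}), followed by Gautschi's inequality to extract the factor $(k/M)^{1/\dX}$.

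Your triangle-inequality route trades this Lipschitz swap for two separate ingredients: a non-i.i.d.\ empirical-measure bound on $[0,1]^{\dY}$, and the cruder bias $L\,\esp{Z^x_{(k)}}$. Both are legitimate---the Fournier--Guillin dyadic argument only needs $\Var[\hat\mu_k(F)]\le \bar\mu(F)/k$, which holds for any independent sample with mean measure $\bar\mu$---but you would have to supply that extension yourself; the proof of Theorem~\ref{thm:ExpectedRaterbox} does not contain it. The paper's approach is slightly sharper in constants (average versus maximum order statistic) and avoids this detour entirely, while yours is closer to the classical bias--variance template for nearest-neighbor regression and makes the conditional-independence structure more explicit.
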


\begin{theorem}\label{thm:ConcenkNNnew}
Under Assumptions \ref{hyp: data}, for any $\nu \in \cP(\bX)$, we have 
\begin{align}\label{eq:UBVarNoDom}
\Var\left[\int_{\bX} \cW\left(P_x, \PkNN_x\right) \nu(\dif x)\right] \le \frac{1}{k}.
\end{align}
Moreover, if $\nu$ is dominated by $\lambda_\bX$ with a constant $\overline C>0$, then %for $M$ sufficiently large such that\\ $\frac{4^{d_\bX}}{\underline c^2}\left(8 \sqrt{\frac{2d_\bX\ln(M)}{M-1}} + \frac{k}{M-1}\right)^2<1$
\begin{align*}
&\Var\left[\int_{\bX} \cW\left(P_x, \PkNN_x\right) \nu(\dif x)\right]\\
&\quad\le \frac{2^{2d_\bX+1}\overline C^2 M}{\underline c^2 k^2} \Bigg( \left( 8\sqrt{\frac{2d_\bX\ln(M)}{M-1}} + \frac{k}{M-1}\right)^2  + \frac{\sqrt{2\pi}}{\sqrt{M-1}}\left(8 \sqrt{\frac{2d_\bX\ln(M)}{M-1}} + \frac{k}{M-1}\right) + \frac{4}{M-1} \Bigg).
\end{align*}
With $k$ chosen as in Theorem \ref{thm:ExpectedRatekNN}, this reduces to
\begin{align*}
\Var\left[\int_{\bX} \cW\left(P_x, \PkNN_x\right) \nu(\dif x)\right] \lesssim \begin{cases}
M^{-\frac{2(2\vee d_\bY)}{d_\bX+d_\bY}}\ln(M), & 2\vee d_\bY \le d_\bX,\\
M^{-1}, & 2\vee d_\bY > d_\bX.
\end{cases}
\end{align*}
\end{theorem}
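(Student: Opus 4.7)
The plan is to apply the Efron-Stein inequality to $F := \int_\bX \cW(P_x, \PkNN_x)\,\nu(dx)$, viewed as a symmetric function of $\cD$. Let $F^{(m)}$ denote the value of $F$ when $(X_m, Y_m)$ is replaced by an independent copy $(\tilde X_m, \tilde Y_m)$. Since replacing a single sample changes $\cN^{k,\cD_\bX}(x)$ by at most one element and $\bY$ has $\|\cdot\|_\infty$-diameter $1$, one has $\cW(\PkNN_x, \PkNN^{(m)}_x) \le \tfrac{1}{k}\1_{A_m}(x)$, where $A_m$ is the random set of query points whose $k$-NN changes. Integrating against $\nu$ yields $|F - F^{(m)}| \le \nu(A_m)/k$, and Efron-Stein gives $\Var[F] \le \tfrac{1}{2}\sum_m \bE[(F-F^{(m)})^2]$.

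For the coarse bound \eqref{eq:UBVarNoDom}, use $\nu(A_m)^2 \le \nu(A_m)$ and the inclusion $A_m \subset \{X_m \in \cN^{k,\cD_\bX}(x)\} \cup \{\tilde X_m \in \cN^{k,\cD_\bX^{(m)}}(x)\}$. For each fixed $x$, the first set contributes exactly $k$ indices $m$ (since $|\cN^{k}(x)|=k$), and the second contributes $k$ in expectation (each point of the exchangeable modified sample has probability $k/M$ of lying in $\cN^{k,(m)}(x)$). Fubini then yields $\sum_m \bE[(F-F^{(m)})^2] \le 2/k$, so $\Var[F] \le 1/k$.

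For the refined bound under $\nu \le \overline C \lambda_\bX$, set $B_m := \{x : X_m \in \cN^{k,\cD_\bX}(x)\}$, and use $\nu(A_m)^2 \le 2(\nu(B_m)^2 + \nu(\tilde B_m)^2)$ plus $m$-symmetry to reduce to controlling $\bE[\nu(B_1)^2]$. Introduce the threshold radius $t^* := (\alpha/\underline c)^{1/d_\bX}$ with $\alpha$ as in the theorem statement; the lower bound $\xi(B(x, t^*)\cap \bX) \ge \underline c\, t^{*d_\bX} = \alpha$ holds uniformly in $x \in \bX$ (worst case being a corner of the cube, whose intersection with a sup-norm ball still has Lebesgue measure $\ge t^{*d_\bX}$), and Hoeffding's inequality applied to the binomial count $|\cD_\bX \cap B(x, t^*)|$ makes $\bP(\rho_k(x) > t^*)$ super-polynomially small in $M$. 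Since $X_m \in \cN^{k}(x)$ together with $\|x-X_m\|_\infty > t^*$ forces $\rho_k(x) > t^*$,
\begin{align*}
B_m \subset \bigl(B(X_m, t^*) \cap \bX\bigr) \cup \{x \in \bX : \rho_k(x) > t^*\}.
\end{align*}
Using $\nu \le \overline C \lambda_\bX$ and $\lambda_\bX(B(X_m, t^*) \cap \bX) \le (2t^*)^{d_\bX}$, this gives $\nu(B_1) \le 2^{d_\bX}\overline C\,\alpha/\underline c + \overline C\, G$ with $G := \lambda_\bX(\{x \in \bX : \rho_k(x) > t^*\})$. Expanding the square and computing $\bE[G]$ and $\bE[G^2]$ via Fubini and the Hoeffding tail produces the three summands $\alpha^2$, $(\sqrt{2\pi}/\sqrt{M-1})\alpha$, and $4/(M-1)$ in the bracket, with overall prefactor $2^{2d_\bX+1}\overline C^2 M/(\underline c^2 k^2)$ from combining the Efron-Stein factor $M/(2k^2)$ with the constants from the square expansion. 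The specialisation to the optimal $k$ of Theorem \ref{thm:ExpectedRatekNN} is then direct case analysis on whether $2 \vee d_\bY \le d_\bX$.

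The main obstacle is the last step: matching the precise numerical constants in the bracket requires careful quantitative Hoeffding bookkeeping and Gaussian-type moment estimates for $G$ (the source of the $\sqrt{2\pi}$ factor). In particular, the constant $8$ in the definition of $\alpha$ is calibrated so that these moments fit the clean form stated, and the boundary-of-$\bX$ correction (corners of the cube) has to be absorbed into the universal lower bound $\xi(B(x,t^*)\cap\bX) \ge \underline c\, t^{*d_\bX}$ before Hoeffding is applied uniformly in $x$.
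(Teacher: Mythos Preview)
Your coarse bound via Efron--Stein plus the symmetry argument $\bP[X_1\in\cN^{k,\cD_\bX}(x)]=k/M$ matches the paper's proof of \eqref{eq:UBVarNoDom} essentially line for line.

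For the refined bound your route diverges from the paper in a substantive way. The paper does \emph{not} fix a single threshold $t^*$ and apply pointwise Hoeffding. Instead it writes
\[
\bE\!\left[\left(\int_\bX \1_{A_x}\,\lambda_\bX(\dif x)\right)^{\!2}\right]=\int_0^1 \bP\!\left[\left(\int_\bX \1_{A_x}\,\lambda_\bX(\dif x)\right)^{\!2}>\delta\right]\dif\delta,
\]
and for each level $\delta=(2\iota)^{2d_\bX}$ uses the deterministic implication ``every cube of edge $\iota$ contains $>k$ of $X_2,\dots,X_M$'' $\Rightarrow$ ``$\lambda_\bX(\{x:X_1\in\cN^k(x)\})\le(2\iota)^{d_\bX}$''. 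Bounding the complementary event requires a \emph{uniform} Glivenko--Cantelli bound over the class of axis-aligned cubes, obtained via Rademacher complexity / polynomial discrimination (a short lemma quoting \cite[Thm.~4.10, Lem.~4.14]{Wainwright2019High}). The constant $8\sqrt{2d_\bX\ln(M)/(M-1)}$ is precisely the Rademacher-complexity term from that lemma, and the $\sqrt{2\pi}/\sqrt{M-1}$ and $4/(M-1)$ summands arise from integrating the resulting Gaussian-type tail over $\delta$ (after the change of variable $\eta=\sqrt\delta$).

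Your fixed-threshold decomposition $B_1\subset B(X_1,t^*)\cup\{x:\rho_k(x)>t^*\}$ with pointwise Hoeffding is correct and in fact gives a \emph{tighter} bound: with your $\alpha$ the Hoeffding tail is $\exp(-256\,d_\bX\ln M)=M^{-256d_\bX}$, so the $\bE[G]$ and $\bE[G^2]$ contributions are negligible compared to the paper's $\sqrt{2\pi}/\sqrt{M-1}$ and $4/(M-1)$ terms. Hence your method proves the theorem (indeed something stronger), but your expectation that careful Hoeffding bookkeeping will reproduce the paper's exact three summands is mistaken---those constants are artifacts of the uniform VC bound plus the tail integration, not of pointwise Hoeffding. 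The trade-off: the paper's approach needs a VC lemma but yields the explicit form in the statement; your approach is more elementary and sharper in the lower-order terms, at the price of not matching the displayed constants.
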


\subsection{Comments on the convergence rate}\label{subsec:Comments}
This sections gathers several comments on the convergence results we have developed in Section \ref{subsec:rBox} and \ref{subsec:kNN}. 

\subsubsection{On the convergence rate}

We first comment on the expectations in Theorem \ref{thm:ExpectedRaterbox} and \ref{thm:ExpectedRatekNN}.

\textbf{Sharpness of the bounds.} Currently, we cannot establish the sharpness of the convergence rates in Theorems \ref{thm:ExpectedRaterbox} and \ref{thm:ExpectedRatekNN}. We can, however, compare our results to established results in similar settings. For $d_\bY=1$, we may compare it to the optimal rate of non-parametric regression of a Lipschitz continuous function. It is shown in \cite{Stone1982Optimal} that the optimal rate is $M^{-\frac{1}{d_\bX+2}}$, the same as in Theorems \ref{thm:ExpectedRaterbox} and \ref{thm:ExpectedRatekNN} when $d_\bY=1$. For $d_\bY\geq 3$, as noted in \cite{Backhoff2022Estimating}, we may compare to the Wasserstein convergence rate of empirical measure in the estimation of a static distribution on $\mathbb{R}^{d_\bX+d_\bY}$. We refer to \cite{Fournier2015Rate} for the optimal rate, which coincides with those in Theorems \ref{thm:ExpectedRaterbox} and \ref{thm:ExpectedRatekNN}.

\textbf{Error components.} %The upper bounds in Theorem \ref{thm:ExpectedRaterbox} and \ref{thm:ExpectedRatekNN} suffers from the curse of dimensionality. Contrastingly, Theorem \ref{thm:Concenrboxnew} and \ref{thm:ConcenkNNnew} suggest that the fluctuation around the expected average error enjoys a much better rate, and in some cases is not affected by the dimensions. 
We discuss the composition of our upper bound on the expected average error by dissecting the proof of Theorem \ref{thm:ExpectedRaterbox} and \ref{thm:ExpectedRatekNN}. In the proofs, we decompose the expected average errors into two components: approximation error and estimation error. The approximation error occurs when treating $P_{x'}$ as equal to $P_{x}$ when $x'$ is close to the query point $x$, leading to an error of size $L\|x-x'\|_\infty$. The estimation error is associated with the Wasserstein error of empirical measure under i.i.d.~sampling (see \eqref{eq:UBWConvEmp}). From Definitions \ref{def:rbox} and \ref{def:kNN}, the $r$-box estimator effectively manages the approximation error but struggles with controlling the estimation error, whereas the $k$-nearest-neighbor estimator exhibits the opposite behavior.

\textbf{Explicit bounds.} We primarily focus on analyzing the convergence rates of the $r$-box and $k$-nearest-neighbor estimators as $M \to \infty$.  Therefore, within the proofs of these results, we track only the rates (and ignore various constant coefficients). If more explicit bounds are preferred, intermediate results such as \eqref{eq:UBSupEspWrBox}, or \eqref{eq:UBSupEspWkNN} could be good starting points for computing them.

\subsubsection{On the fluctuation}
We next discuss the variances studied in Theorems \ref{thm:Concenrboxnew} and \ref{thm:ConcenkNNnew}. In Appendix \ref{sec:AnotherFluc}, we also include results derived from the Azuma-Hoeffding inequality (e.g., \cite[Corollary 2.20]{Wainwright2019High}), though they provide rougher rates.

\textbf{Condition that $\nu$ is dominated by $\lambda_\bX$.} In Theorems \ref{thm:Concenrboxnew} and \ref{thm:ConcenkNNnew}, we assume that the $\nu$ is dominated by $\lambda_\bX$. This assumption is somewhat necessary. To illustrate, let us examine the non-parametric regression problem under a comparable scenario. We consider a fixed query point. In this context, the central limit theorem for $k$-nearest-neighbor estimator is well-established, and the normalizing rate is $k^{-\frac12}$ (cf. \cite[Theorem 14.2]{Biau2015Lectures}). This suggests that the rate in \eqref{eq:UBVarNoDom} is sharp. For the $r$-box estimator, we believe that a supporting example can be constructed where $\nu$ is highly concentrated. On the other hand,  we conjecture that if $\xi \sim \nu$, the variance could potentially attain the order of $M^{-1}$. For a pertinent result, we direct the reader to \cite[Theorem 1.7]{Backhoff2022Estimating}.

\textbf{Sharpness of the bounds.} Regarding the variance in Theorem \ref{thm:Concenrboxnew}, it is upper bounded by the commonly observed order of $M^{-1}$. We believe that this rate is sharp, though we do not have a proof at this time. As for Theorem \ref{thm:ConcenkNNnew},  the variance is subject to a rougher rate when $2\vee d_\bY \le d_\bX$. We, however, conjecture that this variance attains the order of $M^{-1}$ as long as $\nu$ is dominated by $\lambda_\bX$.

\subsection{Towards implementation with neural networks}\label{subsec:ToNN}

In light of recent practices in machine learning, during the learning of $P$, we may combine the $r$-box method or $k$-nearest-neighbor method into the training of certain parameterized model. To this end we let 
\begin{align*}
\PNN : \bT \times \bX &\to \cP(\bY) \\
(\theta,x) &\mapsto \PNNt_x
\end{align*}
be a parameterized model (e.g., a neural network), where $\bT$ is the parameter space and $\theta \in \bT$ is the parameter to be optimized over. Given an integer $N \ge 1$, we may train $\PNNt$ on a set of query points $\cQ=(\tilde X_n)_{n=1}^N$ satisfying the assumption below.
\begin{hyp}\label{hyp:QueryPoints}
The query points $\cQ=\{(\tilde X_n)\}_{n=1}^N$ are i.i.d.~with uniform distribution over $\bX$, and are independent of the data points $\cD=\{(X_m,Y_m)\}_{m=1}^M$. 
\end{hyp}

We propose the training objectives below
\begin{gather}\label{eq:DefObj}
\argmin_{\theta\in\bT} \frac1N \sum_{n=1}^N \cW\left(\Prbox_{\tilde X_n}, \PNNt_{\tilde X_n}\right) \quad \text{or} \quad
\argmin_{\theta\in\bT} \frac1N \sum_{n=1}^N \cW\left(\PkNN_{\tilde X_n}, \PNNt_{\tilde X_n}\right),
\end{gather}
that is, minimize the mean of $1$-Wasserstein errors between the parametrized model and the empirical $r$-box (or $k$-nearest-neighbour) approximation of the conditional distribution at the location of the random query points.

The following proposition together with Theorem \ref{thm:ExpectedRaterbox} or Theorem \ref{thm:ExpectedRatekNN} justifies using the objectives in \eqref{eq:DefObj}. It is valid for any estimator for $P$ that satisfies the bounds in \eqref{eq:ExpectedRaterbox} or \eqref{eq:ExpectedRatekNN}. Moreover, due to Lipschitz continuity conditions in the proposition, the proposition provides insights into the worst-case performance guarantee. We also refer to \cite{Altekruger2023Conditional} for a worst-case performance guarantee for conditional generative models, which is contingent upon Lipschitz continuity. In contrast, similar guarantees for the $r$-box and $k$-nearest-neighbor estimators are more elusive due to their inherently piece-wise constant nature. We refer to Section \ref{subsec:Pf:prop:PThetanew} for the proof. 

\begin{proposition}\label{prop:PThetanew}
Suppose Assumptions \ref{hyp: kernel lip}, \ref{hyp: data}, and \ref{hyp:QueryPoints} hold. Let $\Pgen$ of $P$ be an estimator constructed using the data points $\cD$ only. Consider a training procedure that produces a (random) $\Theta=\Theta(\cD,\cQ)$ satisfying 
\begin{gather}
%\frac1N\sum_{n=1}^N \cW\left(\Pgen_{\tilde X_ n}, \PNNT_{\tilde X_n}\right) \le \varepsilon^\Theta, \label{eq:UBTrainErr} \\
\sup_{x,x'\in\bX} \frac{\cW(\PNNT_{x},\PNNT_{x'})}{\|x-x'\|_\infty} \le L^\Theta \label{eq:UBTrainLip}
\end{gather} 
for some (random) $L^\Theta>0$. Then,
\begin{equation}\label{eq:UBExpnIntWPNNT}
\begin{split}
    \esp{\int_\bX \cW(P_x,\PNNT_x) \ud x} \le\;& \esp{(L+L^\Theta)\cW\left(\lambda_\bX,\frac1N\sum_{n=1}^{N}\delta_{\tilde X_n}\right)} 
    \\
    &+ \esp{\int_\bX \cW(P_x, \Pgen_x) \dif x} + \esp{ \frac1N\sum_{n=1}^N \cW\left(\Pgen_{\tilde X_ n}, \PNNT_{\tilde X_n}\right) }.        
\end{split}
\end{equation}
Moreover, with probability $1$,
\begin{align}\label{eq:UBSupW}
\sup_{x\in\bX} \cW\left(P_{x},\tilde P^\Theta_x\right)  \le (d_\bX+1)^{\frac{1}{d_\bX+1}}  (L+L^\Theta)^{\frac{d_\bX}{d_\bX+1}} \left(\int_\bX \cW(P_x, \PNNT_x) \dif x\right)^{\frac{1}{d_\bX+1}}.
\end{align}
\end{proposition}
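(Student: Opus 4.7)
I would prove the two assertions separately.

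For \eqref{eq:UBExpnIntWPNNT}, the idea is a Kantorovich transport argument. Fixing a realization of $(\cD, \cQ)$, let $\pi^*$ be an optimal coupling for $\cW(\lambda_\bX, \mu_N)$, where $\mu_N := \frac{1}{N}\sum_{n=1}^N \delta_{\tilde X_n}$. The triangle inequality for $\cW$ applied to the chain $P_x \to P_{x'} \to \Pgen_{x'} \to \PNNT_{x'} \to \PNNT_x$ gives
\begin{align*}
\cW(P_x, \PNNT_x) \le \cW(P_x, P_{x'}) + \cW(P_{x'}, \Pgen_{x'}) + \cW(\Pgen_{x'}, \PNNT_{x'}) + \cW(\PNNT_{x'}, \PNNT_x).
\end{align*}
Integrating against $\pi^*$: the first marginal ($\lambda_\bX$) converts the left-hand side into $\int_\bX \cW(P_x, \PNNT_x) \ud x$; Assumption \ref{hyp: kernel lip} and \eqref{eq:UBTrainLip} bound the first and fourth right-hand terms by $(L+L^\Theta)\|x-x'\|_\infty$, whose $\pi^*$-integral equals $(L+L^\Theta)\cW(\lambda_\bX, \mu_N)$; the middle two right-hand terms depend only on $x'$ and integrate against the second marginal $\mu_N$ to give empirical averages over the query points. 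Taking expectations and invoking Assumption \ref{hyp:QueryPoints}, each $\tilde X_n \sim \Uniform(\bX)$ is independent of the $\cD$-measurable $\Pgen$; conditioning on $\cD$ identifies $\esp{\frac1N \sum_n \cW(P_{\tilde X_n}, \Pgen_{\tilde X_n})}$ with $\esp{\int_\bX \cW(P_x, \Pgen_x) \ud x}$, which yields \eqref{eq:UBExpnIntWPNNT}.

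For \eqref{eq:UBSupW}, set $f(x) := \cW(P_x, \PNNT_x)$ and $K := L + L^\Theta$. The triangle inequality combined with Assumption \ref{hyp: kernel lip} and \eqref{eq:UBTrainLip} shows $f$ is nonnegative and $K$-Lipschitz on $(\bX, \|\cdot\|_\infty)$. Let $M := \sup_\bX f$, attained at some $x^* \in \bX$. The Lipschitz property yields the pointwise lower bound $f(y) \ge (M - K\|y - x^*\|_\infty)^+$ on $\bX$. The main technical task is the sharp lower bound
\begin{align*}
\int_\bX f(y)\ud y \ge \frac{M^{d_\bX+1}}{(d_\bX+1)\, K^{d_\bX}},
\end{align*}
from which \eqref{eq:UBSupW} follows by rearranging and taking the $(d_\bX+1)$-th root. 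To produce this, I would construct an axis-aligned corner box $B^* \subseteq \bX$ of side $r := M/K$ with $x^*$ as a vertex: for each coordinate $i$, pick $s_i \in \{-1,+1\}$ so that the interval $[\min(x^*_i, x^*_i + s_i r), \max(x^*_i, x^*_i + s_i r)]$ lies in $[0,1]$, which is feasible when $r$ lies in an admissible range. On $B^*$, an elementary integration using $\esp{\max_i U_i} = d_\bX r/(d_\bX+1)$ for i.i.d.\ $U_i \sim \Uniform(0,r)$ gives $\int_{B^*}(M - K\|y-x^*\|_\infty)\ud y = r^{d_\bX}\bigl(M - Krd_\bX/(d_\bX+1)\bigr)$; substituting $r = M/K$ produces the claimed lower bound.

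The main obstacle is the geometric step of fitting the corner box $B^*$ inside $\bX$ when $r = M/K$ exceeds the half-diameter of $\bX$ in some coordinate. In the setting of the paper, the diameter of $\cP(\bY)$ under $\cW$ equals $1$, so $M \le 1$ and the argument proceeds cleanly whenever $K \ge 2$; otherwise one either reoptimizes the box side (yielding a residual term that can be absorbed with minor modifications) or invokes the trivial bound $M \le 1$ separately. Everything else reduces to the standard Wasserstein triangle inequality, the Lipschitz estimates, and elementary integration.
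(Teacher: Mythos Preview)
Your proposal is correct and matches the paper's proof in both parts: \eqref{eq:UBExpnIntWPNNT} via the triangle inequality, the Lipschitz bounds, and Kantorovich coupling/duality (the paper writes the first term as an integral against the signed measure $\lambda_\bX-\mu_N$ and then applies Kantorovich--Rubinstein, which is the dual version of your explicit coupling $\pi^*$), and \eqref{eq:UBSupW} via the cone-volume lower bound at the maximizer $x^*$. The boundary issue you flag---the corner box of side $M/K$ possibly not fitting inside $\bX$---is also left implicit in the paper, which simply declares the worst case to be $x^*=(0,\dots,0)$ and computes $\int_0^{\delta}\big(\tfrac{\delta-z}{L+L^\Theta}\big)^{d_\bX}\dif z$ without further comment; so your treatment is, if anything, more careful on this point.
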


\begin{remark}\label{rmk:ExpnSupWPNNT}
Assuming $L^\Theta\le\overline L$ for some (deterministic) $\overline L>0$, by \eqref{eq:UBSupW} and Jensen's inequality, we have
\begin{align*}
\esp{\sup_{x\in\bX} \cW\left(P_{x},\tilde P^\Theta_x\right) }  \le  (d_\bX+1)^{\frac{1}{d_\bX+1}}  (L+\overline L)^{\frac{d_\bX}{d_\bX+1}} \esp{ \int_\bX \cW(P_x, \PNNT_x) \dif x }^{\frac{1}{d_\bX+1}}.
\end{align*}
This together with \eqref{eq:UBExpnIntWPNNT} provides a worst-case performance guarantee for $\PNNT$.
\end{remark}

\begin{remark}
Proposition \ref{prop:PThetanew} along with Remark \ref{rmk:ExpnSupWPNNT} provides insights
into the worst-case performance guarantees, but more analysis is needed. Specifically, understanding the magnitude of $L^\Theta$ and $\esp{ \frac1N\sum_{n=1}^N \cW(\Pgen_{\tilde X_ n}, \allowbreak \PNNT_{\tilde X_n}) }$ requires deeper knowledge of the training processes for $\PNNT$, which are currently not well understood in the extant literature. Alternatively, in the hypothetical case where $\PNNT = P$, $L^\Theta$ would match $L$ as specified in Assumption \ref{hyp: kernel lip}, and $\esp{ \frac1N\sum_{n=1}^N \cW\left(\Pgen_{\tilde X_ n}, \PNNT_{\tilde X_n}\right) }$ would obey Theorem \ref{thm:ExpectedRaterbox} or \ref{thm:ExpectedRatekNN}. However, practical applications must also consider the universal approximation capability of $\PNNt$. Further discussion on this topic can be found in \cite{Kratsios2023Universal,Acciaio2024Designing}, although, to the best of our knowledge, recent universal approximation theorems in this subject do not yet concern continuity constraints.
\end{remark}

\section{Implementation with neural networks}\label{sec:ImplNN}
Let $\bX$ and $\bY$ be equipped with $\|\cdot\|_1$. Following the discussion in Section \ref{subsec:ToNN}, we let $\PNNt:\bX\to\cP(\bY)$ be parameterized by a neural network and develop an algorithm that trains $\PNNt$ based on $k$-nearest-neighbor estimator. The $k$-nearest-neighbor estimator $\PkNN$ is preferred as $\PkNN_x$ consistently outputs $k$ atoms. This regularity greatly facilities implementation. For instance, it enables the use of 3D tensors during Sinkhorn iterations to enhance execution speed (see Section \ref{subsubsec:W} later). We  refer also to the sparsity part of Section \ref{subsec:Sinkhorn} for another component that necessitates the aforementioned regularity of $\PkNN$. These components would not be feasible with the $r$-box estimator $\Prbox$, as $\Prbox_x$ produces an undetermined number of atoms. Furthermore, there is a concern that in some realizations, $\Prbox_x$ at certain $x$ may contain too few data points, potentially leading $\PNNT_x$ to exhibit unrealistic concentration.

We next provide some motivation for this implementation. For clarity, we refer to the $r$-box estimator and the $k$-nearest-neighbor estimator as raw estimators. Additionally, we refer to $\PNNT$, once trained, as the neural estimator. While raw estimators are adequate for estimating $P$ on their own, they are piece-wise constant in $x$ by design. On the other hand, a neural estimator is continuous in $x$. This continuity provides a performance guarantee in $\sup\cW$ distance, as outlined in Proposition \ref{prop:PThetanew} and the following remark.  Moreover, the neural estimator inherently possesses gradient information. As discussed in the introduction, this feature renders the neural estimators useful in downstream contexts where gradient information is important, e.g., when performing model-based reinforcement learning. 

We construct $\PNNt$ such that it maps $x\in\bX$ to atoms in $\bY$ with equal probabilities. For the related universal approximation theorems, we refer to \cite{Kratsios2023Universal,Acciaio2024Designing}. We represent these atoms with a vector with $N_{\text{atom}}$ entries denoted by $y^\theta(x)=(y^\theta_{1}(x),\dots,y^\theta_{N_{\text{atom}}}(x)) \in \bY^{N_{\text{atom}}}$, where $N_{\text{atom}}\in\bN$ is chosen by the user. In our implementation, we set $N_{\text{atom}}=k$. To be precise, we construct $\PNNt$ such that
\begin{align}\label{eq:LagrangianDisc}
\PNNt_x = \frac{1}{N_{\text{atom}}}\sum_{j=1}^{N_{\text{atom}}} \delta_{y^\theta_j(x)},\quad x\in\bN.
\end{align}
This is known as the Lagrangian discretization (see \cite[Section 9]{Peyre2019book}). In Algorithm \ref{algo:TrainNetkNN}, we present a high level description of our implementation of training $\PNNt$ based on the raw $k$-nearest-neighbor estimator. 

\begin{algorithm}
\caption{Deep learning conditional distribution in conjunction with $k$-NN estimator}
\footnotesize
\begin{algorithmic}[1]\label{algo:TrainNetkNN}
\renewcommand{\algorithmicrequire}{\textbf{Input:}}
\renewcommand{\algorithmicensure}{\textbf{Output:}}
\REQUIRE data $\set{(X_m,Y_m)}_{m=1}^{M}$ valued in $\bR^{d_\bX}\times\bR^{d_\bY}$, neural estimator $\PNNt$ represented by $y^\theta(x)$ as elaborated in \eqref{eq:LagrangianDisc}, parameters such as $k, N_{\text{atoms}}, N_{\text{batch}}\in\bN_+$, and learning rate $\eta_\theta$
\ENSURE  trained parameter $\Theta$ for the neural estimator
\REPEAT
% \STATE prepare training data $\set{(\tilde x_n, \hat\mu_{\cN^k(\tilde x_n)})}_{n=1}^N$
\FOR{$n=1,\dots,N_{\text{batch}}$}%\WHILE{prepare batch data $\set{(\tilde X_n, (\tilde Y_{n,i})_{i=1}^k}_{n=1}^{N_{\text{batch}}}$} 
\STATE generate a query point $\tilde X_n \sim \Uniform(\bX)$
\STATE find the $k$ nearest neighbors of $\tilde X_n$ from data $(X_m)_{m=1}^{M}$ and collect accordingly $(\tilde Y_{n,i})_{i=1}^k$
\ENDFOR %\ENDWHILE
\STATE compute with Sinkhorn algorithm ($\bY$ is equipped with $\|\cdot\|_1$)
{\scriptsize \begin{align}\label{eq:ImplW}
L[\theta]:=
\sum_{n=1}^{N_{\text{batch}}} \cW\left( \frac1k \sum_{i=1}^k \delta_{\tilde Y_{n,i}}, \frac{1}{N_{\text{atom}}}\sum_{j=1}^{N_{\text{atom}}} \delta_{y^\theta_j(\tilde X_n)} \right)
\end{align}}
\STATE update $\theta\leftarrow \theta - \eta_\theta\,\nabla_\theta L[\theta]$
\UNTIL{Convergence}
\RETURN $\Theta=\theta$ 
\end{algorithmic} 
\end{algorithm}

\subsection{Overview of key components}\label{subsec:ImplOverview}
In this section, we  outline the three key components of our implementation. Each of these components addresses a specific issue: 
\begin{itemize}[label=$\circ$]
    \item Managing the computational cost arising from the nearest neighbors search.
    \item Implementing gradient descent after computing $\cW$.
    \item Selecting an appropriate Lipschitz constant for the neural estimator, preferably at a local level.
\end{itemize}
Further details and ablation analysis on these three components can be found in Section \ref{sec:ImplD}.

\subsubsection{Approximate Nearest Neighbors Search with Random Binary Space Partitioning (ANNS-RBSP)}\label{subsubsec:ANNS}
Given a query point, performing an exact search for its $k$-nearest-neighbor requires $O(M)$ operations. While a single search is not overly demanding, executing multiple searches as outlined in Algorithm \ref{algo:TrainNetkNN} can result in significant computational time, even when leveraging GPU-accelerated parallel computing. To address this, we use ANNS-RBSP as a more cost-effective alternative. Prior to searching, we sort $(X_m)_{m=1}^M$ along each axis and record the order of indices. During the search, the data is divided into smaller subsets by repeatedly applying bisection on these sorted indices, with a random bisecting ratio, on a randomly chosen axis. Furthermore, we apply a restriction that mandates bisection along the longest edge of a rectangle when the edge ratio exceeds certain  value (a hyper-parameter of the model). We record the bounding rectangle for each subset created through this partitioning process. Once partitioning is complete, we generate a small batch of query points within each rectangle and identify the $k$ nearest neighbors for each query point within that same rectangle. For a visual representation of ANNS-BSP, we refer to Figure \ref{fig:rbsp}. Leveraging the sorted indices, we can reapply this partitioning method during every training episode without much computational cost. We refer to Section \ref{subsec:ANNSComparison} for additional details. There are similar ideas in the extant literature (cf. \cite{Hajebi2011Fast,Ram2019Revisiting,Li2020Approximate}).  Given the substantial differences in our setting, however, we conduct further empirical analysis in Section \ref{subsec:ANNSComparison} to showcase the advantage of our approach against exact search. %we conduct a comparison between exact search and ANNS-RBSP in terms of computational time and the error introduced by ANNS-RBSP. %

\begin{figure}
\centering
\includegraphics[width=0.4\textwidth]{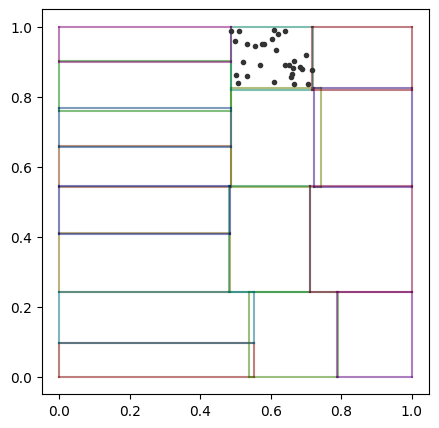}
\caption{An instance of RBSP in $[0,1]^2$.}
\label{fig:rbsp}
\medskip
\small
The 2D unit box is partitioned into 16 rectangles based on $500$ samples from $\Uniform([0,1])$. Note that the overlap between the bounding rectangles is intentionally maintained. Each partitioning is performed along an axis selected at random, dividing the samples within the pre-partitioned rectangle according to a random ratio drawn from $\Uniform([0.45,0.55])$. The edge ratio for mandatory bisecting along the longest edge is $5$. If this ratio is exceeded, partitioning along the longest edge is enforced. The black dots represent samples within the respective rectangle. 
\end{figure}

\subsubsection{Computing $\cW$ for gradient descent} \label{subsubsec:W}
The following discussion pertains to the computation of \eqref{eq:ImplW}, with the subsequent gradient descent in consideration. For simplicity, let us focus on the summand and reduce the problem to the following minimization.  Let $(\tilde y_1,\dots,\tilde y_k)\in\bY^k$ be fixed, we aim to find
\begin{align}\label{eq:ImplWReduced}
\argmin_{y\in\bY^n}\cW\left( \frac1k \sum_{i=1}^k \delta_{\tilde y_i},  \frac1n \sum_{j=1}^n \delta_{y_j} \right).
\end{align}
The criterion in \eqref{eq:ImplWReduced} is convex as $\cW$ is convex in both arguments (cf. \cite[Theorem 4.8]{Villani2008book}). To solve \eqref{eq:ImplWReduced}, as is standard, we cast it into a discrete optimal transport problem. To do so, first introduce the  $(k\times n)$-cost matrix $\mC_y$, where $\mC_{y, i j}:=\|\tilde y_i - y_j\|_1$. As the criterion in \eqref{eq:ImplWReduced} has uniform weights on the atoms, we next aim to solve the problem
\begin{gather}\label{eq:DOptTrans}
\argmin_{\mT\in[0,1]^{k\times n}} \left\{ \varphi_y(\mT) := \sum_{(i,j)\in\set{1,\dots,k}\times\set{1,\dots,n}} \mT_{ij} \mC_{y,ij} \right\} \\
\quad\text{subject to}\quad  \sum_{j=1}^n \mT_{ij} = \frac1k,\; i=1,\dots,k \quad \text{and} \quad \sum_{i=1}^k \mT_{ij} = \frac1n,\; j=1,\dots,n. \nonumber
\end{gather}
Let $\mT^*_y$  be an optimal transport plan that solves \eqref{eq:DOptTrans} for $y$ fixed. Taking derivative of $y\mapsto\varphi_y(\cdot)$ yields 
\begin{align}\label{eq:DphiDy}
\left.\partial_{y_j}\varphi_y(\mT)\right|_{\mT=\mT^*_y} = \sum_{i\in\set{1,\dots,k}} \mT^*_{y,ij}\; \partial_{y_j} \|\tilde y_i - y_j\|_1, \quad j=1,\dots,n.
\end{align}
% \begin{align}\label{eq:DphiDy}
% \left.\frac{\partial}{\partial_{y_j}}\varphi_y(\mT)\right|_{\mT=\mT^*_y} = \sum_{i\in\set{1,\dots,k}} \mT^*_{y,ij} \frac{\partial \|\tilde y_i - y_j\|_1}{\partial y_j}, \quad j=1,\dots,n.
% \end{align}
This gradient is in general not the gradient corresponding to \eqref{eq:ImplWReduced}, as $\mT^*_y$ depends on $y$, while \eqref{eq:DphiDy} excludes such dependence. Nevertheless, it is still viable to update $y$ using the gradient descent that employs the partial gradient specified in \eqref{eq:DphiDy}. To justify this update rule, first consider $y'\in\bY$ satisfying $\varphi_{y'}(\mT^*_y)\le\varphi_y(\mT^*_y)$, then observe that 
\begin{align*}
\cW\left( \frac1k \sum_{i=1}^k \delta_{\tilde y_i},  \frac1n \sum_{j=1}^n \delta_{y'_j} \right) \le \varphi_{y'}(\mT^*_y) \le \varphi_y(\mT^*_y) = \cW\left( \frac1k \sum_{i=1}^k \delta_{\tilde y_i},  \frac1n \sum_{j=1}^n \delta_{y_j} \right).
\end{align*}
This inequality is strict if $\varphi_{y'}(\mT^*_y)<\varphi_y(\mT^*_y)$. We refer to \cite[Section 9.1]{Peyre2019book} and the reference therein for related discussions.

The Sinkhorn algorithm, which adds an entropy regularization, is a widely-used algorithm for approximating the solution to \eqref{eq:DOptTrans}. Specifically, here, it is an iterative scheme that approximately solves the following regularized problem, subject to the constraints in \eqref{eq:DOptTrans},
\begin{align}\label{eq:RegDOptTrans}
\argmin_{\mT^\epsilon\in[0,1]^{k\times n}} \left\{ \sum_{i,j\in\set{1,\dots,k}\times\set{i,\dots,n}} \mT^\epsilon_{ij} \mC_{ij} + \epsilon \sum_{i,j\in\set{1,\dots,k}\times\set{i,\dots,n}} \mT^\epsilon_{ij} (\log\mT_{ij} - 1) \right\},
\end{align}
where $\epsilon>0$ is a hyper-parameter, and should not be confused with the $\varepsilon$ used elsewhere. We refer to Section \ref{subsec:Sinkhorn} for further details. We also refer to \cite[Section 4]{Peyre2019book} and the reference therein for convergence analysis of the Sinkhorn algorithm. It is well known that the regularization term in \eqref{eq:RegDOptTrans} is related to the entropy of a discrete random variable. Larger values of $\epsilon$ encourages the regularized optimal transport plan to be more diffusive. That is, for larger values of $\epsilon$, the mass from each $y_j$ is distributed more evenly across all $\tilde y_i$'s. Performing gradient descent along the direction in \eqref{eq:DphiDy} tends to pull $y_j$'s towards the median of the $\tilde y_i$'s, as we are equipping $\bY$ with the norm $\|\cdot\|_1$. Conversely, small values of $\epsilon$ often leads to instability, resulting in \textsf{NaN} loss/gradient. To help with these issues, we implement the Sinkhorm algorithm after normalizing the cost matrix. Additionally, we use a large $\epsilon$ (e.g., $1$) in the first few training episodes, then switch to a smaller $\epsilon$ (e.g., $0.1$) in later episodes. Furthermore, we impose sparsity on the transport plan by manually setting the smaller entries of the transport plan to $0$. The specific detailed configurations and related ablation analysis are provided in Section \ref{subsec:Sinkhorn} and Appendix \ref{sec:Config}.

\subsubsection{Network structure that induces locally adaptive Lipschitz continuity} \label{subsubsec:Net}
As previously discussed, it is desirable for the neural estimator to exhibit certain Lipschitz continuity. In practice, however, determining an appropriate Lipschitz constant for training the neural estiamtor $\PNNt$ is challenging, largely because understanding the true Lipschitz continuity of $P$ (if it  exists) is very challenging. Additionally, the estimate provided in Proposition \ref{prop:PThetanew} is probabilistic. Fortunately, a specific network structure allows the neural estimator, when properly trained, to exhibit locally adaptive Lipschitz continuity. Subsequently, we provide a high-level overview of this network structure. Further detailed configurations and ablation analysis are presented in Section \ref{subsec:LipNet} and Appendix \ref{sec:Config}.

Consider a fully connected feed-forward neural network with equal width hidden layers and layer-wise residual connection \cite{He2016Deep}. Let $N_{\text{neuron}}$ denote the width of the hidden layers. For activation, we use Exponential Linear Unit (ELU) function \cite{Clevert2016Fast}, denoted by $\sigma$. For hidden layers, we employ the convex potential layer introduced in \cite{Meunier2022Dynamical}, 
\begin{align}\label{eq:CPLayer}
\mathsf{x}_{\text{out}} = \mathsf{x}_{\text{in}} - \|\mW\|_2^{-1} \mW^{\mT} \sigma\left( \mW \mathsf{x}_{\text{in}} + \mathsf{b} \right).
\end{align}
By \cite[Proposition 3]{Meunier2022Dynamical}, the convex potential layer is $1$-Lipschitz continuous in $\|\cdot\|_2$ sense. For the input layer, with a slight abuse of notation, we use
\begin{align}\label{eq:InputLayer}
\mathsf{x}_{\text{out}} = N_{\text{neuron}}^{-1}\text{diag}(|\mW|^{-1}_1\wedge 1)  \sigma\left( \mW \mathsf{x}_{\text{in}} + \mathsf{b} \right),
\end{align}
where $|\mW|_1$ computes the absolute sum of each row of the weight matrix to form a vector of size $N_{\text{neuron}}$, the reciprocal and $\cdot\wedge 1$ are applied entry-wise, and $\diag$ produces a diagonal square matrix based on the input vector. In short, the normalization in \eqref{eq:InputLayer} is only applied to the rows of $\mW$ with $\ell_1$-norm exceeding $1$. Consequently, the input layer is $1$-Lipschitz continuous in $\|\cdot\|_1$ sense. A similar treatment is used for the output layer but without activation,
\begin{align}\label{eq:OutputLayer}
\mathsf{x}_{\text{out}} = L\;d_\bY^{-1} \text{diag}( |\mW|^{-1}_1\wedge 1)  \left( \mW \mathsf{x}_{\text{in}} + \mathsf{b} \right).
\end{align}
where $L>0$ is a hyper-parameter. The output represents atoms on $\bY$ with uniform weight, therefore, no $N^{-1}_{\text{atom}}$ is required here.

The spectral norm $\|\mW\|_2$ in \eqref{eq:CPLayer}, however, does not, in general, have an explicit expression. Following the implementation in \cite{Meunier2022Dynamical}, we approximate each $\|\mW\|_2$ with power iteration. Power iterations are applied to all hidden layers simultaneously during training. To control the pace of iterations, we combine them with momentum-based updating. We refer to Algorithm \ref{algo:PwrIter} for the detailed implementation. Our implementation differs from that in \cite{Meunier2022Dynamical}, as the authors of \cite{Meunier2022Dynamical} control the frequency of  updates but not the momentum. In a similar manner, for input and output layers, instead of calculating the row-wise $\ell_1$-norm explicitly, we update them with the same momentum used in the hidden layers. % We emphasize that these (approximated) normalizing constants are updated separately from the automatic differentiation graph. 
Our numerical experiments consistently show that a small momentum value of $\tau=10^{-3}$ effectively maintains adaptive continuity while maintaining a satisfactory accuracy. The impact of $L$ in \eqref{eq:OutputLayer} and $\tau$ in Algorithm \ref{algo:PwrIter} is  discussed in Section \ref{subsec:LipNet}.

\begin{algorithm}
\caption{Power iteration with momentum for updating $\|\mW\|_2$ estimate, applied to all convex potential layers simultaneously at every epoch during training}
\begin{algorithmic}[1]\label{algo:PwrIter}
\renewcommand{\algorithmicrequire}{\textbf{Input:}}
\renewcommand{\algorithmicensure}{\textbf{Output:}}
\footnotesize
\REQUIRE weight matrix $\mW\in\bR^{d\times d}$ of a convex potential layer, previous estimate $\hat h\in\bR$ and auxiliary vector $\hat{\mathsf{u}}\in\bR^{d}$, momentum $\tau\in(0,1)$
\ENSURE  updated $\hat h$ and $\hat{\mathsf{u}}$, \tikzmark{right}in particular, $\hat h$ will be used as a substitute of $\|\mW\|_2$ in \eqref{eq:CPLayer}
\STATE $\mathsf{v} \leftarrow \mW \hat{\mathsf{u}} / \|\mW \hat{\mathsf{u}}\|_2 $ \tikzmark{top}
\STATE $\mathsf{u} \leftarrow \mW^\mT \mathsf{v} / \|\mW^\mT \mathsf{v}\|_2$
\STATE $h \leftarrow 2 / \left(\sum_{i} (\mW \mathsf{u} \cdot \mathsf{v})_i\right)^2$ \tikzmark{bottom}
\STATE $\hat h \leftarrow \tau \hat h + (1-\tau) h$ \tikzmark{top2}
\STATE $\hat{\mathsf{u}} \leftarrow \tau \hat{\mathsf{u}} + (1-\tau) \mathsf{u}$ \tikzmark{bottom2}
\RETURN $\hat h$, $\hat{\mathsf u}$ 
\end{algorithmic} 
\AddNote{top}{bottom}{right}{power iteration}
\AddNote{top2}{bottom2}{right}{momentum-based updating}
\end{algorithm}

During training, due to the nature of our updating schemes, the normalizing constants do not achieve the values required for the layers to be $1$-Lipschitz continuous. We hypothesize that this phenomenon leads to a balance that ultimately contributes to adaptive continuity: on one hand, the weights $\mW$ stretch to fit (or overfit) the data, while on the other, normalization through iterative methods prevents the network from excessive oscillation. As shown in Section \ref{subsubsec:L} and \ref{subsubsec:tau}, the $L$ value in \eqref{eq:OutputLayer} and the momentum $\tau$ in Algorithm \ref{algo:PwrIter} affect the performance significantly. For completeness, we also experiment with replacing \eqref{eq:CPLayer} by fully connected feedforward layers similar to \eqref{eq:InputLayer}, with or without batch normalization \cite{Ioffe2015Batch} after affine transformation. This alternative, however, failed to produce satisfactory results.

\subsection{Experiments with synthetic data}\label{subsec:Experiments}
We consider data simulated from three different models. The first two have $d_\bX=d_\bY=1$, while the third has $d_\bX=d_\bY=3$. Here we no longer restrict $\bY$ to be the unit box, however, we still consider $\bX$ to be a $d_\bX$-dimensional unit box (not necessarily centered at the origin). 

In Model 1 and 2, $X\sim\Uniform([0,1])$. Model 1 is a mixture of two independent Gaussian random variables with mean and variance depending on $x$,
\begin{align*}
Y = \xi \Big( 0.1\, \big(1+\cos(2\pi X)\big) + 0.12\,\big|1-\cos(2\pi X)\big| Z + 0.5 \Big),
\end{align*}
where $Z\sim\Normal(0,1)$ and $\xi$ is a Rademacher random variable independent of $Z$. For Model 2, we have 
\begin{align*}
Y = 0.5\,\1_{[0,1)}(X) + 0.5\, U,
\end{align*}
where $U\sim\Uniform([0,1])$. The conditional distribution in Model 2 is intentionally designed to be discontinuous in the feature space. This choice was made to evaluate performance in the absence of the Lipschitz continuity stipulated in Assumption \ref{hyp: kernel lip}. Model 3 is also a mixture of two independent Gaussian random variables, constructed by considering $X\sim\Uniform([-\frac12,\frac12]^3)$ and treating $X$ as a column vector (i.e., $X$ take values in $\bR^{3\times1}$), 
\begin{align*}
Y = \zeta \Big(\cos(\mA X) + 0.1 \cos(\mathsf{\Sigma}_X) W\Big) + (1-\zeta)\Big(\cos(\mA' X) + 0.1 \cos(\mathsf{\Sigma}'_X) W'\Big).
\end{align*}
Above, the $\cos$ functions act on vector/matrix entrywise, $\mA\in\bR^{3\times 3}$, and $\mathsf{\Sigma}_x$ also takes value in $\bR^{3\times 3}$. Each element of $\mathsf{\Sigma}_x$ is defined as $\mathsf v_{ij} x$ for some $\mathsf v_{ij}\in\bR^{1\times 3}$. The entries of $\mA$ and $\mathsf v_{ij}$ are drawn from standard normal in advance and remain fixed throughout the experiment. The matrices $\mA'$ and $\mathsf{\Sigma}'_x$ are similarly constructed. Furthermore, $W$ and $W'$ are independent three-dimensional standard normal r.v.s, while $\zeta$ represents the toss of a fair coin, independent of $X$, $W$, and $W'$.

For the purpose of comparison, two different network structures are examined. The first, termed LipNet, is illustrated in Section \ref{subsec:ImplOverview}. The second, termed StdNet, is a fully connected feedforward network with layer-wise residual connections \cite{He2016Deep}, ReLU activation, and batch normalization immediately following each affine transformation, without specifically targeting Lipschitz continuity. With a hyper-parameter $k$ for the $k$-nearest-neighbor estimator, which we specify later, each network contains 5 hidden layers with $2k$ neurons. These networks are trained using the Adam optimizer \cite{Kingma2017Adam} with a learning rate of $10^{-3}$.
For StdNet in Model $1$ and $2$, the learning rate is set to $0.01$, as it leads to better performance. Other than the learning rates, StdNet and LipNet are trained with identical hyper-parameters across all models. We refer to Appendix \ref{sec:Config} for a summary of hyper-parameters involved.

We generate $10^4$ samples for Models 1 and 2. Given the convergence rate specified in Theorem \ref{thm:ExpectedRatekNN}, we note that the sample size are considered relatively small. For these two models, we chose $k=100$ and utilized neural networks $\PNNt$ that output atoms of size $N_{\text{atom}}=k$. The choice of $k$ is determined by a rule of thumb. In particular, our considerations include the magnitude of $k$ suggested by Theorem \ref{thm:ExpectedRatekNN} and the computational costs associated with the Sinkhorn iterations discussed in Section \ref{subsubsec:W}. The results under Model 1 and 2 are plotted in Figure \ref{fig:1D}, \ref{fig:1DCondCDF} and \ref{fig:1DAllW}. Figure \ref{fig:1D} provides a perspective on joint distributions, while Figure \ref{fig:1DCondCDF} and \ref{fig:1DAllW} focus on conditional CDFs across different $x$ values.

\begin{figure}[!htbp]
\centering
\begin{subfigure}[t]{0.24\linewidth}
\centering
\makebox[1pt]{\raisebox{50pt}{\rotatebox[origin=c]{90}{Data}}}
\includegraphics[width=0.90\textwidth]{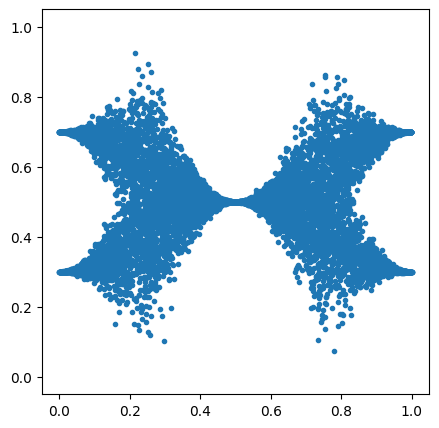}\\
\makebox[1pt]{\raisebox{50pt}{\rotatebox[origin=c]{90}{All atoms scattered}}}
\includegraphics[width=0.90\textwidth]{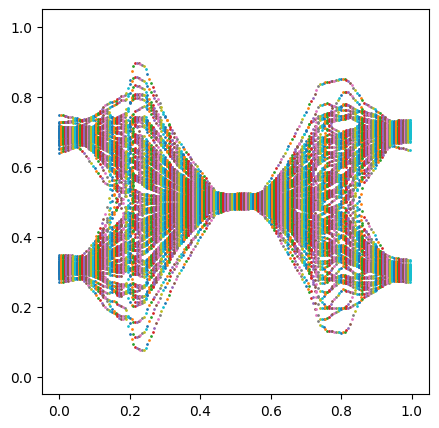}\\
\makebox[1pt]{\raisebox{50pt}{\rotatebox[origin=c]{90}{Traj.\! of 20 atoms}}}
\includegraphics[width=0.90\textwidth]{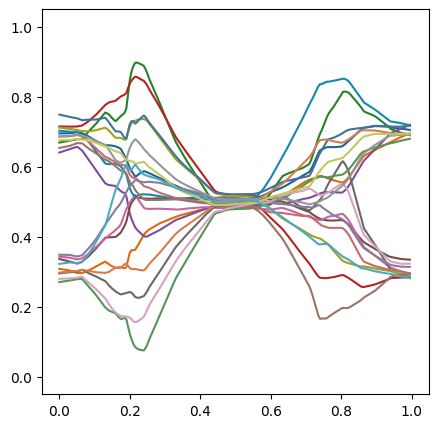}\\
\makebox[1pt]{\raisebox{50pt}{\rotatebox[origin=c]{90}{Ave.\! abs.\! der.}}}
\includegraphics[width=0.90\textwidth]{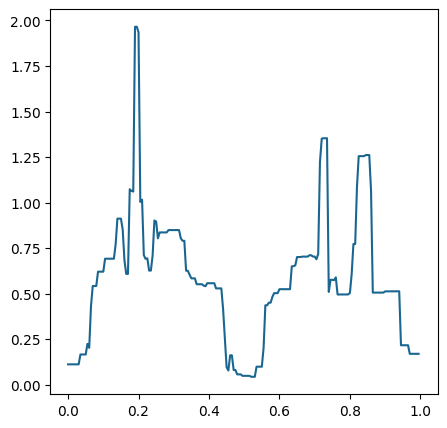}
\caption{Model 1, StdNet}
\end{subfigure}
\begin{subfigure}[t]{0.24\linewidth}
\centering
\includegraphics[width=0.90\textwidth]{plot/data1}
\includegraphics[width=0.90\textwidth]{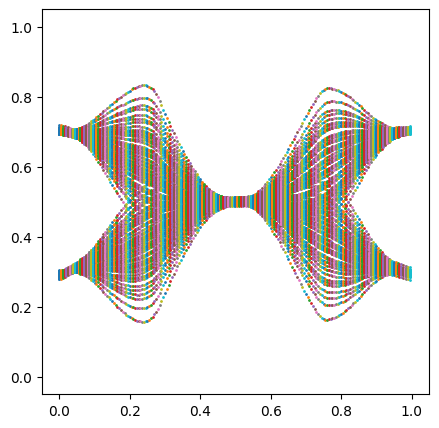}
\includegraphics[width=0.90\textwidth]{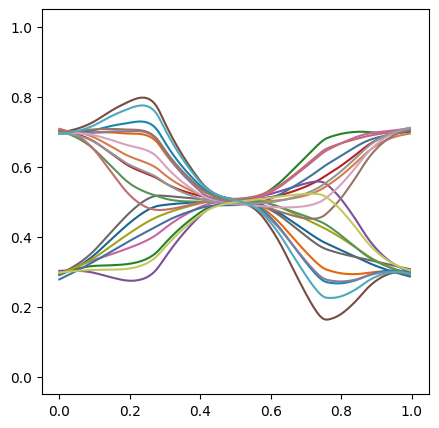}
\includegraphics[width=0.90\textwidth]{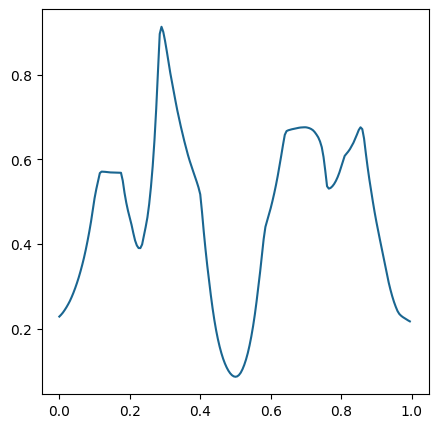}
\caption{Model 1, LipNet}
\end{subfigure}
\begin{subfigure}[t]{0.24\linewidth}
\centering
\includegraphics[width=0.90\textwidth]{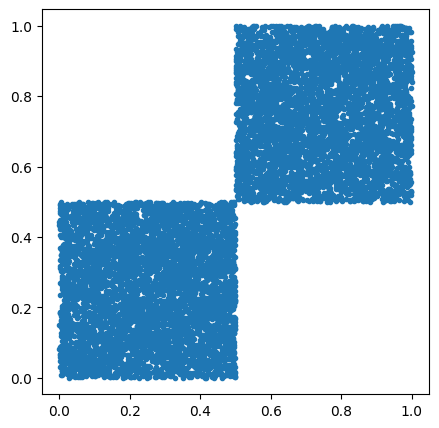}
\includegraphics[width=0.90\textwidth]{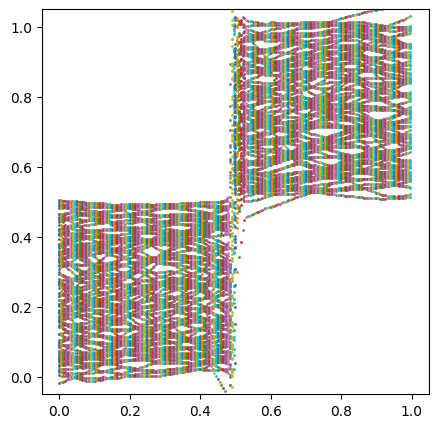}
\includegraphics[width=0.90\textwidth]{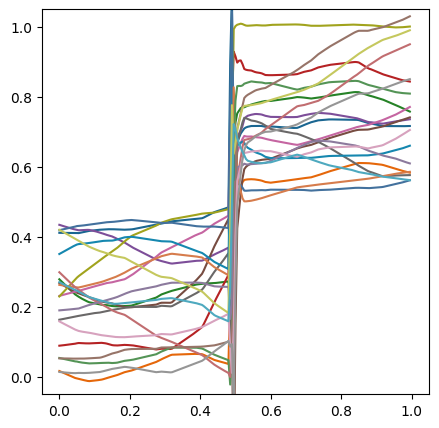}
\includegraphics[width=0.90\textwidth]{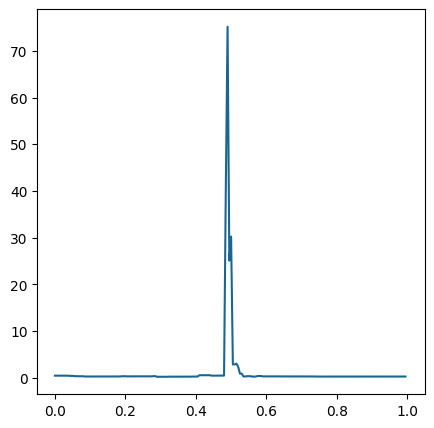}
\caption{Model 2, StdNet}
\end{subfigure}
\begin{subfigure}[t]{0.24\linewidth}
\centering
\includegraphics[width=0.90\textwidth]{plot/data2}
\includegraphics[width=0.90\textwidth]{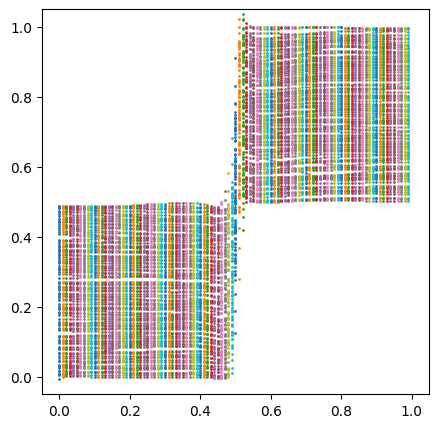}
\includegraphics[width=0.90\textwidth]{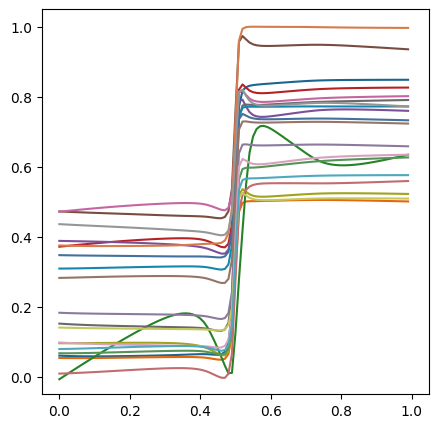}
\includegraphics[width=0.90\textwidth]{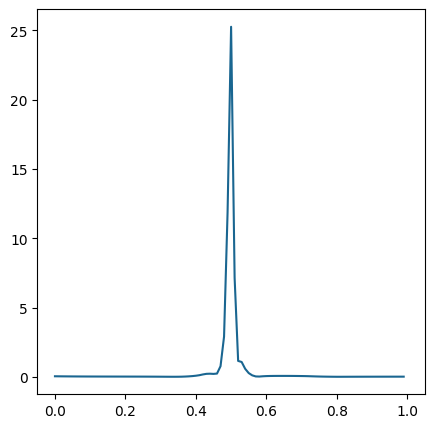}
\caption{Model 2, LipNet}
\end{subfigure}
\caption{Various estimators under Model 1 and 2, joint distributions.}
\label{fig:1D}
\medskip
\small
The first row presents the data. Neural networks with different structures are trained on the same set of data for comparison. The second row shows scatter plots of atoms at various $x$ values. The third row illustrates the evolution of 20 atoms as $x$ varies. The final row presents the average of the derivative of each atom with respect to $x$, with a notable difference in the $y$ axis scale. 
\end{figure}

\begin{figure}[!htbp]
\centering
\begin{subfigure}[t]{1\linewidth}
\centering
\makebox[1pt]{\raisebox{191pt}{\rotatebox[origin=c]{90}{Model 1}}}
\makebox[1pt]{\raisebox{80pt}{\rotatebox[origin=c]{90}{Model 2}}}
\includegraphics[width=0.90\textwidth]{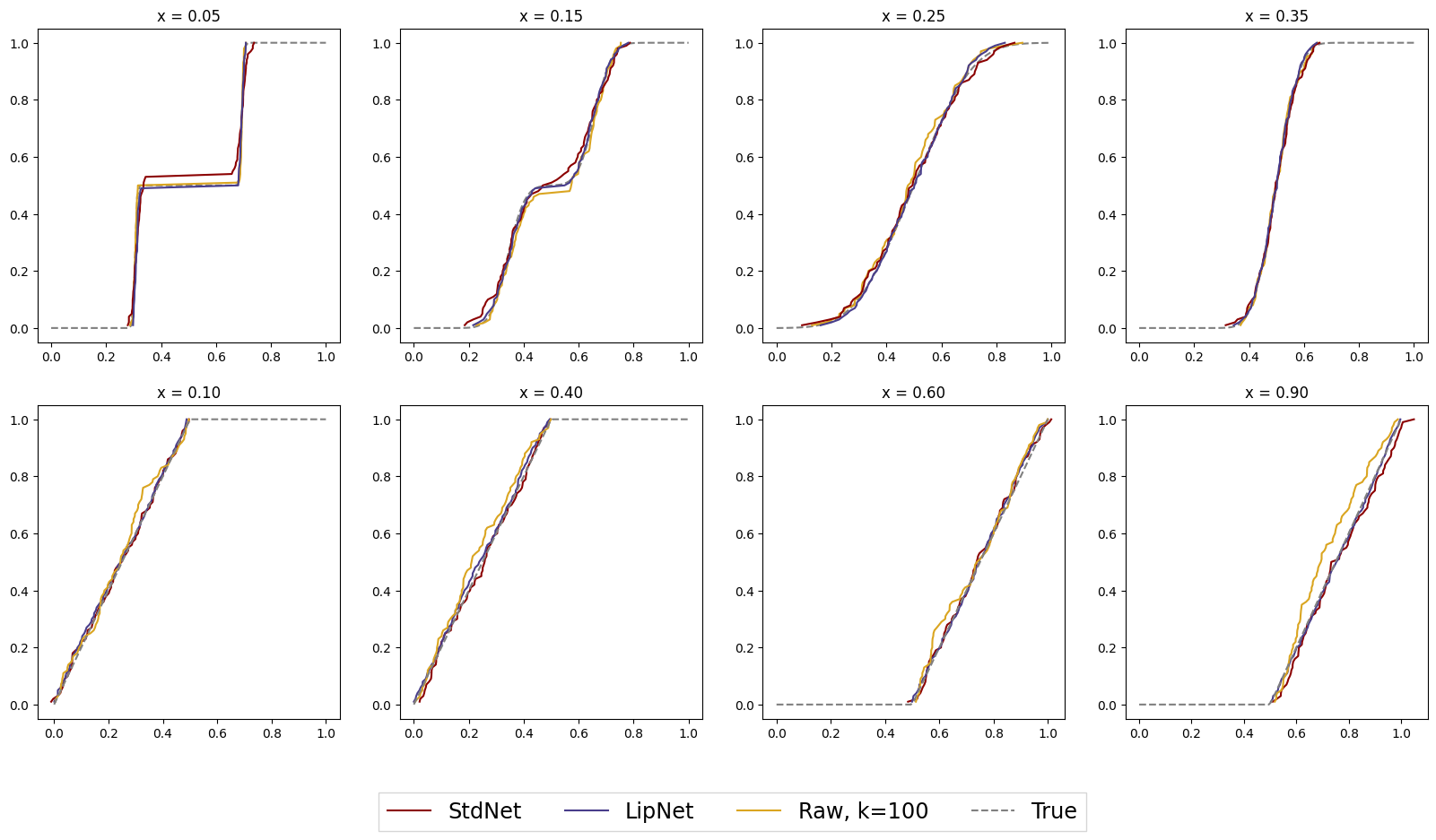}
\end{subfigure}
\caption{Various estimators under Model 1 and 2, conditional CDFs.}
\label{fig:1DCondCDF}
\medskip
\small
We compare the conditional CDFs at various values of $x$, derived from StdNet, LipNet, the raw $k$-nearest estimator with $k=100$ (also used in the training of StdNet and LipNet), and the ground truth. The first row pertains to data set 1. The second row pertains to data set 2. Subfigure titles display the values of $x$.
\end{figure}

\begin{figure}[!htbp]
\centering
\begin{subfigure}[t]{1\linewidth}
\centering
\makebox[1pt]{\raisebox{160pt}{\rotatebox[origin=c]{90}{Model 1}}}
\makebox[1pt]{\raisebox{65pt}{\rotatebox[origin=c]{90}{Model 2}}}
\includegraphics[width=0.80\textwidth]{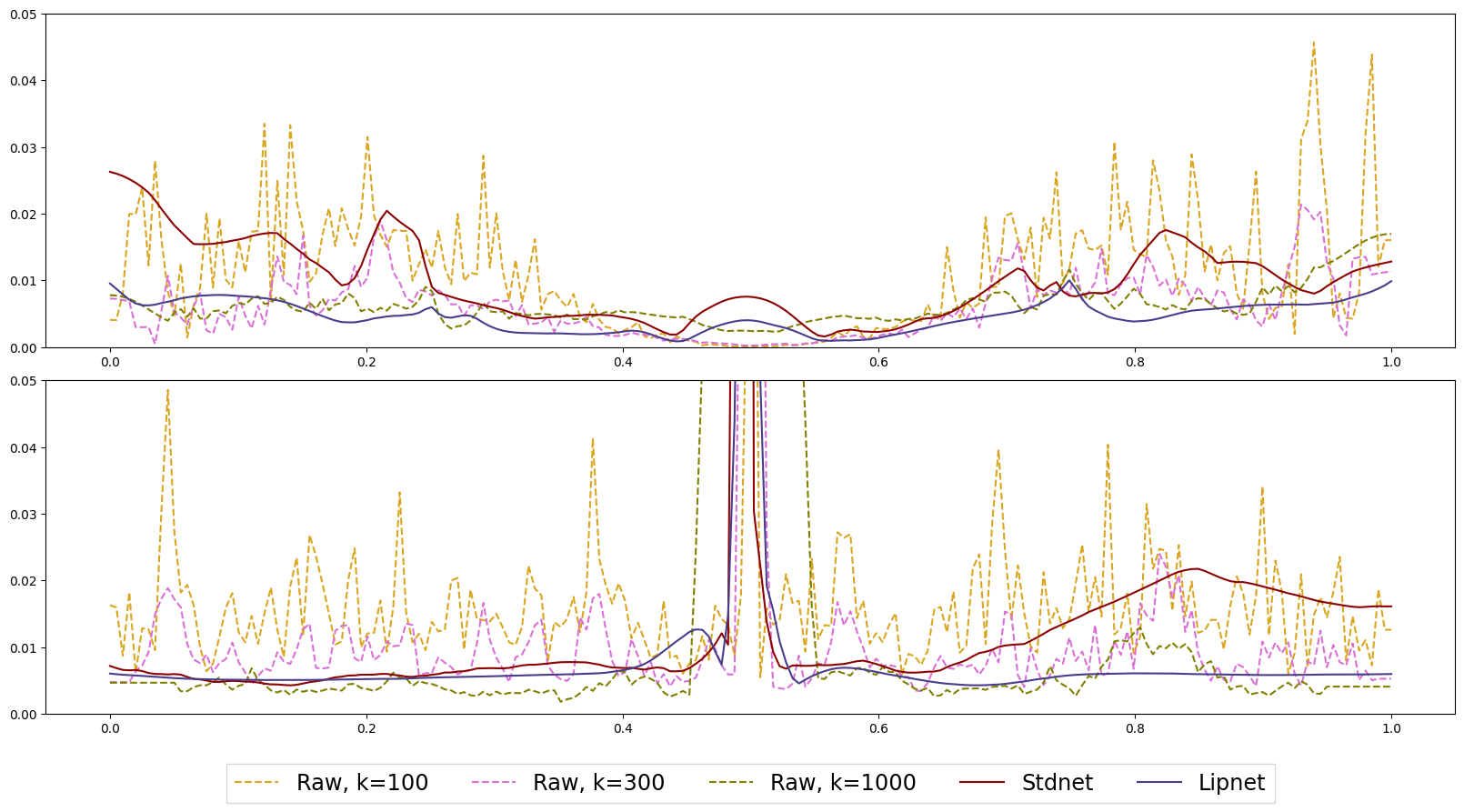}
\end{subfigure}
\caption{Errors at different $x$'s of various estimators under Model 1 and 2.}
\label{fig:1DAllW}
\medskip
\small
We compute the $\cW$-distance between estimators and the true conditional distribution at different $x$'s. StdNet and LipNet are trained based on the raw $k$-nearest-neighbor estimator with $k=100$.
\end{figure}

Figure \ref{fig:1D} suggets that both StdNet and LipNet adequately recover the joint distribution. The LipNet's accuracy is, however, notably superior and produces smooth movements of atoms (as seen in the third row of Figure \ref{fig:1D}).  Although further fine-tuning may provide slight improvements in StdNet's performance, StdNet will still not achieve the level of accuracy and smoothness observed in LipNet. The average absolute value of derivative of each atom (fourth row of Figure \ref{fig:1D}), makes it evident that LipNet demonstrates a capacity of automatically adapting to a suitable level of Lipschitz continuity locally. In particular, in Model 2, the atoms of LipNet respond promptly to jumps while remaining relatively stationary around values of $x$ where the kernel is constant. We emphasize that LipNet is trained using the same hyper-parameters across Models 1, 2, and 3. 

Figure \ref{fig:1DCondCDF} shows the estimated conditional distribution at different values of $x$. Figure \ref{fig:1DCondCDF} indicates that the raw $k$-nearest-neighbor estimator deviates frequently from the actual CDFs. This deviation of the raw $k$-nearest-neighbor estimator is expected, as it attempts to estimate an unknown CDF with only $k=100$ samples given an $x$. Conversely, the neural estimator, especially the LipNet, appears to offer extra corrections even if they are trained based on the raw $k$-nearest-neighbor estimator. This could be attributed to neural estimators implicitly leveraging information beyond the immediate neighborhood. 

Figure \ref{fig:1DAllW} compares the $\cW$-distance between each estimator and the true conditional distribution at various values of $x$, using the following formula (see \cite[Remark 2.28]{Peyre2019book}),
\begin{align}\label{eq:WCDFs}
\cW(F, G) = \int_\bR \big|F(r)-G(r)\big|\dif r,
\end{align}
where $F$ and $G$ are CDFs. This quantity can be accurately approximated with trapezoidal rule. In Model 1, the neural estimator generally outperforms the raw estimator with $k=100$ across most values of $x$, even though the raw estimator is used for training the neural estimators. Furthermore, LipNet continues to outperform raw estimators with larger values of $k$ -- even though LipNet is trained with a raw estimator with $k=100$. In Model 2, LipNet continues to demonstrate a superior performance, except when compared to the raw estimator with $k=1,000$ at $x$ distant from $0.5$, the reason is that, here, the conditional distribution is piece-wise constant in $x$, which enhances the performance of the raw estimator at larger $k$ values. 

The aforementioned findings indicate superior performance by LipNet. We, however, recognize that improvements are not always guaranteed, as demonstrated in Figures \ref{fig:1DCondCDF} and \ref{fig:1DAllW}.

\begin{figure}[!htbp]
\centering
\begin{subfigure}[t]{1\linewidth}
\centering
\includegraphics[width=0.90\textwidth]{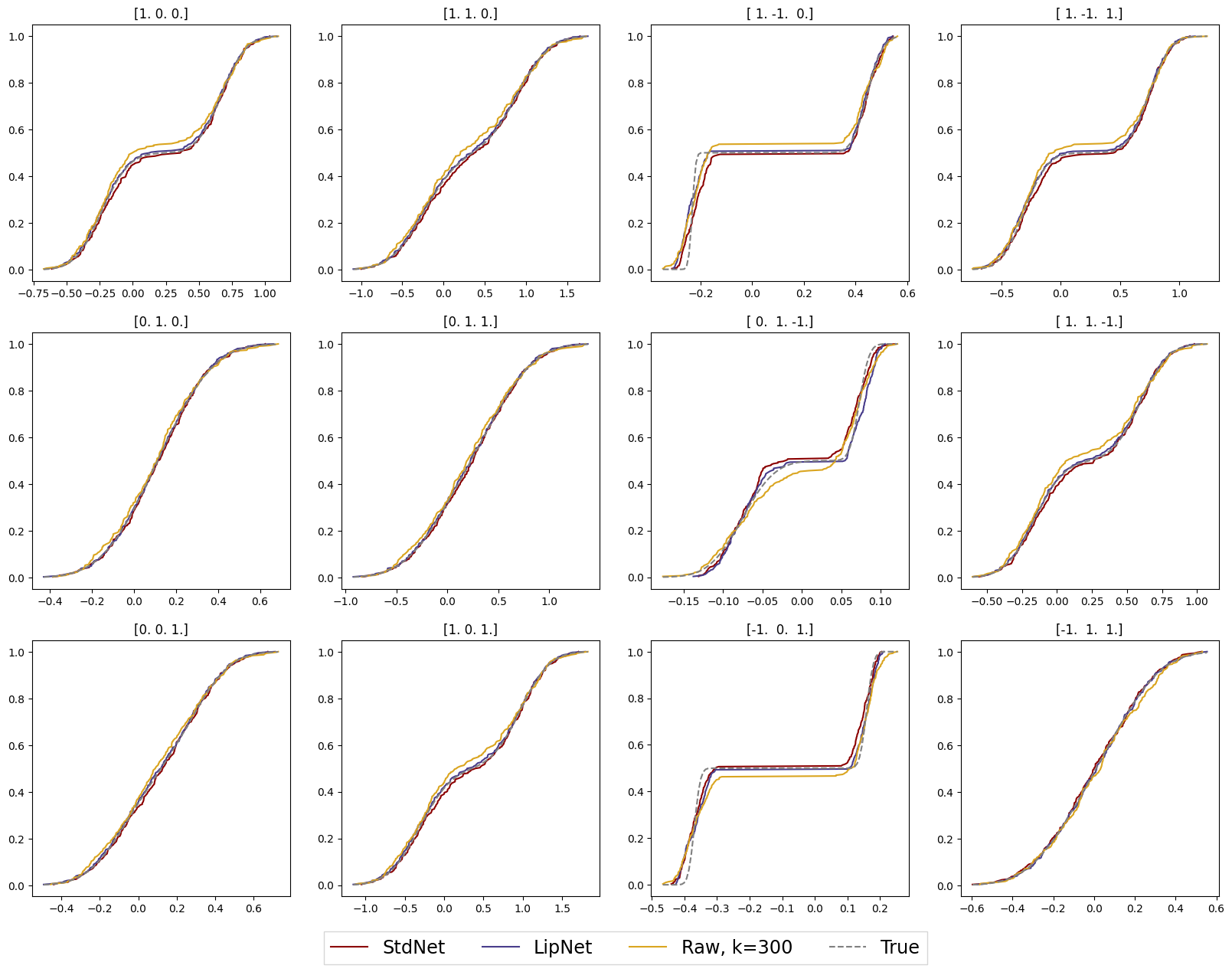}
\end{subfigure}

\caption{Various estimators under Model 3, projections of conditional CDFs.}
\label{fig:3DCondCDF}
\medskip
\small
We compare the projected conditional CDFs at $x=(0.12, -0.33,  0.1)$. The estimations are obtained from StdNet, LipNet, the raw estimator with $k=300$ (also used in training StdNet and LipNet), and the ground truth. Subfigure titles display the vectors used for projection. Note the difference in the $x$ axis scale. 
\end{figure}

\begin{figure}[htbp]
\centering
\begin{subfigure}[t]{0.24\linewidth}
\centering
% \makebox[1pt]{\raisebox{50pt}{\rotatebox[origin=c]{90}{Data}}}
\includegraphics[width=0.95\textwidth]{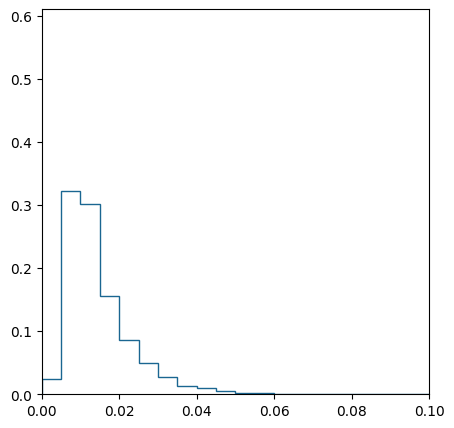}
\caption*{Raw $k=1,000$}
\end{subfigure}
\begin{subfigure}[t]{0.24\linewidth}
\centering
\includegraphics[width=0.95\textwidth]{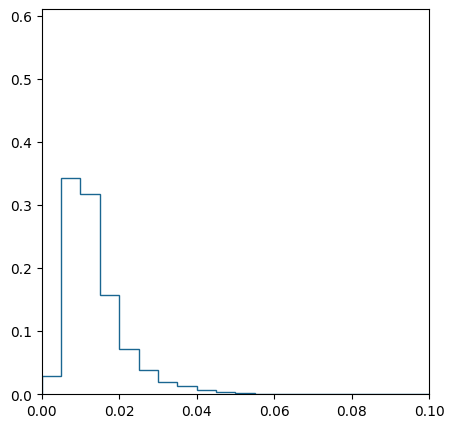}
\caption*{Raw $k=3,000$}
\end{subfigure}
\begin{subfigure}[t]{0.24\linewidth}
\centering
\includegraphics[width=0.95\textwidth]{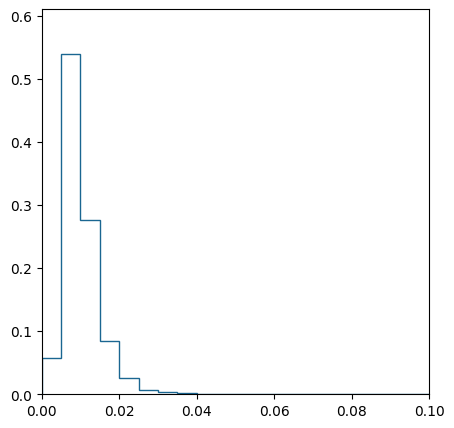}
\caption*{StdNet}
\end{subfigure}
\begin{subfigure}[t]{0.24\linewidth}
\centering
\includegraphics[width=0.95\textwidth]{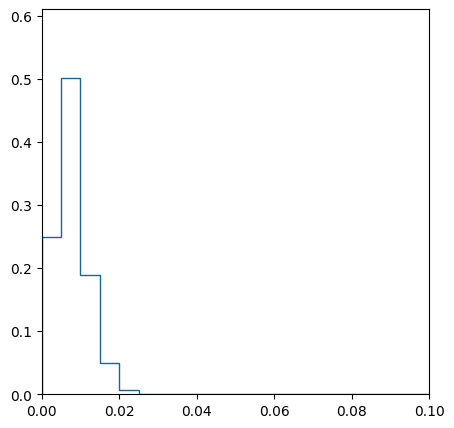}
\caption*{LipNet}
\end{subfigure}
\caption{Histogram of $10,000$ projected Wasserstein-$1$ errors.}
\label{fig:HistProjW}
Each histogram consists of $20$ uniformly positioned bins between $0$ to $0.1$. The errors of different estimators are computed with the same set of query points and projection vectors. Errors larger than $0.1$ will be placed in the right-most bins. Note StdNet and LipNet are trained with $k=300$.
\end{figure}

For Model 3, we generate $10^6$ samples and select $k=300$. We train both neural estimators using Adam optimizer with a learning rate of $10^{-3}$. Hyperparameters such as $L$ in \eqref{eq:OutputLayer} and $\tau$ in Algorithm \ref{algo:PwrIter} are consistent with those used for Models 1 and 2. We refer to Appendix \ref{sec:Config} for the detailed configuration. 

In Figure \ref{fig:3DCondCDF}, we visualize the outcomes in Model 3: the conditional CDFs at an arbitrarily chosen $x$ are projected onto various vectors. We observe that the neural estimators considerably outperform the raw $k$-nearest-neighbor estimator, likely owing due to their implicit use of global information outside of the immediate neighbors during training. For further comparisons, we present additional figures in Appendix \ref{sec:MorePlots}: Figures \ref{fig:3DCondCDFk1e3}, \ref{fig:3DCondCDFk3e3} and \ref{fig:3DCondCDFk1e4} feature the exact same neural estimators as shown in Figure \ref{fig:3DCondCDF}, but with the raw $k$-nearest-neighbor estimators employing different $k$ values, $k=1,000,\, 3,000,\, 10,000$. %Among all four raw $k$-nearest-neighbor estimators, the one with $k=3,000$ arguably demonstrates the best accuracy. 
Raw $k$-nearest-neighbor estimators with $k=1,000,\, 3,000$ are superior to that with $k=300$, while at $k=10,000$, the accuracy begins to decline. Upon comparison, the neural estimator trained with $k=300$ consistently outperforms the raw $k$-nearest-neighbor estimators for all values of $k$. 

\begin{figure}[!htbp]
\centering
\begin{subfigure}[t]{1\linewidth}
\centering
\includegraphics[width=0.90\textwidth]{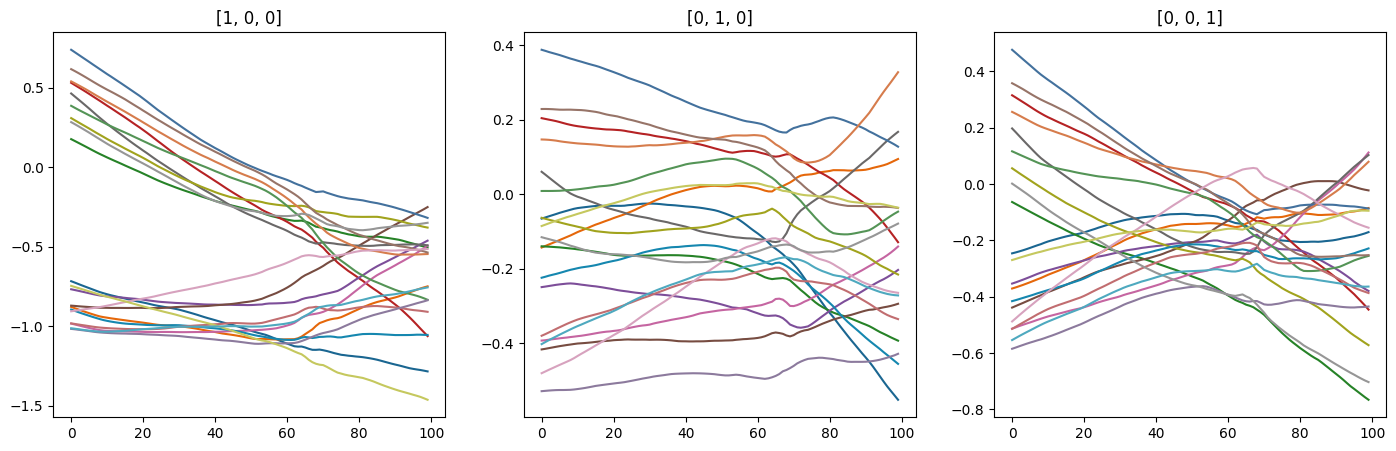}
\includegraphics[width=0.90\textwidth]{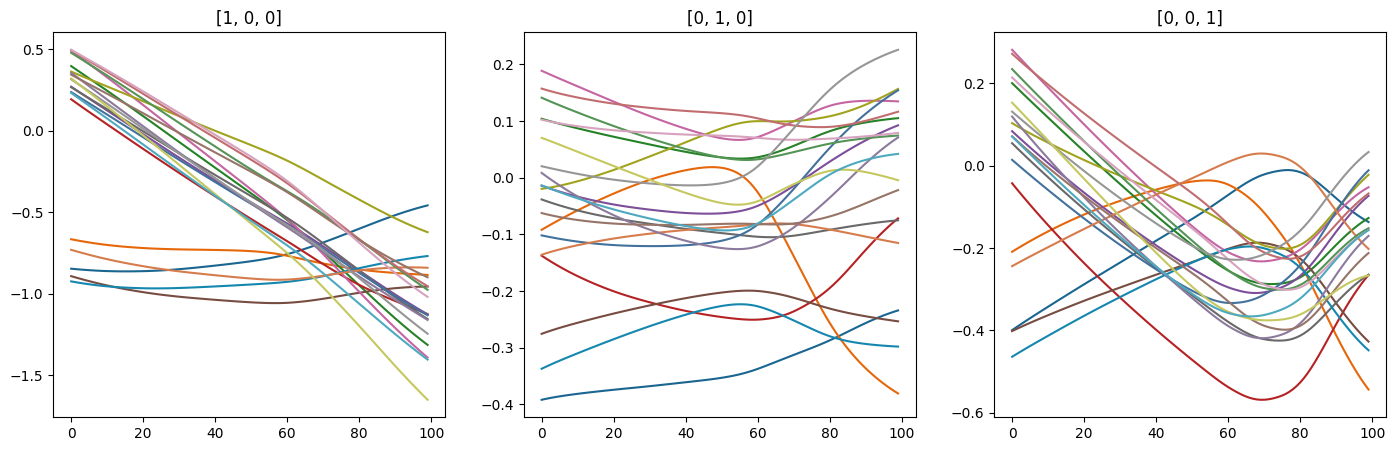}
\end{subfigure}
\caption{LipNet under Model 3, projected trajectories of $20$ atoms.}
\label{fig:3DAtom}
\medskip
\small
We illustrate the projected trajectories of $20$ atoms by evaluating LipNet at $100$ evenly allocated points along the straight line that intersects the origin and $x=(0.12, -0.33,  0.1)$, situated within $[0,1]^3$. The $x$-axis denotes the specific points along the line, consistent across all subfigures. %The coloring of atoms is also consistent across all subfigures. 
Subfigure titles display the vectors used for projection.  Note the difference in the $y$ axis scale.
\end{figure}

For a more comprehensive comparison, we randomly select $10,000$ query points. For each query point, we randomly generate a vector in $\bR^3$, normalized under $\|\cdot\|_1$, and project the atoms produced by the estimators onto said vector. With the same vector, we also compute the corresponding true CDFs of the projected $Y$ given the query point. We then approximately compute the $\cW$-distance between the projected distributions via \eqref{eq:WCDFs}. The resulting histograms are shown in Figure \ref{fig:HistProjW}, which suggests that LipNet performs best.  The rationale for employing this projection approach, rather than directly computing the $\cW$-distance between discrete and continuous distributions over $\bR^3$, is due to the higher cost and lower accuracy of the latter approach (see also the discussion in Section \ref{subsec:Sinkhorn}). While this projection approach provides a cost-effective alternative for performance evaluation, it may not fully capture the differences between the estimations and ground truth.

Lastly, to demonstrate how atoms, in the neural estimator, move  as $x$ varies, Figure \ref{fig:3DAtom} shows the projected trajectories along a randomly selected straight line through the origin. The movement of atoms in LipNet is smooth, consistent with previous observations. Interestingly, the movement of atoms in StdNet isn't excessively oscillatory either, although its continuity is slightly rougher compared to LipNet. The reader may execute the \texttt{Jupyter} notebook on our github repisitory \url{https://github.com/zcheng-a/LCD_kNN}  to explore the projected conditional CDFs and atoms' trajectories for different $x$ values.

%\newpage
\section{Proofs}\label{sec:Proofs}

\subsection{Auxiliary notations and lemmas}\label{subsec:Aux}
In this section, we will introduce a few technical results that will be used in the subsequent proofs. 

We first define
\begin{align}\label{eq:DefR}
R(m) := \sup_{x\in\bX}\int_{\bY^m}\cW\left(P_x, \frac1m\sum_{\ell=1}^m\delta_{y_\ell}\right) \bigotimes_{\ell=1}^m P_x(\dif y_\ell), \quad m\in\bN.
\end{align}
We stipulate that $R(0)=1$. By \cite{Fournier2015Rate}, we have
\begin{align}\label{eq:UBWConvEmp}
R(m) \le\; \stackrel\frown{C} \times \begin{cases}
m^{-\frac{1}2}, & {d_\bY} = 1,\\
m^{-\frac{1}{2}}\ln(m), & {d_\bY} = 2,\\
m^{-\frac{1}{{d_\bY}}}, & {d_\bY} \ge 3,
\end{cases}
\end{align}
for some constant $\stackrel\frown{C}\;>0$ depending only on $d_\bY$. For comprehension, we also point to \cite{Kloeckner2020Empirical,Fournier2023Convergence} for results that are potentially useful in analyzing explicit constant, though it is out of the scope of this paper.

The lemma below pertains to the so-called approximation error, which arises when treating data points $Y_j$ with $X_j$ around an query point as though they are generated from the conditional distribution at the query point.
\begin{lem}\label{lem:AbsDiffIntegrals}
Under Assumption \ref{hyp: kernel lip}, for any integer $J\ge1$ and $x,x_1,\dots,x_J\in \bX^{J+1}$, we have 
\begin{align*}
\left| \int_{\bY^J} \cW\left(\frac{1}{J}\sum_{j=1}^J \delta_{y_j}, P_x\right) \bigotimes_{j=1}^J P_{x_j}(\ud y_j) - \int_{\bY^J} \cW\left(\frac{1}{J}\sum_{j=1}^J \delta_{y_j}, P_x\right) \bigotimes_{j=1}^J P_x(\ud y_j) \right| \le \frac{L}{J} \sum_{j=1}^J \| x_j- x\|_\infty.
\end{align*}
\end{lem}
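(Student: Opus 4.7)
The idea is a coupling argument that reduces the difference of integrals to an expected Wasserstein cost between two empirical measures, each of which can be controlled atom-by-atom.

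First, for each $j=1,\dots,J$, I would pick an optimal coupling $\pi_j \in \cP(\bY \times \bY)$ between $P_{x_j}$ and $P_x$, so that by Assumption \ref{hyp: kernel lip},
\begin{align*}
\int_{\bY \times \bY} \|y-y'\|_\infty \, \pi_j(\ud y, \ud y') = \cW(P_{x_j}, P_x) \le L\,\|x_j - x\|_\infty.
\end{align*}
Define $\pi := \bigotimes_{j=1}^J \pi_j$ on $(\bY \times \bY)^J$; its marginal on the $(y_j)$-coordinates is $\bigotimes_j P_{x_j}$, and its marginal on the $(y'_j)$-coordinates is $\bigotimes_j P_x$. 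Hence the two integrals inside the absolute value on the left-hand side can be rewritten as integrals over the single measure $\pi$, and the absolute value is bounded by
\begin{align*}
\int_{(\bY\times\bY)^J} \left| \cW\!\left(\tfrac1J\!\sum_j \delta_{y_j}, P_x\right) - \cW\!\left(\tfrac1J\!\sum_j \delta_{y'_j}, P_x\right) \right| \pi(\ud y, \ud y').
\end{align*}

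Second, I would invoke two elementary Wasserstein facts: (i) the map $\mu \mapsto \cW(\mu, P_x)$ is $1$-Lipschitz on $\cP(\bY)$ with respect to $\cW$ by the triangle inequality; (ii) for empirical measures with the same number of equally weighted atoms, using the identity-matching coupling,
\begin{align*}
\cW\!\left(\tfrac1J\sum_j \delta_{y_j}, \tfrac1J\sum_j \delta_{y'_j}\right) \le \tfrac1J \sum_{j=1}^J \|y_j - y'_j\|_\infty.
\end{align*}
Combining (i) and (ii), the integrand in the previous display is dominated by $\frac{1}{J}\sum_j \|y_j - y'_j\|_\infty$.

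Third, I would integrate against $\pi$, use the product structure to reduce the integral of $\|y_j - y'_j\|_\infty$ to an integral against $\pi_j$ alone, and then apply the defining property of each $\pi_j$ together with Assumption \ref{hyp: kernel lip} to obtain the desired bound $\frac{L}{J}\sum_{j=1}^J \|x_j - x\|_\infty$. There is no real obstacle here; the only point of care is the choice of the product coupling to ensure the correct marginals, and the invocation of the $1$-Lipschitz property of $\mu \mapsto \cW(\mu, P_x)$, both of which are standard.
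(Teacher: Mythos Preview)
Your argument is correct. It is, however, a genuinely different route from the paper's. The paper proceeds by a telescoping sum: it writes the difference as $\sum_{\ell=1}^J$ of terms in which only the $\ell$-th coordinate's law is switched from $P_{x_\ell}$ to $P_x$, then for each such term integrates $y_\ell$ first (Fubini) and applies Kantorovich--Rubinstein duality together with the fact that $y_\ell \mapsto \cW\bigl(\tfrac{1}{J}\sum_j \delta_{y_j}, P_x\bigr)$ is $\tfrac{1}{J}$-Lipschitz, yielding $\tfrac{1}{J}\cW(P_{x_\ell},P_x)\le \tfrac{L}{J}\|x_\ell-x\|_\infty$ for each $\ell$. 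Your approach instead builds a single product coupling $\pi=\bigotimes_j \pi_j$ of the two $J$-fold product laws, controls the integrand via the triangle inequality $|\cW(\mu,P_x)-\cW(\nu,P_x)|\le\cW(\mu,\nu)$ and the identity-matching bound $\cW\bigl(\tfrac{1}{J}\sum\delta_{y_j},\tfrac{1}{J}\sum\delta_{y'_j}\bigr)\le\tfrac{1}{J}\sum\|y_j-y'_j\|_\infty$, and then integrates. Both arguments are short; yours is perhaps slightly more elementary in that it stays on the primal (coupling) side throughout and avoids the telescoping bookkeeping, while the paper's version isolates exactly which coordinate contributes which error term and uses the dual formulation. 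The bounds obtained are identical.
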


\begin{proof}
For $x,x_1,\dots,x_J\in \bX^{J+1}$, note that
\begin{align*}
&\left| \int_{\bY^J} \cW\left(\frac{1}{J}\sum_{j=1}^J \delta_{y_j}, P_x\right) \bigotimes_{j=1}^J P_{x_j}(\ud y_j) - \int_{\bY^J} \cW\left(\frac{1}{J}\sum_{k=1}^J \delta_{y_j}, P_x\right) \bigotimes_{j=1}^J P_{x}(\ud y_j) \right| \\
&\quad\le \sum_{\ell=1}^{J} \left| \int_{\bY^J} \cW\left(\frac{1}{J}\sum_{j=1}^J \delta_{y_j}, P_x\right)  \bigotimes_{j=1}^{\ell-1} P_{x}(\ud y_j) \otimes \bigotimes_{j=\ell}^J P_{x_j}(\ud y_j) - \int_{\bY^J} \cW\left(\frac{1}{J}\sum_{j=1}^J \delta_{y_j}, P_x\right) \bigotimes_{j=1}^{\ell} P_{x}(\ud y_j)\otimes\bigotimes_{j=\ell+1}^J P_{x_j}(\ud y_j) \right|,
\end{align*}
where for the sake of neatness, at $\ell=1,J$, we set 
\begin{align*}
\bigotimes_{j=1}^{0} P_{x}(\ud y_j)\otimes\bigotimes_{j=1}^J P_{x_j}(\ud y_j) = \bigotimes_{j=1}^J P_{x_j}(\ud y_j) \quad\text{and}\quad \bigotimes_{j=1}^{J} P_{x}(\ud y_j)\otimes\bigotimes_{j=J+1}^J P_{x_j}(\ud y_j) = \bigotimes_{j=1}^{J} P_{x_j}(\ud y_j).
\end{align*}
Regarding the $\ell$-th summand, invoking Fubini-Toneli theorem to integrate $y_\ell$ first then combining the integrals on outer layers using linearity, we obtain
\begin{align*}
&\left| \int_{\bY^J} \cW\left(\frac{1}{J}\sum_{j=1}^J \delta_{y_j}, P_x\right)  \bigotimes_{j=1}^{\ell-1} P_{x}(\ud y_j)\otimes\bigotimes_{j=\ell}^J P_{x_j}(\ud y_j) - \int_{\bY^J} \cW\left(\frac{1}{J}\sum_{j=1}^J \delta_{y_j}, P_x\right) \bigotimes_{j=1}^{\ell}P_{j}(\ud y_j)\otimes\bigotimes_{j=\ell+1}^J P_{x_j}(\ud y_j) \right|\\
&\quad= \left| \int_{\bY^{J-1}} \int_{\bY} \cW\left(\frac{1}{J}\sum_{j=1}^J \delta_{y_j}, P_x\right) \left(P_x - P_{x_\ell}\right)(\ud y_\ell) \bigotimes_{j=1}^{\ell-1}\ud P_{x}(y_j)\otimes\bigotimes_{j=\ell+1}^J P_{x_j}(\ud y_j) \right| \\
&\quad\le \sup_{(y_j)_{j\neq \ell} \in\bY^{J-1}} \left| \int_{\bY}\cW\left(\frac{1}{J}\sum_{j=1}^J \delta_{y_j}, P_x\right) \left(P_{x_\ell} - P_{x}\right)(\ud y_\ell) \right| \le \frac{1}{J} \cW(P_{x_\ell},P_x) \le \frac{L}{J}\|x_\ell-x\|_{\infty},
\end{align*}
where in the second last inequality we have invoked Kantorovich-Rubinstein duality (cf. \cite[Particular case 5.16]{Villani2008book}) and the fact that, for all $(y_j)_{j\neq\ell}\in\bY^{J-1}$, the map $y_\ell\mapsto \cW\left(\frac{1}{J}\sum_{j=1}^J \delta_{y_j}, P_x\right)$ is $\frac1J$-Lipschitz, and where in the last equality, we have used Assumption \ref{hyp: kernel lip}.
\end{proof}

We will be using the lemma below, which regards the stochastic dominance between two binomial random variables.
\begin{lem}\label{lem:BinStochDom}
Let $n\in\bN$ and $0 \le p < p'\le 1$. Then, $\Binomial(n,p')$ stochastically dominates $\Binomial(n,p)$.
\end{lem}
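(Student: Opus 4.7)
The plan is to prove this by an explicit coupling argument, exhibiting both binomials as functions of the same underlying randomness in a monotone way. Specifically, I would let $U_1,\dots,U_n$ be i.i.d.\ $\Uniform([0,1])$ on some probability space, and define
\begin{align*}
X := \sum_{i=1}^n \1_{\{U_i \le p\}}, \qquad Y := \sum_{i=1}^n \1_{\{U_i \le p'\}}.
\end{align*}
Since each $\1_{\{U_i \le p\}}$ is Bernoulli$(p)$ and the $U_i$ are independent, $X$ has the law $\Binomial(n,p)$, and similarly $Y$ has the law $\Binomial(n,p')$.

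The key observation is that $p \le p'$ gives the inclusion $\{U_i \le p\} \subseteq \{U_i \le p'\}$ pointwise for every $i$, hence $\1_{\{U_i \le p\}} \le \1_{\{U_i \le p'\}}$ almost surely. Summing over $i$ yields $X \le Y$ almost surely, so for every $t \in \bR$ we obtain $\bP(X > t) \le \bP(Y > t)$, which is exactly the definition of stochastic dominance of $\Binomial(n,p')$ over $\Binomial(n,p)$.

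I do not anticipate any real obstacle: the entire content of the proof is constructing the coupling, after which the monotonicity of indicators in their thresholds does all the work. The argument is short and can be presented in a few lines.
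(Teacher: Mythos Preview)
Your proposal is correct and is essentially the same coupling argument the paper uses: the paper also takes $U_1,\dots,U_n$ i.i.d.\ $\Uniform([0,1])$, sets $H=\sum_i \1_{[0,p]}(U_i)$ and $H'=\sum_i \1_{[0,p']}(U_i)$, and concludes from $H\le H'$.
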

\begin{proof}
Let $U_1,\dots,U_n\stackrel{\text{i.i.d.}}{\sim}\Uniform[0,1]$ and define 
$$H := \sum_{i=1}^n\1_{[0,p]}(U_i), \quad H' := \sum_{i=1}^n\1_{[0,p']}(U_i).$$ 
Clearly, $H\sim\Binomial(n,p)$ and $H'\sim\Binomial(n,p')$. Moreover, we have $H\le H'$, and thus $\bP(H > r) \le \bP(H' > r)$, which completes the proof.
\end{proof}

\subsection{Proof of Theorem \ref{thm:ExpectedRaterbox}}\label{subsec:Pf:thm:ExpectedRaterbox}

The proof of Theorem \ref{thm:ExpectedRaterbox} relies the technical lemma below that we state and prove now.
\begin{lem}\label{lem:sum p}
Let $p \in [0,1]$ a real number, and let $M \ge 1$ and $d \ge 1$ two integers. We then have
\begin{align*}
\sum_{m=1}^M \binom{M}{m} p^m (1-p)^{M-m} m^{-\frac{1}{d}} \le \left((M+1)p\right)^{-\frac{1}{d}} + \left((M+1)p\right)^{-1}.
\end{align*}
\end{lem}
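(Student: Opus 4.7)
The plan is to reduce the estimate to two standard binomial-moment identities by means of a pointwise inequality that compares $m^{-1/d}$ to $(m+1)^{-1/d}$, thereby shifting the exponent so that well-known closed-form expressions become applicable.

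\textbf{Step 1 (pointwise inequality).} The crux of the argument is to establish, for all integers $m \ge 1$ and $d \ge 1$,
\begin{align*}
m^{-1/d} \le (m+1)^{-1/d} + \frac{1}{m+1}.
\end{align*}
Rearranging, this is equivalent to $(m+1)^{1-1/d}\bigl[\bigl(\tfrac{m+1}{m}\bigr)^{1/d}-1\bigr] \le 1$. By concavity of $x \mapsto x^{1/d}$ on $[1,\infty)$, the tangent at $x = 1$ dominates, hence $\bigl(1+\tfrac{1}{m}\bigr)^{1/d} \le 1 + \tfrac{1}{dm}$. This reduces the claim to showing $(m+1)^{1-1/d}/(dm) \le 1$. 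For $d=1$ the left-hand side is $1/m \le 1$; for $d \ge 2$ it is bounded by $(m+1)/(dm) \le 2/d \le 1$. This is the main, though short, obstacle.

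\textbf{Step 2 (apply and extend the sum).} Letting $H \sim \mathrm{Binomial}(M,p)$, the inequality from Step 1 gives
\begin{align*}
\sum_{m=1}^M \binom{M}{m} p^m (1-p)^{M-m} m^{-1/d} \le \sum_{m=1}^M \binom{M}{m} p^m (1-p)^{M-m} \Bigl[(m+1)^{-1/d} + \tfrac{1}{m+1}\Bigr] \le \esp{(H+1)^{-1/d}} + \esp{(H+1)^{-1}},
\end{align*}
where adding back the nonnegative $m=0$ terms only enlarges the right-hand side.

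\textbf{Step 3 (close-form for the $(H{+}1)^{-1}$ moment).} Using the identity $\binom{M}{m}/(m+1) = \binom{M+1}{m+1}/(M+1)$, one computes the classical formula
\begin{align*}
\esp{\tfrac{1}{H+1}} = \frac{1-(1-p)^{M+1}}{(M+1)p} \le \frac{1}{(M+1)p}.
\end{align*}

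\textbf{Step 4 (Jensen for the $(H{+}1)^{-1/d}$ moment).} Setting $Z := 1/(H+1)$ and applying Jensen's inequality with the concave function $z \mapsto z^{1/d}$ (valid since $d \ge 1$) yields
\begin{align*}
\esp{(H+1)^{-1/d}} = \esp{Z^{1/d}} \le \bigl(\esp{Z}\bigr)^{1/d} \le \bigl((M+1)p\bigr)^{-1/d}.
\end{align*}
Summing the bounds of Steps 3 and 4 gives the advertised estimate. The edge cases $p=0$ and $p=1$ are handled by inspection (the former makes the right-hand side infinite and the latter reduces to the Step 1 inequality at $m=M$).
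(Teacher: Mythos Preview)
Your proof is correct. The route differs from the paper's in organization, though the underlying ingredients are closely related.

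The paper does not isolate a pointwise inequality. Instead it uses the identity $\binom{M}{m}p^m = \tfrac{1}{(M+1)p}\binom{M+1}{m+1}p^{m+1}(m+1)$ to shift the whole sum to a $\mathrm{Binomial}(M+1,p)$ expectation, then writes $m(m-1)^{-1/d}=(m-1)^{1-1/d}+(m-1)^{-1/d}$ and bounds the two pieces separately, applying Jensen to $\bE[V^{1-1/d}]$ with $V\sim\mathrm{Binomial}(M+1,p)$. After re-indexing, the paper's splitting is equivalent to your Step~1 inequality $(m+1)m^{-1/d}\le (m+1)^{1-1/d}+1$, so the two arguments share the same pointwise core. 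The difference is where Jensen is applied: the paper bounds $\bE[V^{1-1/d}]$ via concavity of $x\mapsto x^{1-1/d}$, whereas you bound $\bE[(H+1)^{-1/d}]=\bE[Z^{1/d}]$ via concavity of $z\mapsto z^{1/d}$; using the same binomial index shift one checks these two expectations are in fact equal, so both Jensen steps land on the same quantity by different concave functions. Your version has the advantage of making the pointwise comparison explicit and separating it cleanly from the probabilistic estimates; the paper's version avoids having to prove Step~1 as a standalone lemma.
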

\begin{proof}
We compute
\begin{align*}
&\sum_{m=1}^M \binom{M}{m} p^m (1-p)^{M-m} m^{-\frac{1}{d}} = \frac{1}{(M+1)p} \sum_{m=1}^M \binom{M+1}{m+1} p^{m+1}(1-p)^{M-m} (m+1) m^{-\frac{1}{d}} \\
&\quad= \frac{1}{(M+1)p} \sum_{m=2}^{M+1} \binom{M+1}{m} p^m (1-p)^{M+1-m} m (m-1)^{-\frac{1}{d}} \\
&\quad= \frac{1}{(M+1)p} \sum_{m=2}^{M+1} \binom{M+1}{m} p^m(1-p)^{M+1-m} (m-1)^{1-\frac{1}{d}} \\ 
&\qquad+ \frac{1}{(M+1)p} \sum_{m=2}^M \binom{M+1}{m} p^m(1-p)^{M+1-m}(m-1)^{-\frac{1}{d}},
\end{align*}
where we used that $m=m-1+1$ in the last equality. Then, using that $(m-1)^{1-\frac 1 d} \le m^{1-\frac 1 d}$ and $(m-1)^{-\frac 1 d} \le 1$ for all $m \ge 2$, we continue to obtain
\begin{align*}
\sum_{m=1}^M \binom{M}{m} p^m (1-p)^{M-m} m^{-\frac{1}{d}} \le &\frac{1}{(M+1)p} \sum_{m=2}^{M+1} \binom{M+1}{m} p^m(1-p)^{M+1-m} m^{1-\frac{1}{d}} \\ &+ \frac{1}{(M+1)p} \sum_{m=2}^M \binom{M+1}{m} p^m(1-p)^{M+1-m} \\
\le &\frac{1}{(M+1)p} \sum_{m=0}^{M+1} \binom{M+1}{m} p^m(1-p)^{M+1-m} m^{1-\frac{1}{d}} + \frac{1}{(M+1)p},
\end{align*}
where the second term in the last equality are derived from the binomial formula. Finally, introducing a random variable $V$ with binomial distribution $\cB(M+1,p)$, and using Jensen inequality for the concave function $\bR^+ \ni x \mapsto x^{1-\frac{1}{d}} \in \bR^+$, we obtain
\begin{align*}
\sum_{m=1}^M \binom{M}{m} p^m (1-p)^{M-m} m^{-\frac{1}{d}} \le& \frac{1}{(M+1)p} \esp{V^{1-\frac{1}{d}}} +  \frac{1}{(M+1)p} \\ 
\le& \frac{\left((M+1)p\right)^{1-\frac{1}{d}}}{(M+1)p} + \frac{1}{(M+1)p} = \left((M+1)p\right)^{-\frac{1}{d}} + \left((M+1)p\right)^{-1},
\end{align*}
which conclude the proof.
\end{proof}

We are now ready to prove Theorem \ref{thm:ExpectedRaterbox}.

\begin{proof}[Proof of Theorem \ref{thm:ExpectedRaterbox}]

% \end{proof}
% \begin{proposition}
%     Let $x \in \bX$, $0<\gamma<1$ and $B\in\sB(\bX)$ such that $B \subset B(x,\gamma)$. Then, under Assumptions \ref{hyp: kernel lip} and \ref{hyp: data}, we have
%     \begin{align*}
%         \esp{ \cW(\hat\mu^{\cD}_B, P_x) } \le
%     \end{align*}
% \end{proposition}
% \begin{proof}
For $\nu \in \cP(\bX)$, we obviously have
\begin{align*}
\esp{\int_\bX \cW(P_x, \Prbox_x) \nu(\ud x)} \le \sup_{x \in \bX} \esp{\cW(P_x, \Prbox_x)},
\end{align*}
we then focus on proving the right hand side inequality in Theorem \ref{thm:ExpectedRaterbox}. To this end, we fix $x \in \bX$ and, to alleviate the notations, we let $B := \cB^r(x)$ as introduced in Definition \ref{def:rbox}. Let $N_B:=\sum_{m=1}^M \1_B(X_m)$. By Definition \ref{def:rbox} and Assumption \ref{hyp: data} (i), we have 
\begin{align}
&\nonumber \esp{ \cW(P_x,\Prbox_x) } = \esp{\cW(P_x,\hat\mu^\cD_B)} = \sum_{m=0}^M \esp{\ind_{ N_B = m }  \cW(P_x, \hat \mu^\cD_B)} \\
&\nonumber \quad= \sum_{m=1}^M \binom{M}{m}  \esp{\ind_{X_1, \dots, X_m \in B} \ind_{X_{m+1}, \dots, X_M \not\in B} \cW\left(\frac{1}{m} \sum_{l=1}^m \delta_{Y_l}, P_x\right)} + \pro{X_1, \dots, X_M \not\in B} \cW(\lambda_\bY, P_x) \\
&\nonumber \quad\le \sum_{m=1}^M \binom{M}{m} \pro{ X_{m+1}, \dots, X_M \not\in B} \esp{\ind_{X_1, \dots, X_m \in B} \cW\left(\frac{1}{m} \sum_{l=1}^m \delta_{Y_l}, P_x\right)} + \xi(B^c)^M R(0) \\
&\label{eq esp} \quad= \sum_{m=1}^M \binom{M}{m} \mu(B^c)^{M-m} \int_{(B\times\bY)^m} \cW\left(\frac{1}{m}\sum_{l=1}^m \delta_{y_l}, P_x\right) \bigotimes_{\ell=1}^m\psi( \ud x_\ell \ud y_\ell) + \xi(B^c)^M R(0).
\end{align}
To compute the integral terms, observe that, for fixed $m \ge 1$, by definition of $R(m)$ in \eqref{eq:DefR}, Lemma \ref{lem:AbsDiffIntegrals} and Remark \ref{rem: rbox radius},
\begin{align}
&\nonumber \int_{(B\times\bY)^m} \cW\left(\frac{1}{m}\sum_{l=1}^m \delta_{y_l}, P_x\right)  \bigotimes_{\ell=1}^m\psi(\ud x_\ell \ud y_\ell) = \int_{B^m} \int_{\bY^m} \cW\left(\frac{1}{m}\sum_{l=1}^m \delta_{y_l}, P_x\right) \bigotimes_{l=1}^m P_{x_l}(\ud y_l) \bigotimes_{\ell=1}^m\xi(\ud x_\ell) \\
\nonumber &\quad\le \int_{B^m} \left( \int_{\bY^m} \cW\left(\frac{1}{m}\sum_{l=1}^m \delta_{y_l}, P_x\right) \bigotimes_{\ell=1}^m P_{x}(\ud y_\ell) + \frac{L}{m}\sum_{\ell=1}^m \|x_\ell-x\|_\infty\right) \bigotimes_{\ell=1}^m\xi(\ud x_\ell)  \\
&\quad\le \int_{B^m} \left(R(m) + 2Lr\right) \bigotimes_{\ell=1}^m\xi(\ud x_\ell) = (R(m)+2Lr)\xi(B)^m.
\end{align}
This together with \eqref{eq esp} implies that, for any $x\in\bX$,
\begin{align}\label{eq:UBSupEspWrBox}
\esp{ \cW(P_x,\Prbox_x) } &\le \sum_{m=1}^M \binom{M}{m} \xi(B^c)^{M-m} \xi(B)^m (R(m) + 2Lr) + \xi(B^c)^M R(0)\nonumber\\
&\le 2Lr + \sum_{m=1}^M \binom{M}{m} \xi(B^c)^{M-m} \xi(B)^m R(m) + \xi(B^c) R(0)
%&\le Lr + \underline c \left( \sum_{m=1}^M \binom{M}{m} \left(1-(2r)^{\dX}\right)^{M-m} (2r)^{m\dX} R(m) + \left(1-(2r)^{\dX}\right)^{M} R(0) \right),
\end{align}
%where, in the last line, we have used Assumption \ref{hyp: data} (ii) and the setting that $B := \cB^r(x)$ as introduced in Definition \ref{def:rbox}.

The remainder of the proof is split into three cases. In order to proceed, we will put together \eqref{eq:UBWConvEmp}, Lemma \ref{lem:sum p}, and \eqref{eq:UBSupEspWrBox}. Below we only keep track of the rate.
\begin{itemize}
\item For $d_\bY=1$, we have 
\begin{align*}
\esp{ \cW(P_x,\Prbox_x) } &\le 
2Lr + \left(\xi(B)(M+1)\right)^{-\frac12} + \left(\xi(B)(M+1)\right)^{-1} + (1-\xi(B))^M
\\
&\le 2Lr + \left(\underline c (2r)^{\dX}(M+1)\right)^{-\frac12} + \left(\underline c (2r)^{\dX}(M+1)\right)^{-1} + e^{-\underline c M r^{\dX}}.
\end{align*}
Controlling the dominating term(s) by setting $r \sim r^{-\frac{\dX}{2}}M^{-\frac12}$, we yield 
\begin{align*}
r\sim M^{-\frac1{d_\bX+2}} \quad\text{and}\quad \esp{ \cW(P_x,\Prbox_x) } \lesssim M^{-\frac1{d_\bX+2}}.
\end{align*}
\item For $d_\bY=2$, we have 
\begin{align*}
\esp{ \cW(P_x,\Prbox_x) } &\le 
2Lr + \ln(M)\left(\xi(B)(M+1)\right)^{-\frac12} + \left(\xi(B)(M+1)\right)^{-1} + (1-\xi(B))^M
\\
&\le 2Lr + \ln(M)\left(\underline c (2r)^{\dX}(M+1)\right)^{-\frac12} + \left(\underline c (2r)^{\dX} (M+1)\right)^{-1} + e^{-\underline c M r^{\dX}}.
\end{align*}
Since $r\sim \ln (M) r^{-\frac{\dX}{2}}M^{-\frac12}$ may not have a closed-form solution, we simply follow the case of $d_\bY=1$ to yield
\begin{align*}
r\sim M^{-\frac1{d_\bX+2}} \quad\text{and}\quad \esp{ \cW(P_x,\Prbox_x) } \lesssim M^{-\frac1{d_\bX+2}}\ln M.
\end{align*}
\item For $d_\bY\ge3$, we have 
\begin{align*}
\esp{ \cW(P_x,\Prbox_x) } &\le 
2Lr + \left(\xi(B)(M+1)\right)^{-\frac{1}{\dY}} + \left(\xi(B)(M+1)\right)^{-1}+(1-\xi(B))^M
\\
&\le 2Lr + \left(\underline c (2r)^{\dX}(M+1)\right)^{-\frac{1}{\dY}} + \left(\underline c (2r)^{\dX}(M+1)\right)^{-1}+ e^{-\underline c M r^{\dX}}.
\end{align*}
By setting $r\sim r^{-\frac{\dX}{\dY}}M^{-\frac{1}{\dY}}$, we yield 
\begin{align*}
r\sim M^{-\frac1{d_\bX+d_\bY}} \quad\text{and}\quad \esp{ \cW(P_x,\Prbox_x) } \lesssim M^{-\frac1{d_\bX+d_\bY}}.
\end{align*}
\end{itemize}
The proof is complete.
\end{proof}

\subsection{Proof of Theorem \ref{thm:Concenrboxnew}} \label{subsec:Pf:thm:Concenrboxnew}
\begin{proof}[Proof of Theorem \ref{thm:Concenrboxnew}]
We will proceed by using Efron-Stein inequality. Let $(X_1',Y_1')$ be an independent copy of $(X_1,Y_1)$, and define $\cD':=\set{(X_1',Y_1'), (X_2,Y_2), \dots, (X_M,Y_M)}$. In view of Assumption \ref{hyp: data} (i), by the triangle inequality of $\cW$, it is sufficient to investigate 
\begin{align*}
\frac12M\, \bE\left[ \left( \int_{\bX} \cW\left(\hat \mu^{\cD}_{\cB^r(x)}, \hat \mu^{\cD'}_{\cB^r(x)}\right) \dif \nu(x)\right)^2 \right].
\end{align*}
Notice that, by definitions \eqref{eq: mu hat},
\begin{align}\label{eq:UBEventrBoxnew}
\left\{\hat\mu^{\cD}_{\cB^r(x)}\neq\hat\mu^{\cD'}_{\cB^r(x)}\right\}  
\subseteq \Big\{X_1 \in \cB^r(x)\Big\}\cup\Big\{X_1' \in \cB^r(x)\Big\}.
\end{align}
Additionally, by definitions \eqref{eq: mu hat} again, on the event that $\left\{\hat\mu^{\cD}_{\cB^r(x)}\neq\hat\mu^{\cD'}_{\cB^r(x)}\right\}$, we have 
\begin{align}\label{eq:UBWrBoxnew}
\cW\left(\hat\mu^{\cD}_{\cB^r(x)}, \hat\mu^{\cD'}_{\cB^r(x)}\right) \le \left(1+\sum_{\ell=2}^{M} \1_{\cB^r(x)}(X_\ell) \right)^{-1}.
\end{align}
The above together with the condition that $\nu$ is dominated by $\lambda_\bX$ implies that 
\begin{align}\label{eq:UBExpnSqIntW}
&\bE\left[ \left( \int_{\bX} \cW\left(\hat \mu^{\cD}_{\cB^r(x)}, \hat \mu^{\cD'}_{\cB^r(x)}\right) \nu(\dif x)\right)^2 \right] \le \overline C^2 \bE\left[ \left( \int_{B(X_1,2r)\cup B(X_1',2r)} \cW\left(\hat \mu^{\cD}_{\cB^r(x)}, \hat \mu^{\cD'}_{\cB^r(x)}\right)  \lambda_{\bX}(\dif x)\right)^2 \right] \nonumber\\
&\quad \le \overline C^2 \bE\left[ \left( \int_{B(X_1,2r)\cup B(X_1',2r)} \left(1+\sum_{\ell=2}^{M} \1_{\cB^r(x)}(X_\ell) \right)^{-1} \lambda_{\bX}(\dif x)\right)^2 \right] \nonumber\\
&\quad \le 4 \overline C^2 \bE\left[ \left( \int_{B(X_1,2r)} \left(1+\sum_{\ell=2}^{M} \1_{\cB^r(x)}(X_\ell) \right)^{-1} \lambda_{\bX}(\dif x)\right)^2  \right] \nonumber\\
&\quad= 4 \overline C^2 \bE\left[ \lambda_{\bX}(B(X_1,2r))^2 \left( \int_{B(X_1,2r)} \left(1+\sum_{\ell=2}^{M} \1_{\cB^r(x)}(X_\ell) \right)^{-1} \frac{\lambda_{\bX}(\dif x)}{\lambda_{\bX}(B(X_1,2r))} \right)^2 \right] \nonumber\\
&\quad\le 4 \overline C^2 (4r)^{2\dX} \bE\left[ \bE\left[ \int_{B(X_1,2r)} \left(1+\sum_{\ell=2}^{M} \1_{\cB^r(x)}(X_\ell) \right)^{-2} \frac{\lambda_{\bX}(\dif x)}{\lambda_{\bX}(B(X_1,2r))} \Bigg| X_1 \right] \right], 
\end{align}
where we have used Jensen's inequality and tower property in the last line. In view of Assumption \ref{hyp: data} (i), expanding the inner conditional expectation into an integral with respect to regular conditional distribution (cf. \cite[Section 10]{Bogachev2007book}) then invoking Fubini-Tonelli theorem, we yield
\begin{align}\label{eq:ReprCondEIntRecip}
&\bE\left[ \int_{B(X_1,2r)} \left(1+\sum_{\ell=2}^{M} \1_{\cB^r(x)}(X_\ell) \right)^{-2} \frac{\lambda_{\bX}(\dif x)}{\lambda_{\bX}(B(X_1,2r))} \Bigg| X_1 \right] \nonumber\\
&\quad= \int_{B(X_1,2r)} \int_{\bX^{M-1}} \left(1+\sum_{\ell=2}^{M} \1_{\cB^r(x)}(x_\ell) \right)^{-2} \bigotimes_{\ell=2}^M \xi(\dif x_\ell)  \frac{\lambda_{\bX}(\dif x)}{\lambda_{\bX}(B(X_1,2r))}.
\end{align}
For the inner integral in \eqref{eq:ReprCondEIntRecip}, by Assumption \ref{hyp: data} (ii), we have 
\begin{align*}
&\int_{\bX^{M-1}} \left(1+\sum_{\ell=m+1}^M \1_{\cB^r(x)}(x_\ell)\right)^{-2}
\bigotimes_{\ell=2}^{M} \xi(\dif x_\ell) \\
&\quad= \sum_{\ell=0}^{M-1} \binom{M-1}{\ell} \xi\big(\cB^r(x)\big)^\ell \Big(1-\xi^r\big(\cB^r(x)\big)\Big)^{M-1-\ell} \left(1+\ell\right)^{-2} \\
&\quad= \frac{1}{M(M+1)\xi\big(\cB^r(x)\big)^2} \sum_{\ell=0}^{M-1} \binom{M+1}{\ell+2} \xi\big(\cB^r(x)\big)^{\ell+2} \Big(1-\xi\big(\cB^r(x)\big)\Big)^{M-1-\ell} \frac{\ell+2}{\ell+1} \\
&\quad\le \frac{2}{M(M+1)\xi\big(\cB^r(x)\big)^2} \sum_{\ell=2}^{M+1} \binom{M+1}{\ell} \xi\big(\cB^r(x)\big)^{\ell} \Big(1-\xi\big(\cB^r(x)\big)\Big)^{M+1-\ell} \\
&\quad \le \frac{2}{M(M+1)\xi\big(\cB^r(x)\big)^2}.
\end{align*}
This together with \eqref{eq:UBExpnSqIntW}, \eqref{eq:ReprCondEIntRecip} and Assumption \ref{hyp: data} (ii) implies
\begin{align*}
\bE\left[ \left( \int_{\bX} \cW\left(\hat \mu^{\cD}_{\cB^r(x)}, \hat \mu^{\cD'}_{\cB^r(x)}\right) \dif \nu(x)\right)^2 \right] \le 8\frac{2^{2\dX}\overline C^2}{\underline c^2 M(M+1) }.
\end{align*} 
Invoking Efron-Stein inequality, we conclude the proof.
\end{proof}

\subsection{Proof of Theorem \ref{thm:ExpectedRatekNN}}\label{subsec:Pf:thm:ExpectedRatekNN}
In order to prove  Theorem \ref{thm:ExpectedRatekNN}, we first establish a few technical lemmas. The following lemma is a first step toward finding the average rate of $k$-nearest neigbhor method. 
\begin{lem}\label{lem:EspWkNN}
Suppose Assumption \ref{hyp: kernel lip} and  \ref{hyp: data}. Let $R$ be defined in Section \ref{subsec:Aux}. Then, for any $x\in\bX$, we have 
\begin{align*}
\esp{ \cW\left(P_x, \PkNN_x\right) } \le R(k) + \frac{L}k \sum_{m=1}^k \esp{ Z_{x}^{(m)} }, 
\end{align*}
where $Z^x_{(m)}, m=1,\dots,M$ are the order statistics of $(\|X_m-x\|_\infty)_{m=1}^M$ in ascending order.
\end{lem}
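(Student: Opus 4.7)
My plan is to condition on the feature sample $\cD_\bX = (X_m)_{m=1}^M$ (and on the random tie-breaking) and then apply Lemma \ref{lem:AbsDiffIntegrals}. Once we condition on these, the indices $\sigma(1),\dots,\sigma(k)$ identifying the $k$ nearest neighbors of $x$ are determined, and by Assumption \ref{hyp: data} (i), the corresponding responses $Y_{\sigma(1)},\dots,Y_{\sigma(k)}$ are conditionally independent with $Y_{\sigma(j)} \sim P_{X_{\sigma(j)}}$. Thus
\[
\esp{\cW(P_x,\PkNN_x)\,\Big|\,\cD_\bX} = \int_{\bY^k} \cW\!\left(\tfrac{1}{k}\sum_{j=1}^k \delta_{y_j},\, P_x\right) \bigotimes_{j=1}^k P_{X_{\sigma(j)}}(\dif y_j).
\]

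Next I would invoke Lemma \ref{lem:AbsDiffIntegrals} with $J=k$, $x_j = X_{\sigma(j)}$, to compare this integral against the integral where every $P_{X_{\sigma(j)}}$ is replaced by $P_x$. This yields
\[
\esp{\cW(P_x,\PkNN_x)\,\Big|\,\cD_\bX} \le \int_{\bY^k} \cW\!\left(\tfrac{1}{k}\sum_{j=1}^k \delta_{y_j},\, P_x\right) \bigotimes_{j=1}^k P_{x}(\dif y_j) + \frac{L}{k}\sum_{j=1}^k \|X_{\sigma(j)}-x\|_\infty.
\]
The first term on the right-hand side is bounded by $R(k)$ by the very definition \eqref{eq:DefR}. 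The key observation for the second term is that, by construction of $\sigma$, the quantities $\|X_{\sigma(j)}-x\|_\infty$ for $j=1,\dots,k$ are precisely the $k$ smallest order statistics $Z_x^{(1)},\dots,Z_x^{(k)}$ of the sample $(\|X_m-x\|_\infty)_{m=1}^M$.

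Taking expectations on both sides and using linearity gives the claimed bound. There is no significant obstacle here: the main ingredients — Lemma \ref{lem:AbsDiffIntegrals} applied fiberwise, the conditional independence of the $Y$'s given $\cD_\bX$, and the identification of the distances to the $k$ nearest neighbors with the first $k$ order statistics — are all essentially immediate once the correct conditioning is set up. Measurability of $\sigma$ (and of the map $x\mapsto \PkNN_x$) is handled by the tie-breaking convention in Definition \ref{def:kNN} together with the general measurability remark in Section \ref{subsec:Setup}.
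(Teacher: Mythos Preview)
Your proposal is correct and follows essentially the same approach as the paper: both arguments reduce to applying Lemma \ref{lem:AbsDiffIntegrals} with $J=k$ to swap each $P_{X_{\sigma(j)}}$ for $P_x$, bound the resulting ``clean'' integral by $R(k)$, and identify the correction term with the first $k$ order statistics of the distances. The only cosmetic difference is that the paper uses a symmetrization trick (writing the expectation as $M!$ times the integral restricted to the event $\|X_1-x\|_\infty\le\cdots\le\|X_M-x\|_\infty$) in place of your direct conditioning on $\cD_\bX$; the substance is identical.
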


\begin{proof}
We fix $x\in\bX$ for the rest of the proof. By Assumption \ref{hyp: data}, we have
\begin{align*}
&\esp{ \cW\left( P_x, \PkNN_x \right) } = M!\, \esp{ \1_{\|X_1-x\|_\infty \le \|X_2-x\|_\infty \le \dots \le \|X_M-x\|_\infty } \cW\left( P_x, \frac1k \sum_{\ell=1}^k \delta_{Y_\ell}  \right) } \\
&\quad= M! \int_{(\bX\times\bY)^M} \1_{ \|x_1-x\|_\infty \le \|x_2-x\|_\infty \le \dots \le \|x_M-x\|_\infty } \cW\left( P_x, \frac1k \sum_{\ell=1}^k \delta_{y_\ell} \right) \bigotimes_{\ell=1}^M \psi(\dif x_\ell\dif y_\ell) \\ 
&\quad= M! \int_{\bX^M} \1_{ \|x_1-x\|_\infty \le \|x_2-x\|_\infty \le \dots \le \|x_M-x\|_\infty } \int_{\bY^k} \cW\left( P_x, \frac1k \sum_{\ell=1}^k \delta_{y_\ell} \right) \bigotimes_{\ell=1}^k P_{x_\ell}(\dif y_\ell) \bigotimes_{j=1}^M\xi(\dif x_\ell).
\end{align*}
In view of Lemma \ref{lem:AbsDiffIntegrals}, replacing $P_{x_\ell}$ above with $P_x$, we have 
\begin{align*}
&\quad \esp{ \cW\left( P_x, \PkNN_x \right) }\\
&\quad \le M! \int_{\bX^M} \1_{ \|x_1-x\|_\infty \le \|x_2-x\|_\infty \le \dots \le \|x_M-x\|_\infty } \int_{\bY^k} \cW\left( P_x, \frac1k \sum_{\ell=1}^k \delta_{y_\ell} \right) \bigotimes_{\ell=1}^k P_{x}(\dif y_\ell) \bigotimes_{j=1}^M\xi(\dif x_\ell)\\
&\qquad+ \frac{L}{k} \sum_{\ell=1}^k M! \int_{\bX^M} \1_{\|x_1-x\|_\infty \le \|x_2-x\|_\infty \le \dots \le \|x_M-x\|_\infty } d_\bX(x_\ell, x) \bigotimes_{j=1}^M\xi(\dif x_\ell)\\
&\quad= \int_{\bY^k} \cW\left(\frac{1}{k}\sum_{l=1}^k \delta_{y_l}, P_x\right) \bigotimes_{\ell=1}^k P_{x}(\ud y_\ell) + \frac{L}{k} \sum_{\ell=1}^k M! \int_{\bX^M} \1_{\|x_1-x\|_\infty \le \|x_2-x\|_\infty \le \dots \le \|x_M-x\|_\infty } d_\bX(x_\ell, x) \bigotimes_{j=1}^M\xi(\dif x_\ell).
\end{align*}
In view of $R$ defined above \eqref{eq:UBWConvEmp} and $Z^x_{(m)}$ defined in the statement of this lemma, we conclude the proof.
\end{proof}

The next lemma provides an upper bound to $\sum_{m=1}^k \esp{ Z_{x}^{(m)} }$ listed in Lemma \ref{lem:EspWkNN}. 
\begin{lem}\label{lem:UBSumEspZ}
Let $Z^x_{(m)}$ be defined as in Lemma \ref{lem:EspWkNN}. Under Assumption \ref{hyp: data}, for any $x\in\bX$, we have 
\begin{align*}
\sum_{m=1}^k \esp{ Z^{x}_{(m)} } \le \frac{2}{\underline c^{\frac1{d_\bX}}d_\bX}\frac{M!}{\Gamma(M+\frac1{d_\bX}+1)} \sum_{m=1}^k \sum_{j=0}^{m-1} \frac{\Gamma(j+\frac1{d_\bX})}{j!}.
\end{align*}
\end{lem}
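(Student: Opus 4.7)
The plan is to bound each expected order statistic $\esp{Z^x_{(m)}}$ by re-expressing its tail as a binomial tail, dominating that tail via Lemma \ref{lem:BinStochDom}, and evaluating the resulting integral through the Beta function. Fix $x \in \bX$. Since $\bX = [0,1]^{d_\bX}$ equipped with $\|\cdot\|_\infty$ has diameter $1$, we have $Z^x_{(m)} \in [0,1]$ almost surely, hence
\begin{align*}
\esp{Z^x_{(m)}} = \int_0^1 \bP(Z^x_{(m)} > t)\,\dif t.
\end{align*}
The event $\{Z^x_{(m)} > t\}$ coincides with the event that strictly fewer than $m$ of the $X_i$'s fall in the closed sup-norm ball $B(x,t)$. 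Under Assumption \ref{hyp: data}(i), the count $N(t) := \sum_{i=1}^M \1_{B(x,t)}(X_i)$ is $\Binomial(M, p(t))$ with $p(t) := \xi(B(x,t)\cap\bX)$, so $\bP(Z^x_{(m)} > t) = \bP(N(t) \le m-1)$.

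The next step is a uniform lower bound on $p(t)$. For every $x \in \bX$ and $t \in [0,1]$, each coordinate interval $[x_i - t, x_i + t] \cap [0,1]$ has length at least $t$, and therefore $\lambda_\bX(B(x,t)\cap\bX) \ge t^{d_\bX}$. Assumption \ref{hyp: data}(ii) then gives $p(t) \ge \underline c\, t^{d_\bX}$, and Lemma \ref{lem:BinStochDom} yields the stochastic domination
\begin{align*}
\bP(Z^x_{(m)} > t) \le \sum_{j=0}^{m-1} \binom{M}{j} (\underline c\, t^{d_\bX})^j (1-\underline c\, t^{d_\bX})^{M-j}.
\end{align*}

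To integrate this dominating tail over $t \in [0,1]$, I would apply the change of variable $q = \underline c\, t^{d_\bX}$, which converts $\dif t$ into $(d_\bX \underline c^{1/d_\bX})^{-1} q^{1/d_\bX - 1}\dif q$ and maps $[0,1]$ to $[0,\underline c]$. Enlarging the interval of integration from $[0,\underline c]$ to $[0,1]$ (which accounts for the factor of $2$ in the statement being slack rather than sharp) and invoking the Beta integral,
\begin{align*}
\int_0^1 q^{j+1/d_\bX - 1}(1-q)^{M-j}\,\dif q = \frac{\Gamma(j+1/d_\bX)\,\Gamma(M-j+1)}{\Gamma(M+1/d_\bX+1)},
\end{align*}
the $(M-j)!$ cancels against $\binom{M}{j} = M!/(j!(M-j)!)$, and summing over $m = 1,\dots,k$ yields the claimed bound.

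The main technical obstacles are minor. One is the geometric lower bound $\lambda_\bX(B(x,t)\cap\bX) \ge t^{d_\bX}$, which must hold uniformly in $x$ (in particular for $x$ near the boundary, where the stronger interior bound $(2t)^{d_\bX}$ fails); the elementary coordinate-wise argument above avoids case analysis. The other is careful bookkeeping of constants in the change of variable and the enlargement of the integration domain, both of which are folded into the constant $2/(d_\bX \underline c^{1/d_\bX})$ appearing in the statement. All remaining ingredients — the layer-cake identity, the binomial representation of $\bP(Z^x_{(m)} > t)$, and the stochastic domination via Lemma \ref{lem:BinStochDom} — are standard or already established.
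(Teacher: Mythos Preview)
Your proposal is correct and follows essentially the same route as the paper: layer-cake formula for $\esp{Z^x_{(m)}}$, identification of the tail event with a binomial tail, stochastic domination via Lemma~\ref{lem:BinStochDom} using a uniform lower bound on $\xi(B(x,t))$, and evaluation through the Beta integral after a change of variable. The only substantive difference is that you use the sharper geometric bound $\lambda_\bX(B(x,t)\cap\bX)\ge t^{d_\bX}$, whereas the paper uses $\lambda_\bX(B(x,t))\ge (t/2)^{d_\bX}$; consequently your argument in fact yields the constant $\tfrac{1}{\underline c^{1/d_\bX} d_\bX}$, and the stated $\tfrac{2}{\underline c^{1/d_\bX} d_\bX}$ follows a fortiori (your remark that the factor $2$ is slack is correct, though it comes from the paper's looser volume bound rather than from enlarging the integration domain).
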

\begin{proof}
For any $x\in\bX$, we compute, since $Z^x_{(m)} \in [0,1]$,
\begin{align*}
    \esp{Z^x_{(m)}} = \int_0^1 \mathbb P\left[Z^x_{(m)} \ge r\right] \dif r = \int_0^1 \left(1-\mathbb P\left[Z^x_{(m)} < r\right]\right)\dif r,
\end{align*}
and we observe that $\left\{Z^x_{(m)} < r\right\}=\left\{N(x,r) \ge m\right\}$ with $N(x,r) := \sharp\left\{ 1 \le m \le M \,\middle|\, \|X_m-x\| < r \right\}$. We hence have
\begin{align*}
    \esp{Z^x_{(m)}} = \int_0^1 \left(1-\mathbb P\left[N(x,r) \ge m\right]\right)\dif r.
\end{align*}
Since $N(x,r) \sim \Binomial(M, \xi(B(x,r)))$ and $\xi(B(x,r)) \ge \underline c \lambda_\bX(B(x,r)) \ge \underline c \frac{r^{\dX}}{2^{\dX}}$ by Assumption \ref{hyp: data} (ii), we obtain that $\mathbb P\left[N(x,r) \ge m\right] \ge \mathbb P\left[N'(x,r) \ge m\right]$ with $N'(x,r) \sim \Binomial(M,\underline c \frac{r^{\dX}}{2^{\dX}})$ due to Lemma \ref{lem:BinStochDom}. This implies
\begin{align}\label{eq:EspZ(m)x}
    \esp{Z^x_{(m)}} &\le \int_0^1 \left(1-\mathbb P\left[N'(x,r) \ge m\right]\right)\dif r = \int_0^1 \mathbb P\left[N'(x,r) < m\right]\dif r \nonumber\\
    &= \sum_{j=0}^{m-1} \binom{M}{j} \int_0^1  \left(\underline c \frac{r^{\dX}}{2^{\dX}}\right)^j \left(1-\underline c \frac{r^{\dX}}{2^{\dX}}\right)^{M-j} \dif r = \frac{2}{\underline c^{\frac{1}{\dX}}\dX} \sum_{j=0}^{m-1} \binom{M}{j} \int_0^{\frac{\underline c}{2^{\dX}}}  r^{\frac{1}{\dX}+j-1} \left(1-r\right)^{M-j} \dif r \nonumber\\
    & \le %\frac{2}{\underline c^{\frac{1}{\dX}}\dX} \sum_{j=0}^{m-1} \binom{M}{j} \int_0^1  r^{\frac{1}{\dX}+j-1} \left(1-r\right)^{M-j+1-1} \dif r = 
    \frac{2}{\underline c^{\frac{1}{\dX}}\dX} \sum_{j=0}^{m-1} \frac{\Gamma(M+1)}{\Gamma(j+1)\Gamma(M-j+1)} \frac{\Gamma(\frac{1}{\dX}+j)\Gamma(M-j+1)}{\Gamma(\frac{1}{\dX}+M+1)} \nonumber\\
    &= \frac{2 M!}{\underline c^{\frac{1}{\dX}}\dX\Gamma(\frac{1}{\dX}+M+1)} \sum_{j=0}^{m-1} \frac{\Gamma(\frac{1}{\dX}+j)}{j!},
\end{align}
and the proof is over.
\end{proof}

We are now in position to prove Theorem \ref{thm:ExpectedRatekNN}. 
\begin{proof}[Proof of Theorem \ref{thm:ExpectedRatekNN}]
By combining Lemma \ref{lem:EspWkNN} and Lemma \ref{lem:UBSumEspZ}, noting that the upper bound is constant in $x$, we have 
\begin{align}\label{eq:UBSupEspWkNN}
\sup_{x\in\bX} \esp{ \cW\left(P_x, \hat\mu_{\cN^{k}(x)}\right) } \le  R(k) + \frac{L}k\frac{2M!}{\underline c^{\frac1{d_\bX}}\dX\Gamma(M+\frac1{d_\bX}+1)} \sum_{m=1}^k \sum_{j=0}^{m-1} \frac{\Gamma(j+\frac1{d_\bX})}{j!}.
\end{align} 
Below we only investigate the rate of the right hand side of \eqref{eq:UBSupEspWkNN} as $M\to\infty$, and do not keep track of the constant. We first analyze the second term in the right hand side of \eqref{eq:UBSupEspWkNN}. By Gautschi's inequality \cite[Eqs (10.6) and (12.2)]{M08}, we have 
\begin{align*}
\frac{\Gamma(j+\frac1{d_\bX})}{j!} = \frac{\Gamma(j+\frac1{d_\bX})}{\Gamma(j+1)} \le j^{\frac{1}{{d_\bX}}-1},\quad j\in\set{0}\cup\bN .
\end{align*}
Thus,
\begin{align}\label{eq:Order2ndErrkNN}
\sum_{m=1}^k \sum_{j=0}^{m-1} \frac{\Gamma(j+\frac1{d_\bX})}{j!} \le \sum_{m=1}^k  \sum_{j=0}^{m-1} j^{\frac{1}{{d_\bX}}-1} \lesssim \sum_{m=1}^k m^{\frac1{d_\bX}} \lesssim k^{1+\frac1{d_\bX}}. 
\end{align}
By Gautschi's inequality %\cite[5.11.12]{Olver2010NIST} 
again, we have 
\begin{align*}
\frac{M!}{\Gamma(M+\frac1{d_\bX}+1)} = \frac{\Gamma(M+1)}{\Gamma(M+\frac1{d_\bX}+1)} \le M^{-\frac1{d_\bX}}.
\end{align*}
The above implies
%\begin{align*}
%\frac{1}k\frac{M!}{\Gamma(M+\frac1{d_\bX}+1)} \sum_{m=1}^k \sum_{j=0}^{m-1} \frac{\Gamma(j+\frac1{d_\bX})}{j!} \lesssim M^{-\frac1{d_\bX}} k^{\frac1{d_\bX}},
%\end{align*}
%and thus
\begin{align*}
\sup_{x\in\bX} \esp{ \cW\left(P_x, \hat\mu_{\cN^{k}(x)}\right) } \lesssim R(k) + M^{-\frac1{d_\bX}} k^{\frac1{d_\bX}}.
\end{align*}
We will split the remainder of the proof into three cases. 
\begin{itemize}
\item For ${d_\bY}=1$, by letting $k^{-\frac{1}{2}} \sim  M^{-\frac1{d_\bX}} k^{\frac1{d_\bX}}$, we yield
\begin{align*}
k \sim M^{\frac{2}{{d_\bX} + 2}} \quad\text{and}\quad \sup_{x\in\bX} \esp{ \cW\left(P_x, \hat\mu_{\cN^{k}(x)}\right) } \lesssim M^{-\frac{1}{{d_\bX} + 2}}
\end{align*}
\item For ${d_\bY}=2$, since the explicit solution of $k^{-\frac{1}{2}}\ln k \sim  M^{-\frac1{d_\bX}} k^{\frac1{d_\bX}}$ is elusive, we simply follow the configuration derived in the case of ${d_\bY}=1$ and yield
\begin{align*}
k \sim M^{\frac{2}{{d_\bX} + 2}} \quad\text{and}\quad \sup_{x\in\bX} \esp{ \cW\left(P_x, \hat\mu_{\cN^{k}(x)}\right) } \lesssim M^{-\frac{1}{{d_\bX} + 2}}\ln M.
\end{align*}
\item For ${d_\bY}\ge 3$, by letting $k^{-\frac{1}{{d_\bY}}} \sim  M^{-\frac1{d_\bX}} k^{\frac1{d_\bX}}$, we yield
\begin{align*}
k \sim M^{\frac{\dY}{{d_\bX} + {d_\bY}}} \quad\text{and}\quad \sup_{x\in\bX} \esp{ \cW\left(P_x, \hat\mu_{\cN^{k}(x)}\right) } \lesssim M^{-\frac{1}{{d_\bX} + {d_\bY}}}.
\end{align*}
\end{itemize}
The proof is complete.
\end{proof}

\subsection{Proof of Theorem \ref{thm:ConcenkNNnew}}
\label{subsec:Pf:thm:ConcenkNNnew}
\begin{proof}[Proof of Theorem \ref{thm:ConcenkNNnew}]
We will proceed by using Efron-Stein inequality. Let $(X_1',Y_1')$ be an independent copy of $(X_1,Y_1)$, and define $\cD':=\set{(X_1',Y_1'), (X_2,Y_2), \dots, (X_M,Y_M)}$. In view of Assumption \ref{hyp: data} (i), by the triangle inequality of $\cW$, it is sufficient to investigate
\begin{align*}
\frac12 M\,\esp{ \left( \int_\bX \cW\Big(\hat\mu^{\cD}_{\cN^{k,\cD_\bX}(x)}, \hat\mu^{\cD'}_{\cN^{k,\cD'_\bX}}(x)\Big) \nu(\dif x) \right)^2 }.
\end{align*}
Note that for $\cW\Big(\hat\mu^{\cD}_{\cN^{k,\cD_\bX}(x)}, \hat\mu^{\cD'}_{\cN^{k,\cD'_\bX}}(x)\Big)$ to be positive, the event $A_x\cup A'_x$ is necessary, where
\begin{gather*}
A_x := \left\{ X_1 \in \cN^{k,\cD_\bX}(x) \right\} \quad\text{and}\quad %\left\{ \text{at most $(k-1)$ of $X_\ell,\ell=2,\dots,M$, falls into } B^{\|X_1-x\|_{\infty}}_{x} \right\}, \\
A'_x := \left\{ X_1' \in \cN^{k,\cD_\bX}(x) \right\}. %\left\{ \text{at most $(k-1)$ of $X_\ell,\ell=2,\dots,M$, falls into } B^{\|X_1'-x\|_{\infty}}_{x} \right\}.
\end{gather*}
Moreover,
\begin{align*}
\cW\Big(\hat\mu^{\cD}_{\cN^{k,\cD_\bX}(x)}, \hat\mu^{\cD'}_{\cN^{k,\cD'_\bX}}(x)\Big) \le \frac1k.
\end{align*}
It follows that 
\begin{align}
&\esp{ \left( \int_\bX \cW\Big(\hat\mu^{\cD}_{\cN^{k,\cD_\bX}(x)}, \hat\mu^{\cD'}_{\cN^{k,\cD'_\bX}}(x)\Big) \nu(\dif x) \right)^2 } \le \frac{1}{k^2} \esp{ \left( \int_\bX \1_{A_x\cup A'_x} \nu(\dif x) \right)^2 } \label{eq:UBExpnSqIntWkNN} \\
&\quad \le \frac{1}{k^2} \esp{ \int_\bX \1_{A_x\cup A'_x} \nu(\dif x) } \le \frac{2}{k^2} \int_\bX \bP\left[A_x\right] \nu(\dif x).
\end{align}
where the second inequality is due to the fact that the integral value always fall into in $[0,1]$, and we have used Fubini-Tonelli theorem and the subadditivity of probability in the third inequality. Regarding $\bP\left[A_x\right]$, by the symmetry stemming from Assumption \ref{hyp: data} (i) and the random tie-breaking rule in Definition \ref{def:kNN}, we have 
\begin{align*}
\bP\left[A_x\right] = \binom{M-1}{k-1}\binom{M}{k}^{-1} = \frac{k}{M}.
\end{align*}
Consequently, 
\begin{align*}
M\,\esp{ \left( \int_\bX \cW\Big(\hat\mu^{\cD}_{\cN^{k,\cD_\bX}(x)}, \hat\mu^{\cD'}_{\cN^{k,\cD'_\bX}}(x)\Big) \nu(\dif x) \right)^2 } \le \frac{2}{k}.
\end{align*}
Invoking Efron-Stein inequality, we conclude the proof of \eqref{eq:UBVarNoDom}.

We now assume additionally that $\nu \le \overline C \lambda_\bX$ to prove the second statement. Following from \eqref{eq:UBExpnSqIntWkNN}, by using the positivity and subadditivity of indicator functions as well as AM–GM inequality, we have 
\begin{align*}
\esp{ \left( \int_\bX \cW\Big(\hat\mu^{\cD}_{\cN^{k,\cD_\bX}(x)}, \hat\mu^{\cD'}_{\cN^{k,\cD'_\bX}}(x)\Big) \nu(\dif x) \right)^2 } &\le \frac{4}{k^2} \esp{ \left( \int_\bX \1_{A_x} \nu(\dif x) \right)^2 } \le \frac{4\overline C^2}{k^2} \esp{ \left( \int_\bX \1_{A_x} \lambda_\bX(\dif x) \right)^2 } \\
& %\le \frac{4\overline C^2}{k^2} \esp{\bE\left[ \left( \int_\bX \1_{A_x} \lambda_\bX(\dif x) \right)^2 \Bigg| X_1 \right]} 
\le \frac{4\overline C^2}{k^2} \int_{[0,1]} \bP\left[ \left( \int_\bX \1_{A_x} \lambda_\bX(\dif x) \right)^2 > \delta  \right] \dif\delta,
\end{align*}
where in the second inequality we have used the condition that $\nu$ is dominated by $\lambda_\bX$, and in the last one the alternative expression of expectation for positive random variables. 
Let $\mathsf{Cube}^\iota_{\bX}$ be the set of cubes within $\bX$ with edge length $\iota$. Since $\nu$ is dominated by $\lambda_\bX$, with probability $1$ we have 
\begin{gather*}
A_x = \left\{ \text{at most $(k-1)$ of $X_\ell,\ell=2,\dots,M$, falls into } B^{\|X_1-x\|_{\infty}}_{x} \right\}, \\
A'_x = \left\{ \text{at most $(k-1)$ of $X_\ell,\ell=2,\dots,M$, falls into } B^{\|X_1'-x\|_{\infty}}_{x} \right\}.
\end{gather*}
It follows that 
\begin{align*}
\left\{ \sum_{m=2}^M\1_{B}(X_m) > k, \quad \forall B\in\mathsf{Cube}_\bX^\iota \right\} \subseteq \left\{ \int_\bX \1_{A_x}\lambda(\dif x) \le (2\iota)^{d_\bX} \right\}.
\end{align*}
%\begin{align*}
%\left\{ \left( \int_\bX \1_{A_x} \lambda_\bX(\dif x) \right)^2 < \delta \right\} \supseteq \left\{ \frac1{M-1}\sum_{m=2}^M\1_{B}(X_m) > k, \quad\forall B\in\mathsf{Cube}_\bX^{\frac12\delta^{\frac1{d_\bX}}} \right\}
%\end{align*}
By combining the above and setting $\delta=(2\iota)^{2d_\bX}$, we yield
\begin{align}\label{eq:UBExpnSqIntWkNN2}
\esp{ \left( \int_\bX \cW\Big(\hat\mu^{\cD}_{\cN^{k,\cD_\bX}(x)}, \hat\mu^{\cD'}_{\cN^{k,\cD'_\bX}}(x)\Big) \nu(\dif x) \right)^2 } \le \frac{4\overline C^2}{k^2} \int_{[0,1]} \bP\left[ \frac1{M-1}\sum_{m=2}^M\1_{B}(X_m) \le \frac{k}{M-1},\,\forall B\in\mathsf{Cube}_\bX^{\frac12\delta^{\frac1{d_\bX}}} \right] \dif\delta
\end{align}
In order to proceed, we state and prove a useful technical lemma using the Rademacher complexity technique (cf. \cite[Section 4]{Wainwright2019High}). Below we let $\mathsf{Cube}_\bX$ be the set of cubes inside $\bX$ with edge lengths within $[0,1]$. 
\begin{lem}\label{lem:InfEmpConcen}
Let $X_2,\dots, X_M$ be introduced in Assumption \ref{hyp: data} (i). For $\varepsilon\ge 0$, 
\begin{align*}
\bP\left[ \frac1{M-1}\sum_{m=2}^M\1_{B}(X_m) \le  \underline c\lambda_\bX(B) - 8 \sqrt{\frac{2d_\bX\ln(M)}{M-1}} - \varepsilon, \quad\forall B\in\mathsf{Cube}_\bX \right] \le \exp\left(-\frac{M-1}{2}\varepsilon^2\right).
\end{align*}
\end{lem}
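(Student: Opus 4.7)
The plan is to establish a uniform deviation bound for the empirical process indexed by the class of axis-aligned cubes in $\bX$, via the three classical ingredients: McDiarmid's bounded differences inequality, a symmetrization step, and a Vapnik--Chervonenkis / Sauer--Shelah control of the growth function. Because $\xi(B) \ge \underline c\, \lambda_\bX(B)$ by Assumption \ref{hyp: data}(ii), the event in the displayed inequality is contained in $\{F > 8\sqrt{2 d_\bX \ln M /(M-1)} + \varepsilon\}$, where
\begin{align*}
F(X_2, \ldots, X_M) := \sup_{B \in \mathsf{Cube}_\bX} \left( \xi(B) - \frac{1}{M-1}\sum_{m=2}^M \1_B(X_m) \right),
\end{align*}
so it suffices to control $F$.

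First, since indicators take values in $\{0,1\}$, replacing any single $X_m$ by an independent copy changes each empirical frequency $\frac{1}{M-1}\sum_{\ell=2}^M \1_B(X_\ell)$, and hence $F$, by at most $1/(M-1)$. McDiarmid's bounded differences inequality with $\sum c_m^2 = (M-1)\cdot (M-1)^{-2} = (M-1)^{-1}$ therefore yields
\begin{align*}
\pro{F \ge \esp{F} + \varepsilon} \le \exp\bigl(-2(M-1)\varepsilon^2\bigr) \le \exp\bigl(-(M-1)\varepsilon^2/2\bigr),
\end{align*}
so it remains to show $\esp{F} \le 8 \sqrt{2 d_\bX \ln M/(M-1)}$.

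Second, a standard symmetrization argument using an i.i.d.\ ghost sample gives
\begin{align*}
\esp{F} \le 2\, \esp{\sup_{B \in \mathsf{Cube}_\bX} \left| \frac{1}{M-1} \sum_{m=2}^M \sigma_m \1_B(X_m)\right|},
\end{align*}
where $(\sigma_m)$ are i.i.d.\ Rademacher signs independent of $(X_m)$. Conditioning on $(X_m)_{m=2}^M$, the inner supremum is taken over at most $\Pi_{\cF}(M-1)$ distinct sign patterns, where $\cF := \{\1_B : B \in \mathsf{Cube}_\bX\}$. As axis-aligned cubes form a subclass of axis-aligned hyperrectangles, $\cF$ has VC dimension at most $2 d_\bX$, and Sauer--Shelah provides $\Pi_{\cF}(n) \le \bigl(\tfrac{en}{2 d_\bX}\bigr)^{2 d_\bX}$. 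Massart's finite-class lemma (applied conditionally, using that each $\sum_m \sigma_m \1_B(X_m)$ is subgaussian with parameter at most $\sqrt{M-1}$) then bounds the conditional Rademacher average by $\sqrt{2 \ln \Pi_{\cF}(M-1)/(M-1)}$. Taking expectation and absorbing constants yields $\esp{F} \le 8 \sqrt{2 d_\bX \ln M/(M-1)}$, which, combined with the McDiarmid tail, closes the argument.

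The main technical hurdle is the combinatorial step: tracking the multiplicative constants through symmetrization, Sauer--Shelah, and Massart so as to recover the explicit prefactor $8$. The qualitative chain is textbook, but matching the stated constant requires a slightly loose form of Sauer--Shelah (e.g.\ $\Pi_{\cF}(n)\le n^{2 d_\bX+1}$ for $n$ large enough) and careful bookkeeping of the factor $2$ from symmetrization. Everything else is a direct application of standard empirical process machinery.
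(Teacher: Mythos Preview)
Your proposal is correct and follows essentially the same route as the paper. The paper invokes \cite[Lemma 4.14 and Theorem 4.10]{Wainwright2019High}, which packages exactly the McDiarmid/bounded-differences concentration, symmetrization, and Massart finite-class bound that you spell out; the only cosmetic difference is that the paper bounds the shatter coefficient by a direct counting argument (the $M-1$ projected points cut each axis into at most $M$ intervals, so at most $M^{2d_\bX}$ distinct sign patterns), whereas you go through the VC dimension of axis-aligned rectangles and Sauer--Shelah, arriving at the same polynomial growth and hence the same $\sqrt{d_\bX \ln M/(M-1)}$ rate.
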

\begin{proof}
Let $\bm x^M=(x_2^M,\dots,x_M^M)\in\bX^{M-1}$. To utilize the machinery of Rademacher complexity, we will upper bound the cardinality of the set $\set{\1_B(\bm x^M): B\in \mathsf{Cube}_\bX}$, where $\1_B$ applies entry-wise. More precisely, $\1_B(\bm x^M)=(\1_B(x_2^M),\dots,\1_B(x_M^M))$. To start with, we first note that for $d=1,\dots,d_{\bX}$, the projected $(x_{2,d}^M,\dots,x_{M,d}^M)$ at most separates axis-$d$ into $M$ intervals. Additionally, each element in $\set{\1_B(\bm x^M): B\in \mathsf{Cube}_\bX}$ corresponds to selecting two intervals (one for starting and one for ending of the cube) on each axis.  Therefore, the cardinality is at most $M^{2d_\bX}$, i.e., $\mathsf{Cube}_\bX$ has polynomial discrimination $2d_\bX$. It follows from \cite[Lemma 4.14 and Theorem 4.10]{Wainwright2019High} that, for any $\varepsilon\ge 0$,
\begin{align*}
\bP\left[ \sup_{B\in\mathsf{Cube}_\bX} \left|\frac1{M-1}\sum_{m=2}^M\1_{B}(X_m) - \xi(B) \right| \ge  8 \sqrt{\frac{2d_\bX\ln(M)}{M-1}} + \varepsilon \right] \le \exp\left(-\frac{M-1}{2}\varepsilon^2\right).
\end{align*}
Finally, in view of Assumption \ref{hyp: data} (ii), we conclude the proof of Lemma \ref{lem:InfEmpConcen}. 
\end{proof}

In view of \eqref{eq:UBExpnSqIntWkNN2} and Lemma \ref{lem:InfEmpConcen}, for $\delta\in[0,1]$, we consider $\varepsilon\ge 0$ such that
$$\frac{k}{M-1} = \frac{\underline c\delta^{\frac12}}{2^{d_\bX}} - 8 \sqrt{\frac{2d_\bX\ln(M)}{M-1}} - \varepsilon.$$
Note that this is feasible only if $\frac{4^{d_\bX}}{\underline c^2}\left(8 \sqrt{\frac{2d_\bX\ln(M)}{M-1}} + \frac{k}{M-1}\right)^2\le 1$.\footnote{We do not include this condition in the statement of Theorem \ref{thm:ConcenkNNnew}, as the bound presented remains valid, albeit vacuous, if this condition is not met.} It follows that
\begin{align*}
&\bP\left[ \frac1{M-1}\sum_{m=2}^M\1_{B}(X_m) \le \frac{k}{M-1},\,\forall B\in\mathsf{Cube}_\bX^{\frac12\delta^{\frac1{2d_\bX}}} \right] \\
&\quad\le \begin{cases}
1, & \delta\in\left[0, \frac{4^{d_\bX}}{\underline c^2}\left(8 \sqrt{\frac{2d_\bX\ln(M)}{M-1}} + \frac{k}{M-1}\right)^2\right], \\
\exp\left(-\frac{M-1}{2}\left(\frac{\underline c\delta^{\frac12}}{2^{d_\bX}} - 8 \sqrt{\frac{2d_\bX\ln(M)}{M-1}} - \frac{k}{M-1}\right)^2\right), & \delta\in\left(\frac{2^{d_\bX}}{\underline c}\left(8 \sqrt{\frac{2d_\bX\ln(M)}{M-1}} + \frac{k}{M-1}\right)^2,1\right].
\end{cases}
\end{align*}
The above together with \eqref{eq:UBExpnSqIntWkNN2} implies 
\begin{align*}
&\frac12 M \esp{ \left( \int_\bX \cW\Big(\hat\mu^{\cD}_{\cN^{k,\cD_\bX}(x)}, \hat\mu^{\cD'}_{\cN^{k,\cD'_\bX}}(x)\Big) \nu(\dif x) \right)^2 } \\
&\quad\le \frac{2\overline C^2 M}{k^2} \Biggl( \frac{4^{d_\bX}}{\underline c^2}\left( 8\sqrt{\frac{2d_\bX\ln(M)}{M-1}} + \frac{k}{M-1}\right)^2 \\
&\qquad\qquad\qquad+ \int_{\frac{2^{d_\bX}}{\underline c}\left(8 \sqrt{\frac{2d_\bX\ln(M)}{M-1}} + \frac{k}{M-1}\right)}^1 \exp\left(-\frac{M-1}{2}\left(\frac{\underline c\eta}{2^{d_\bX}} - 8 \sqrt{\frac{2d_\bX\ln(M)}{M-1}} - \frac{k}{M-1}\right)^2\right)  2\eta\dif \eta \Bigg),
\end{align*}
where we have performed a change of variable $\eta=\delta^{\frac12}$ in the last line. Relating to exponential and normal density functions, we calculate the integral to obtain
\begin{align*}
&\frac12 M \esp{ \left( \int_\bX \cW\Big(\hat\mu^{\cD}_{\cN^{k,\cD_\bX}(x)}, \hat\mu^{\cD'}_{\cN^{k,\cD'_\bX}}(x)\Big) \nu(\dif x) \right)^2 } \\
&\quad \le \frac{2\overline C^2 M}{k^2} \frac{4^{d_\bX}}{\underline c^2}\Bigg( \left( 8\sqrt{\frac{2d_\bX\ln(M)}{M-1}} + \frac{k}{M-1}\right)^2  + \frac{\sqrt{2\pi}}{\sqrt{M-1}}\left(8 \sqrt{\frac{2d_\bX\ln(M)}{M-1}} + \frac{k}{M-1}\right) + \frac{4}{M-1} \Bigg),
\end{align*}
where we note the right hand side is of $O\left(\left(\frac{\sqrt{\ln(M)}}{k}+\frac{1}{\sqrt{M}}\right)^2 + \frac1k\left(\frac{\sqrt{\ln(M)}}{k}+\frac{1}{\sqrt{M}}\right) + \frac1{k^2}\right)$. Invoking Efron-Stein inequality, we conclude the proof.
\end{proof}

\subsection{Proof of Proposition \ref{prop:PThetanew}}\label{subsec:Pf:prop:PThetanew}
\begin{proof}[Proof of Proposition \ref{prop:PThetanew}]
By triangle inequality,
\begin{align*}
    &\esp{\int_\bX \cW(P_x,\PNNT_x) \ud x} \\
    &\quad\le \esp{\int_\bX \cW(P_x,P^\Theta_x) \left(\lambda_\bX-\frac1N\sum_{n=1}^{N}\delta_{\tilde X_n}\right)(\ud x)} + \esp{\frac1N\sum_{n=1}^{N} \cW(P_{\tilde X_n},\Pgen_{\tilde X_n})} + \esp{\frac1N\sum_{n=1}^{N} \cW(\Pgen_{\wt X_n},\PNNT_{\tilde X_n})}.
\end{align*}
Then, by Assumption \ref{hyp: kernel lip} and \eqref{eq:UBTrainLip}, 
\begin{align*}
    \esp{\int_\bX \cW(P_x,P^\Theta_x) \left(\lambda_\bX-\frac1N\sum_{n=1}^{N}\delta_{\tilde X_n}\right)(\ud x)} \le \esp{(L+L^\Theta)\cW\left(\lambda_\bX,\frac1N\sum_{n=1}^{N}\delta_{\tilde X_n}\right)}.
\end{align*}
In view of Assumption \ref{hyp:QueryPoints}, we have
\begin{align*}
    \esp{\frac1N\sum_{n=1}^{N} \cW(P_{\tilde X_n},\Pgen_{\tilde X_n})} = \frac1N\sum_{n=1}^{N} \esp{ \esp{ \cW(P_{\tilde X_n},\Pgen_{\tilde X_n})\Big| \tilde X_n} } = \esp{ \int_\bX \cW(P_x, \Pgen_x) \dif x }.
\end{align*}
Combining the above, we prove the first statement.

As for the second statement, consider $Q,Q':\bX\to\cP(\bY)$ that are Lipschitz-continuous with constants $L, L'$. Suppose that
\begin{align*}
\cW(Q_{x^*},Q'_{x^*}) = \sup_{x\in\bX} \cW(Q_x,Q'_x) = \delta 
\end{align*}
for some $\delta> 0$ and $x^*\in\bX$. This supremum is indeed attainable because $\bX$ is compact that $x\mapsto\cW(Q_x,Q'_x)$ is continuous. Consequently, by triangle inequality and the Lipschitz-continuity, we have
\begin{align*}
\cW(Q_x, Q'_{x}) & \ge \Big(\cW(Q_{x}, Q'_{x^*}) - \cW(Q'_{x}, Q'_{x^*})\Big) \vee 0 \ge \Big( \cW(Q_{x^*},Q'_{x^*}) -\cW(Q_{x^*},Q_x) - \cW(Q'_{x^*},Q'_{x})\Big) \vee 0\\
& \ge \Big(\delta - (L+L')\|x-x^*\|_\infty\Big) \vee 0, \quad x\in\bX.
\end{align*}
We may then lower bound $\int_\bX \cW(Q_x,Q'_x) \dif x$ with the volume of the cone on right hand side above (note the worst case is when $x^*=(0,0)$),
\begin{align*}
\int_\bX \cW(Q_x,Q'_x) \dif x \ge \int_0^{\delta} \left(\frac{\delta-z}{L+L'}\right)^{d_\bX} \dif z = \frac{\delta^{d_\bX+1}}{(d_\bX+1) (L+L')^{d_\bX}}.
% (1-\tau)\delta\;\lambda\left( \left[0, \frac{\tau\delta}{L+L'}\right]^{d_\bX}\right) = (1-\tau)\tau^{d_\bX} \frac{\delta^{d_\bX+1}}{(L+L')^{d_\bX}}.
\end{align*} 
It follows that
\begin{align*}
\sup_{x\in\bX} \cW(Q_x,Q'_x) \le (d_\bX+1)^{\frac{1}{d_\bX+1}}  (L+L')^{\frac{d_\bX}{d_\bX+1}} \left(\int_\bX \cW(Q_x,Q'_x) \dif x\right)^{\frac{1}{d_\bX+1}},
\end{align*}
which completes the proof.
\end{proof}

\section{Implementation details and ablation analysis}\label{sec:ImplD}

In this section, we will provide further implementation details and conduct ablation analysis of the components highlighted in Section \ref{subsec:ImplOverview}.

\subsection{Comparing ANNS-RBSP to exact NNS}\label{subsec:ANNSComparison}
Algorithm \ref{algo:RBSP} outlines a single slice of RBSP, which divides an array of $x$'s into two arrays of a random ratio along a random axis. Throughout the training, we execute RBSP $5$ times during each training epoch, yielding $2^5=32$ parts. Within each part, we then select a small batch of $8$ query points, locating the $k$ nearest neighbors for each query point within the same part. In Table \ref{tab:rbsp}, we compare the execution times of exact NNS and ANNS-RBSP. ANNS-RBSP offers considerable time savings for $M=10^6$, while exact NNS is more efficient for $M=10^5$ or fewer. 

\begin{algorithm}
\caption{Single slice of random binary space partitioning}
\footnotesize
\begin{algorithmic}[1]\label{algo:RBSP}
\renewcommand{\algorithmicrequire}{\textbf{Input:}}
\renewcommand{\algorithmicensure}{\textbf{Output:}}
\REQUIRE data $\mD_\bX=(x_i)_{i=1}^M\subset[0,1]^{d_\bX}$, arrays of indexes $\mS_d, d=1,\dots,d_{\bX}$ of length $M$ with the $j$-th entry indicating the position of the $j$-th smallest value in the $d$-th dimension of $\mD_{\bX}$, a boolean array $\mB$ of length $M$ with the $i$-th entry indicating whether $x_i$ is involved in the current slicing, a rectangle $R$ that bounds $x_i$'s involved in the current slicing, i.e., $R$ corresponds to $\mB$, a parameter $r_{\text{edge}}\in(1,\infty)$ for avoiding thin rectangles, an interval $[\underline p, \overline p]\in(0,1)$ for random bisecting ratio
\ENSURE two boolean arrays $\mB, \mB'$ of length $M$ indicating the bisected data, two bounding rectangles $R,R'$ that correspond to $\mB, \mB'$  
\STATE Randomly pick a dimension $d$
\IF{The edge ratio of $R$ exceeds $r_{\text{edge}}$}
\STATE Replace $d$ with that corresponds to the longest edge
\ENDIF
\STATE Rearrange $\mB$ according to $\mS_d$ by $\tilde\mB \leftarrow \mB[\cS_d]$
\STATE Pick out the indexes from $\mS_d$ involved in ANNS by $\tilde\mS_d \leftarrow \mS_d[\tilde\cB]$
\STATE Generate $p\sim\Uniform([\underline p, \overline p])$ and round $p$ into $\tilde p$ so that $\tilde p\,\text{\textsf{len}}(\tilde\mS_d)$ is an integer 
\STATE Bisect $\tilde\mS_d$ in two arrays with length $\tilde p\,\text{\textsf{len}}(\tilde\mS_d)$ and $(1-\tilde p)\text{\textsf{len}}(\tilde\mS_d)$, denoted by $\tilde\mS_d$ and $\tilde\mS_d'$
\STATE Form new bounding rectangles $R, R'$ using $\tilde\mS_d,\tilde\mS_d',\mD_\bX$ and the original $R$ (may enforce some overlap here)\\[-.3em]
\STATE Initialize two boolean arrays $\mB,\mB'$ with length $M$ and all entries being \textsf{False}
\STATE $\mB[\tilde\mS_d]\leftarrow\text{\textsf{True}}$, $\mB'[\tilde\mS_d']\leftarrow\text{\textsf{True}}$
\RETURN $\mB, \mB', R, R'$ 
\end{algorithmic} 
\end{algorithm}

\begin{table}[h!]
\centering
\begin{tabular}{ |c|c|c|c|  }
%\hline
%\multicolumn{4}{|c|}{Execution time of 100 iterations} \\
\hline
&  $M=10^4$ & $M=10^5$ & $M=10^6$\\
\hline
$d_{\bX}=1$   &    $(\bm{0.03},2.2)$   &  $(\bm{0.5},2.5)$ & $(9.4,\bm{2.7})$ \\
\hline
$d_{\bX}=3$   &  $(\bm{0.04},2.2)$   &  $(\bm{0.6},2.7)$ & $(12.8,\bm{2.8})$ \\
\hline
$d_{\bX}=10$   &   $(\bm{0.07},2.2)$   &  $(\bm{0.8},2.7)$ & $(26.8,\bm{3.4})$ \\
\hline
\end{tabular}
\caption{Execution times of two NNS methods.}
\label{tab:rbsp}
\medskip
\small
This table compares the execution times for 100 runs of exact NNS versus ANNS-RBSP, both utilizing parallel computing, facilitated by PyTorch, with an NVIDIA L4 GPU. Each iteration (approximately) finds the 300 nearest neighbors from $M$ samples for all of 256 randomly generated query points. The values within each parenthesis denote the seconds consumed by both methods, with the first number corresponding to exact NNS. For faster processing, exact NNS employs a 3D tensor, except in the case of $M=10^6, d_{\bX}=10$, where memory limitations necessitate using a for-loop over individual query points. ANNS-RBSP regenerates a new partition each run. The table does not include the time required to sort the data along all dimensions, which takes about $0.2$ seconds in the worst case and is not repeatedly executed.
\end{table}

It's important to note that ANNS-RBSP may introduce additional errors by inaccurately including points that are not within the $k$ nearest neighbors. As elucidated in the proof of Theorem \ref{thm:ExpectedRatekNN}, the magnitude of this induced error can be understood by comparing the excessive distance incurred to that of exact NNS. For simplicity, we investigate the difference below 
\begin{align*}
\Delta := \frac{1}{N_{\text{batch}}}\sum_{i=1}^{N_{\text{batch}}}\left(\frac{1}{k}\sum_{j=1}^k \left\|\check X'_{ij}-\tilde X_i\right\|_1 - \frac{1}{k}\sum_{j=1}^k \left\|\check X_{ij}-\tilde X_i\right\|_1\right),
\end{align*}
where $\tilde X_i$'s are query points, and $\check X_{ij}, \check X'_{ij}$ are the $k$-nearest-neighbor identified by exact NNS and ANNS-RBSP, respectively. In our experiments, we evaluated scenarios with $d_\bX=3,10$ and $k=300$. Regarding the data, we generated $M=10^4,10^5,10^6$ samples from $\Uniform([0,1]^{d_\bX})$. Once the data set is generated, we fixed the data and conducted 100 simulations of $\Delta$, each with $N_{\text{batch}}=256$ query points. This process was repeated 10 times, each with a separately generated data. The results are illustrated in Figure \ref{fig:DiffANNS}. It is expected that $\Delta$ will approach $0$ as the sample size $M$ tends to infinity. The convergence rate is likely influenced by factors such as $d_\bX$, $k$, and $N_{\text{batch}}$. Further analysis of the convergence of ANNS-RBSP will be conducted in future studies.

\begin{figure}[htbp]
\centering
\begin{subfigure}[t]{0.4\linewidth}
\centering
\makebox[10pt]{\raisebox{70pt}{\rotatebox[origin=c]{90}{$10^5 samples$}}}
\includegraphics[width=0.9\textwidth]{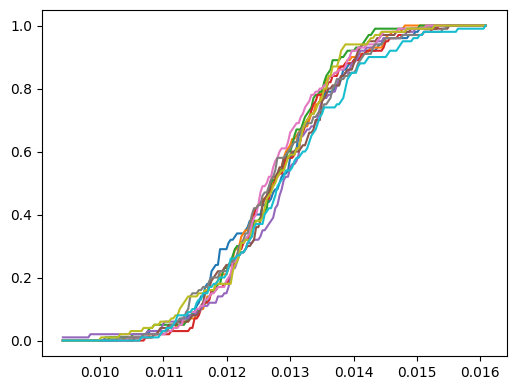} \\
\makebox[10pt]{\raisebox{50pt}{\rotatebox[origin=c]{90}{$10^6 samples$}}}
\includegraphics[width=0.9\textwidth]{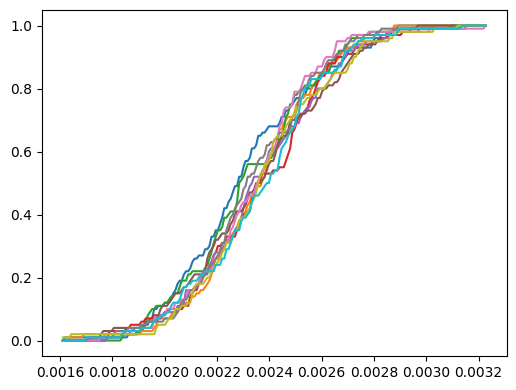}
\caption{$d_{\bX}=3$}
\end{subfigure}
\begin{subfigure}[t]{0.4\linewidth}
\centering
\includegraphics[width=0.9\textwidth]{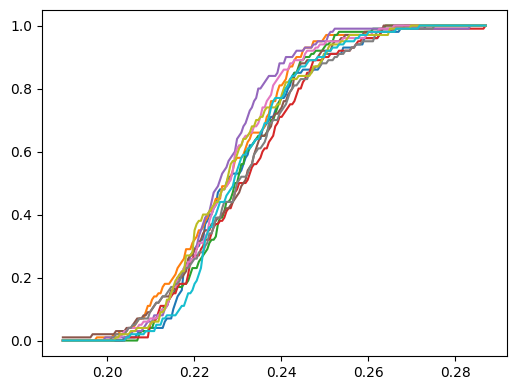} 
\includegraphics[width=0.9\textwidth]{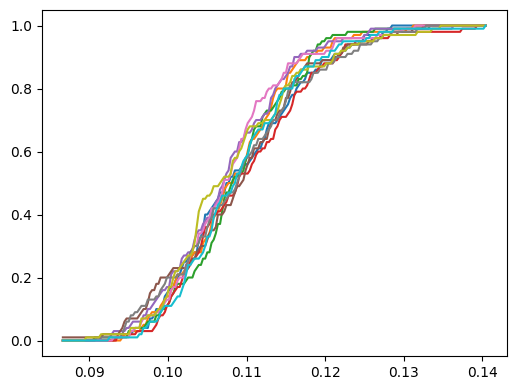}
\caption{$d_{\bX}=10$}
\end{subfigure}
\caption{Empirical CDFs of $\Delta$.}
\label{fig:DiffANNS}
\medskip
\small
We compare the empirical CDFs of $\Delta$. Each line corresponding to a independently generated set of data. Each plot includes 10 empirical CDFs. Note the difference in the $x$ axis scale.
\end{figure}

\subsection{An implementation of the Sinkhorn algorithm}\label{subsec:Sinkhorn}
In this section, we will detail our implementation of the Sinkhorn algorithm and highlight a few novel treatments that seem to enhance the training of the neural estimator. While the mechanisms  are not yet fully understood, they constitute important improvement in the accuracy of the neural estimator.

Let us first recall the iterative procedure involved in the Sinkhorn algorithm. We follow the setup in Section \ref{subsubsec:W}. In particular, the row indexes of the cost matrix stand for atoms in the empirical measures, while the column indexes stand for atoms produced by the neural estimator. We set $N_{\text{atom}}=k$ and let $\mathsf{u}^{(0)}, \mathsf{v}^{(0)}$ be column vectors of size $k$ with all entries being $k^{-1}$. We will suppress the dependence on $y$ from the notation. Upon setting
\begin{align}\label{eq:DefK}
\mK^\epsilon := \exp\left(-\frac{\mC}{\epsilon}\right)
\end{align}
with entry-wise exponential, the Sinkhorn algorithm performs repeatedly
\begin{align}\label{eq:SinkhornIter}
\mathsf{u}^{(\ell+1)} = \frac{\mathsf{u}^{(0)}}{\mK^\epsilon \mathsf{v}^{(\ell)}} \quad \text{and} \quad \mathsf{v}^{(\ell+1)} = \frac{\mathsf{v}^{(0)}}{(\mK^\epsilon)^{\top} \mathsf{u}^{(\ell+1)}},
\end{align}
where the division is also calculated entry-wise. After a certain number of iterations, denoted as $N_{\text{iter}}$, we obtain an approximate optimal transport plan for problem \eqref{eq:RegDOptTrans}:
\begin{align*}
\mT^\epsilon = \diag(\mathsf{u}^{(N_{\text{iter}})}) \mK^\epsilon \diag(\mathsf{v}^{(N_{\text{iter}})}).
\end{align*}
Let us set $\epsilon=1$ momentarily. Note that if the entries of $\mC$ are excessively large, $\mK$ effectively becomes a zero matrix, which impedes the computations in \eqref{eq:SinkhornIter}. This issue may occur at the initiation of the neural estimator or during training, possibly due to the use of stochastic gradient descent. To tackle this issue, we employ a rule-of-thump normalization that
\begin{align}\label{eq:Ktilde}
\tilde\mK^\epsilon := \exp\left(-\frac{\mC}{\tilde c\epsilon}\right) \quad\text{with}\quad \tilde c := \min_{i}\max_{j} \mC_{ij},
\end{align}
and use $\tilde\mK^\epsilon$ instead of $\mK^\epsilon$ in \eqref{eq:SinkhornIter}. Regarding the selection of $\epsilon$ and the number of iterations, we currently lack a method for adaptively determining these values. Instead, we adjust them manually based on training episodes. This manual adjustment works well for all models discussed in this paper. For more information, please see Appendix \ref{sec:Config}.

As alluded in Section \ref{subsubsec:W}, we enforce sparsity on the transport plan to improve the performance of the neural estimator. Let $\tilde\mT^\epsilon$ be the output of the Sinkhorn algorithm. We construct $\hat\mT^\epsilon$ and $\check\mT^\epsilon$ 
by setting the row-wise and column-wise maximum of $\tilde\mT^\epsilon$ to $k^{-1}$, respectively, and setting the remaining entries to $0$. We then use 
\begin{align}\label{eq:Tbar}
\overline\mT^\epsilon = \gamma \hat\mT^\epsilon + (1-\gamma) \check\mT^\epsilon,
\end{align}
where $\gamma\in[0,1]$ is a hyper-parameter, in gradient descent \eqref{eq:DphiDy}. We observe that $\hat\mT^\epsilon$ relates each atom in the empirical measure to a single corresponding atom from the neural estimator, and $\check\mT^\epsilon$ does the same in reverse. The optimal choice of $\gamma$ remains an open question, though we have set $\gamma=0.5$ in all three models.

Next, we explore the impact of enforcing sparsity and varying the choices of $\gamma$. Figure \ref{fig:1DSparse} compares the performance in Model 1 under different sparsity parameters. When no sparsity is enforced, the neural estimator tend to handles singularities more adeptly, but may overlooks points located on the periphery of the empirical joint distribution, potentially resulting in overly concentrated atoms from the neural estimator (see around $x=0.1, 0.9$). Compare Figure \ref{fig:1DAllW} and \ref{fig:1DAllWNoSparsity} for the extra error due to the lack of enforced sparsity. This phenomenon is more noticeable in Model 3. We refer to panel (2,3) of Figure \ref{fig:3DCondCDFNosparse} in Appendix \ref{sec:MorePlots} for an example. Moreover, Figure \ref{fig:HistProjWNoSparsity}, which is obtained without enforced sparsity, indicates a downgrade in accuracy when compared to Figure \ref{fig:HistProjW}. 

\begin{figure}[htbp]
\centering
\begin{subfigure}[t]{0.24\linewidth}
\centering
% \makebox[1pt]{\raisebox{50pt}{\rotatebox[origin=c]{90}{Data}}}
\includegraphics[width=0.90\textwidth]{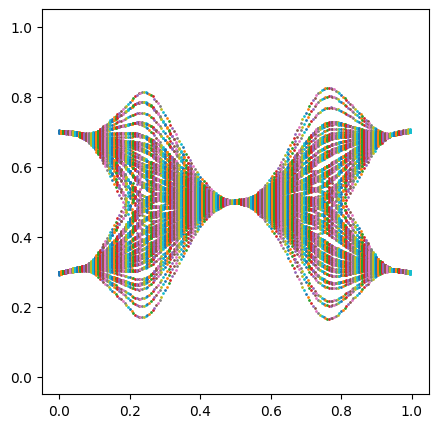}
\caption{No sparsity enforced}
\end{subfigure}
\begin{subfigure}[t]{0.24\linewidth}
\centering
\includegraphics[width=0.90\textwidth]{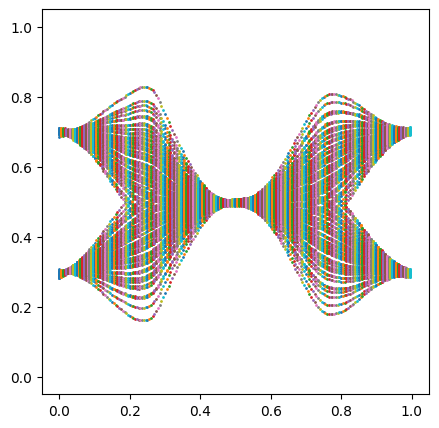}
\caption{$\gamma=0$}
\end{subfigure}
\begin{subfigure}[t]{0.24\linewidth}
\centering
\includegraphics[width=0.90\textwidth]{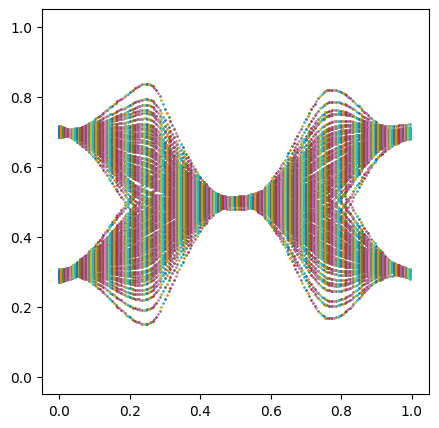}
\caption{$\gamma=0.5$}
\end{subfigure}
\begin{subfigure}[t]{0.24\linewidth}
\centering
\includegraphics[width=0.90\textwidth]{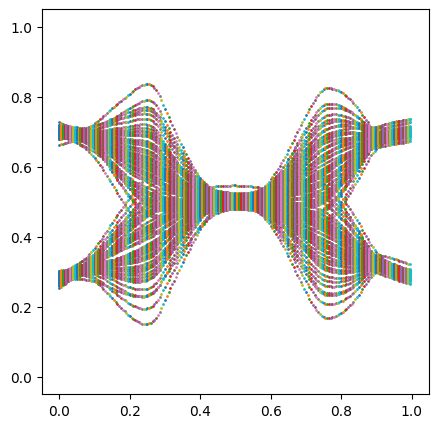}
\caption{$\gamma=1$}
\end{subfigure}
\caption{LipNet under Model 1 with different sparsity enforcement.}
\label{fig:1DSparse}
\end{figure}

\begin{figure}[!htbp]
\centering
\begin{subfigure}[t]{1\linewidth}
\centering
\includegraphics[width=0.80\textwidth]{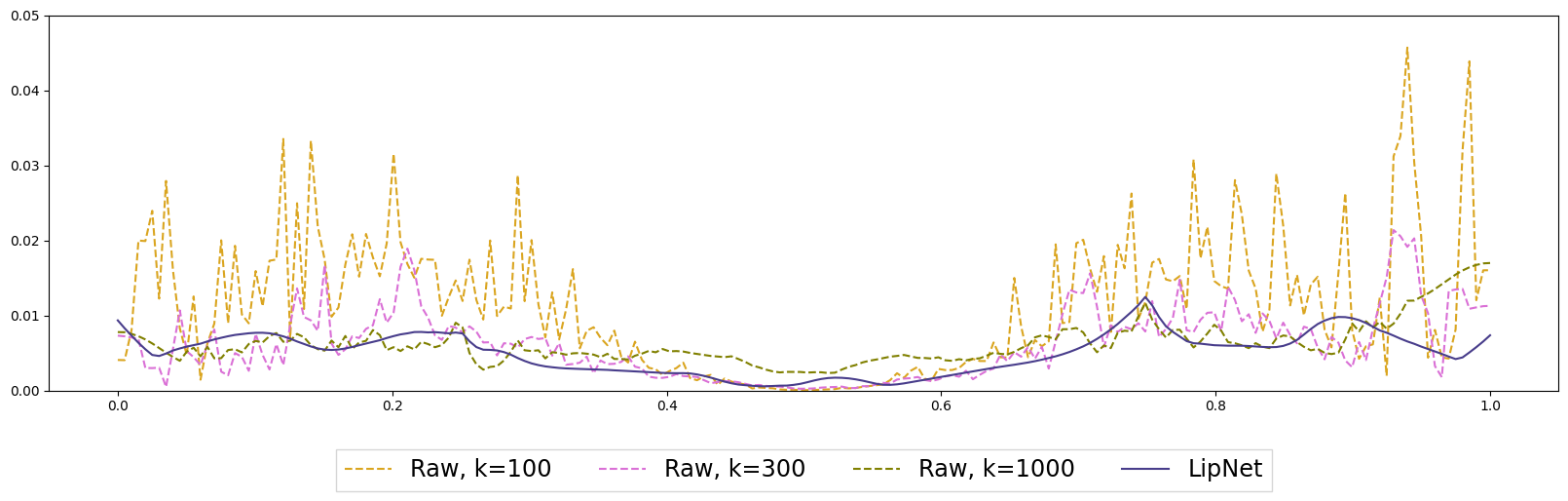}
\end{subfigure}
\caption{\centering Errors at different $x$'s of various estimators under Model 1, LipNet is trained without enforced sparsity.}
\label{fig:1DAllWNoSparsity}
\medskip
\small
We compute the $\cW$-distance between estimators and the true conditional distribution at different $x$'s. The setting is similar to Figure \ref{fig:1DAllW}, but LipNet is trained without enforcing sparsity on the transport plan. The errors of LipNet at around $x=0.1,0.9$ are slightly higher than those in Figure \ref{fig:1DAllW}.
\end{figure}

Finally, it is not recommended to use $\overline\mT^\epsilon$ at the early stages of training, as our empirical experiments suggest this could deteriorates performance. In training, we start by not enforcing sparsity and then begin to enforce it in later episodes. We refer to Appendix \ref{sec:Config} for further details of the training configuration.

\subsection{More on LipNet}\label{subsec:LipNet}

We will investigates the impact of various hyper-parameters on the performance of LipNet. The LipNets presented in this section are trained with the same hyper-parameters as in Section \ref{subsec:Experiments} (see also Appendix \ref{sec:Config}), expect for those specified otherwise.

\subsubsection{Activation function}\label{subsubsec:Actvn}
Switching the activation function from ELU to Rectified Linear Unit (ReLU) appears to retain the adaptive continuity property.  In Figure \ref{fig:1DActvn}, we illustrate the joint distribution and the average absolute derivatives of all atoms of LipNet with ReLU activation. The outcomes are on par with those achieved using ELU activation as shown in Figure \ref{fig:1D}. %, albeit with somewhat worse accuracy in Model 1 and less smooth derivatives. It is curious that, in Model 2, LipNet with ReLU activation seems to be less responsive to the jump at $x=0.5$.

\begin{figure}[htbp]
\centering
\begin{subfigure}[t]{0.24\linewidth}
\centering
% \makebox[1pt]{\raisebox{50pt}{\rotatebox[origin=c]{90}{Data}}}
\includegraphics[width=0.90\textwidth]{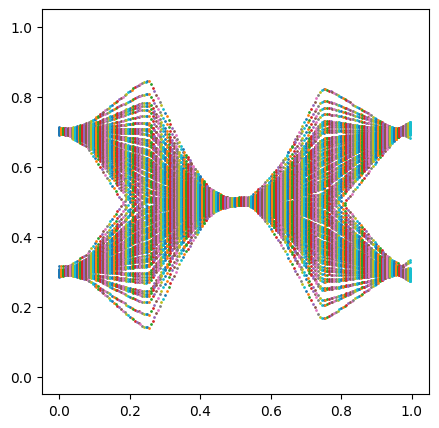}
\caption{Model 1, all atoms}
\end{subfigure}
\begin{subfigure}[t]{0.24\linewidth}
\centering
\includegraphics[width=0.90\textwidth]{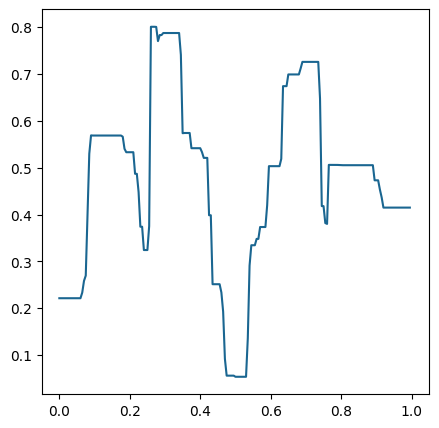}
\caption{Model 2, ave.\! abs.\! der.}
\end{subfigure}
\begin{subfigure}[t]{0.24\linewidth}
\centering
\includegraphics[width=0.90\textwidth]{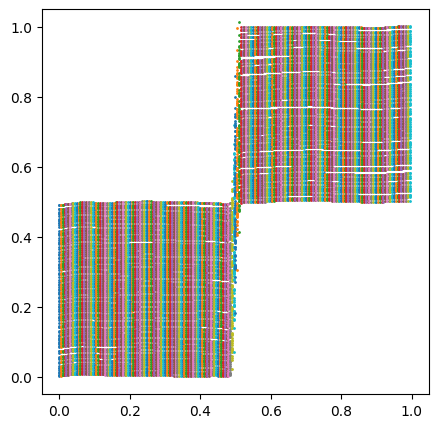}
\caption{Model 2, all atoms}
\end{subfigure}
\begin{subfigure}[t]{0.24\linewidth}
\centering
\includegraphics[width=0.90\textwidth]{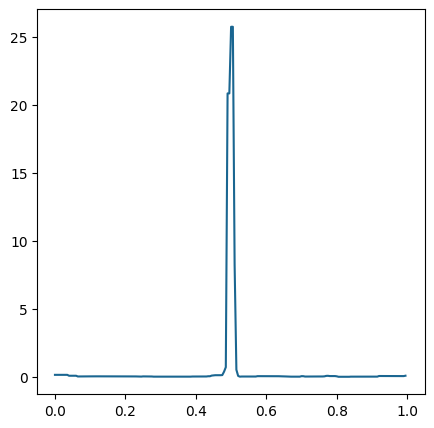}
\caption{Model 2, ave.\! abs.\! der.}
\end{subfigure}
\caption{LipNet under Model 1 with ReLU activation.}
\label{fig:1DActvn}
\end{figure}

\subsubsection{Value of $L$ in (\ref{eq:OutputLayer})}\label{subsubsec:L}
Note that the LipNets discussed in Section \ref{subsec:Experiments} were trained with $L=0.1$. If the normalizing constants in LipNet are exactly computed, $L$ reflects the Lipschitz constant of LipNet, upto the discrepancy in the choice of norms in different layers.  The effect of $L$ in our implementation, however, is rather obscure. Figure \ref{fig:1DL} showcases the performance of LipNets across various $L$ values in Model 1. The comparison in Model 2 is presented in Figure \ref{fig:1DLModel2} in Appendix \ref{sec:MorePlots}. The best choice of $L$ appears to depend on the ground truth model. For Model 3, we compared the performance of $L=0.1$ and $L=1$ and observed no significant differences. Generally, we prefer a smaller $L$; however, smaller values of $L$ tend to exhibit greater sensitivity to other training parameters. For instance, in Model 3, with $L=0.1$, starting enforcing sparsity too soon leads to significantly poorer performance, while the impact on the outcomes for $L=1$ is much less noticeable.
\begin{figure}[htbp]
\centering
\begin{subfigure}[t]{0.24\linewidth}
\centering
\makebox[1pt]{\raisebox{50pt}{\rotatebox[origin=c]{90}{All atoms scattered}}}
\includegraphics[width=0.90\textwidth]{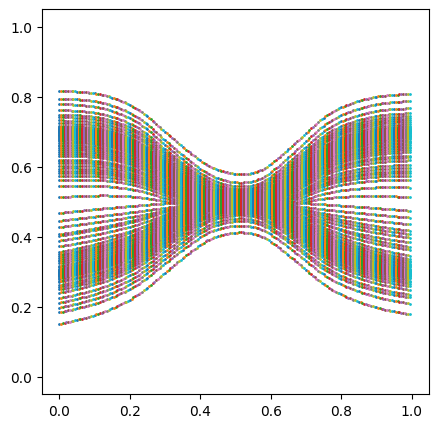}\\
\makebox[1pt]{\raisebox{50pt}{\rotatebox[origin=c]{90}{Traj.\! of 20 atoms}}}
\includegraphics[width=0.90\textwidth]{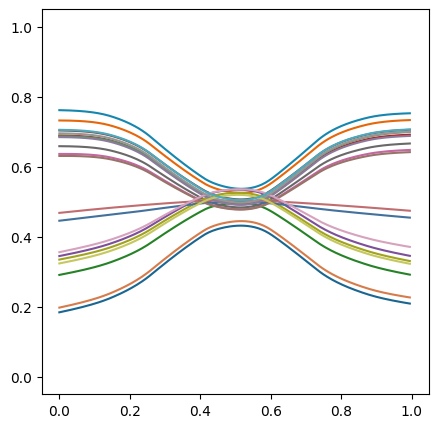}
\caption{$L=0.01$}
\end{subfigure}
\begin{subfigure}[t]{0.24\linewidth}
\centering
\includegraphics[width=0.90\textwidth]{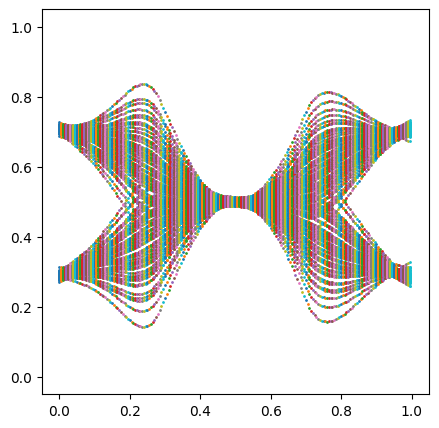}
\includegraphics[width=0.90\textwidth]{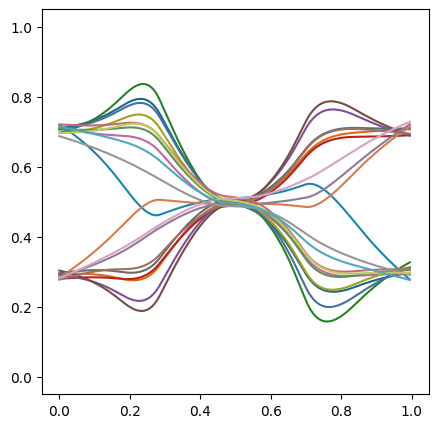}
\caption{$L=0.03$}
\end{subfigure}
\begin{subfigure}[t]{0.24\linewidth}
\centering
\includegraphics[width=0.90\textwidth]{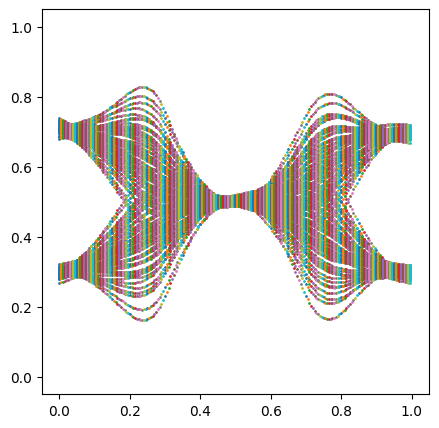}
\includegraphics[width=0.90\textwidth]{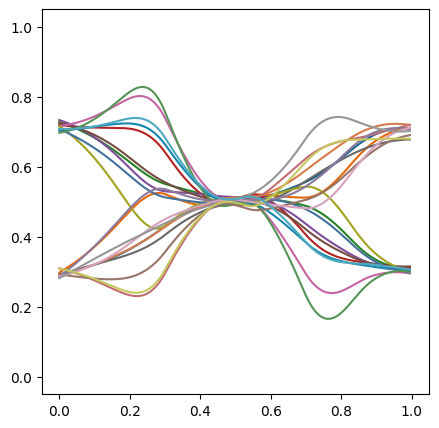}
\caption{$L=1$}
\end{subfigure}
\begin{subfigure}[t]{0.24\linewidth}
\centering
\includegraphics[width=0.90\textwidth]{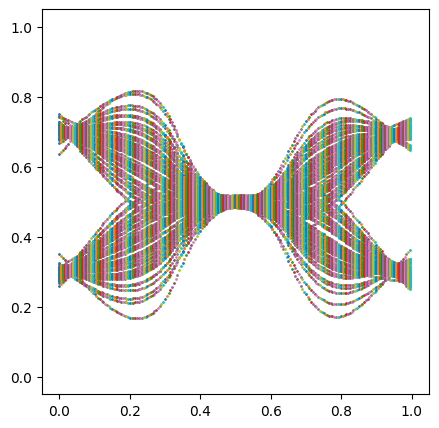}
\includegraphics[width=0.90\textwidth]{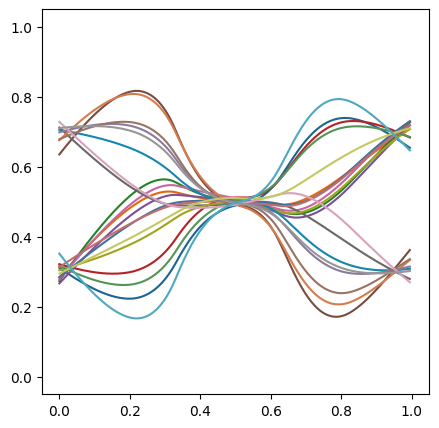}
\caption{$L=3$}
\end{subfigure}
\caption{LipNet under Model 1 with various $L$'s.}
\label{fig:1DL}
\end{figure}

\subsubsection{Momentum $\tau$ in Algorithm \ref{algo:PwrIter}}\label{subsubsec:tau}
In our training of LipNet, we use $\tau=10^{-3}$. Figure \ref{fig:1Dtau} demonstrates the impact of various $\tau$ values on neural estimator's performance in Model 1. It is clear that the performance declines with a $\tau$ that is too large. While we initially speculated that a smaller $\tau$ might cause atoms to exhibit more erratic movements as $x$ changes, observations contradict this hypothesis. We now believe that a suitable $\tau$ value helps prevent neurons from stagnating in the plateau region of the ELU activation function. This is supported by the outcomes observed with $\tau=10^{-6}$, where atom movements are overly simplistic. Additional comparisons in Model 2 are presented in Figure \ref{fig:1Dtau2}.
\begin{figure}[htbp]
\centering
\begin{subfigure}[t]{0.24\linewidth}
\centering
\makebox[1pt]{\raisebox{50pt}{\rotatebox[origin=c]{90}{All atoms scattered}}}
\includegraphics[width=0.90\textwidth]{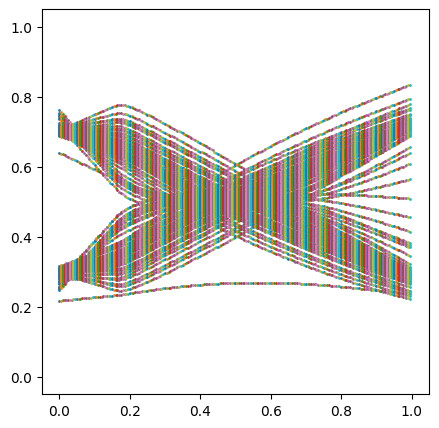}\\
\makebox[1pt]{\raisebox{50pt}{\rotatebox[origin=c]{90}{Traj.\! of 20 atoms}}}
\includegraphics[width=0.90\textwidth]{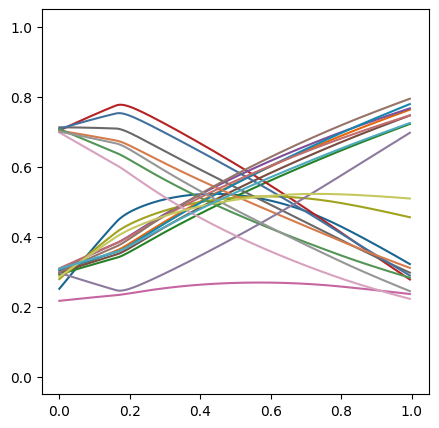}
\caption{$\tau=10^{-1}$}
\end{subfigure}
\begin{subfigure}[t]{0.24\linewidth}
\centering
\includegraphics[width=0.90\textwidth]{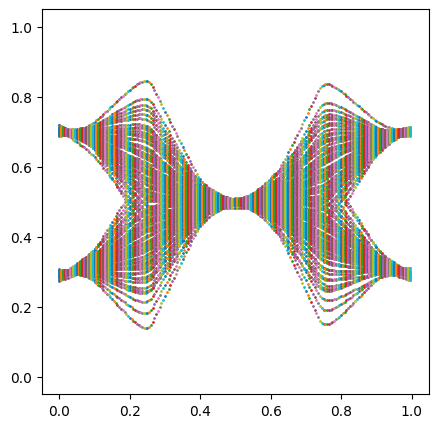}
\includegraphics[width=0.90\textwidth]{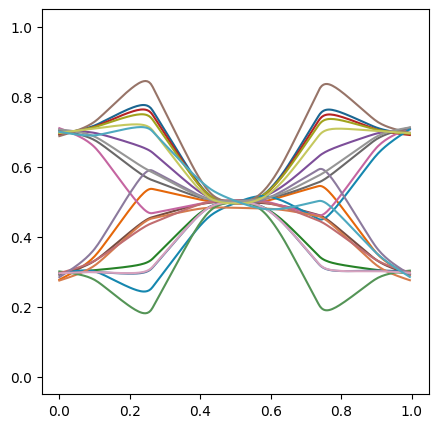}
\caption{$\tau=10^{-2}$}
\end{subfigure}
\begin{subfigure}[t]{0.24\linewidth}
\centering
\includegraphics[width=0.90\textwidth]{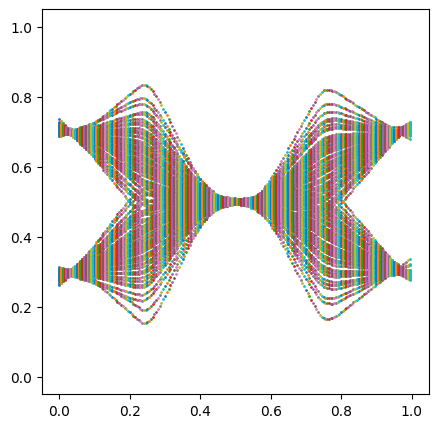}
\includegraphics[width=0.90\textwidth]{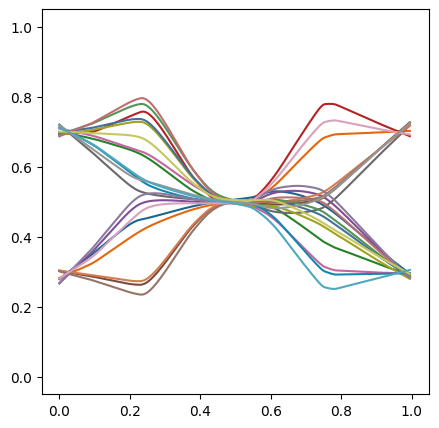}
\caption{$\tau=10^{-5}$}
\end{subfigure}
\begin{subfigure}[t]{0.24\linewidth}
\centering
\includegraphics[width=0.90\textwidth]{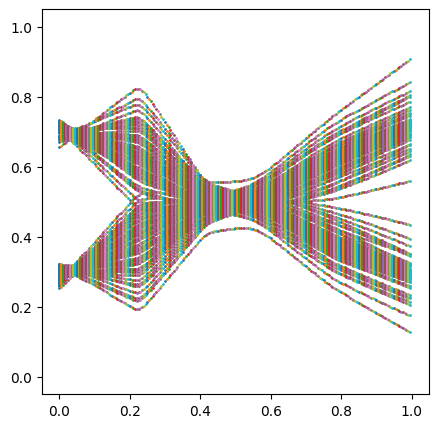}
\includegraphics[width=0.90\textwidth]{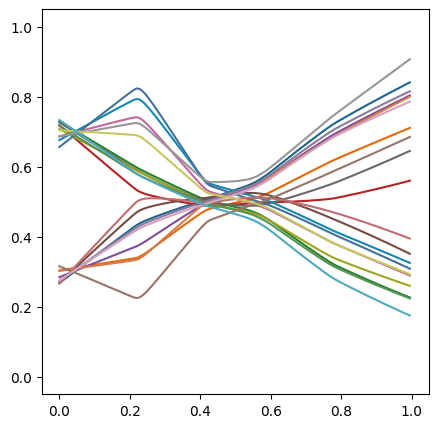}
\caption{$\tau=10^{-6}$}
\end{subfigure}
\caption{LipNet under Model 1 with various $\tau$'s.}
\label{fig:1Dtau}
\end{figure}

Despite considering as a potential improvement the inclusion of batch normalization in the convex potential layer \eqref{eq:CPLayer}, right after the affine transformation, along with a corresponding offset in the position of $\|\mW\|_2$, our experiments with both ELU and ReLU activations, using the default batch normalization momentum of $0.1$, resulted in reduced performance. Lowering said batch normalization momentum often leads to a \textsf{NaN} network.

\section{Weakness and potential improvement}\label{sec:Weakness}
In this section, we provide some discussion on the weakness and possible improvement of our implementation in Section \ref{sec:ImplNN}.

\textbf{Extra correction.}
In more realistic scenarios, the true conditional distribution is often unknown or intractable. In such cases, it is unclear whether a neural estimator offers extra correction over raw estimators. A potential solution to this issue is to train StdNet and LipNet simultaneously. If StdNet and LipNet align more closely with each other than with the raw estimator involved in their training, it is possible that the neural estimators are providing extra corrections.

\textbf{Hyper-parameters for Sinkhorn algorithm.} Our implementation of the Sinkhorn algorithm involves several hyper-parameters: (i) $k$ in Definition \ref{def:kNN}; (ii) $N_{\text{atom}}$ in \eqref{eq:LagrangianDisc}; (iii) $\epsilon$ in \eqref{eq:Ktilde}; (iv) $\gamma$ in \eqref{eq:Tbar}; and (v) additional hyper-parameters listed in Section \ref{sec:Config}. The impact of these hyper-parameters is not yet fully understood. Additionally, an adaptive $\epsilon$ that balances the accuracy and stability of the Sinkhorn iteration is desirable. Furthermore, as illustrated in Section \ref{subsec:Sinkhorn}, enforcing sparsity on the transport plan generally yields better approximations at $x$ where the conditional distribution is more diffusive, but may performs worse where the conditional distribution exhibits atoms. This observation motivates further investigation into a sparsity policy that adjusts according to the indications from the raw estimator.

\textbf{Adaptive continuity.} The impact of hyper-parameters in LipNet also warrants further investigation. In addition, despite the results presented in this study, more evidence is needed to understand how LipNet and its variations perform under various conditions.

\textbf{Scalability.} While the implementation produces satisfactory results when $M$ and $k$ are relatively small (recall that we set $N_{\text{atom}}=k$), our further experiments indicate a scalability bottleneck. For example, in Model 1, significantly increasing $M$ and $k$ does not necessarily improve the performance of neural estimators in a comparable manner. To address this issue, we could experiment with varying the ratios between $N_{\text{atoms}}$ and $k$, rather than setting them equal, in hopes of reducing the strain on the Sinkhorn algorithm. We note that varying the ratio between $N_{\text{atoms}}$ and $k$ requires adjusting the enforced sparsity accordingly. Another issue relates to the dimensions of $\bX$ and $\bY$. In view of the curse of dimensionality in Theorem \ref{thm:ExpectedRatekNN}, our method is inherently suited for low-dimensional settings. Fortunately, in many practical scenarios, the data exhibits low-dimensional structures, such as: (i) the sampling distribution of $X$ concentrating on a low-dimensional manifold; and (ii) the mapping $x \mapsto P_x$ exhibiting low-dimensional dependence. For (i), we might resort to dimension reduction techniques, although an extension of the results in Section \ref{sec:Theory} has yet to be established. For (ii), a data-driven method that effectively leverages the low-dimensional dependence is of significant interest.

\textbf{Conditional generative models.} Utilizing a conditional generative model could potentially lead to further improvements. One advantage of conditional generative models is the ease of incorporating various training objectives. For instance, it can easily adapt to the training objectives in \eqref{eq:DefObj} to accommodate multiple different hyper-parameters simultaneously. We may also incorporate the joint empirical measure in the training process. This flexibility also allows for the integration of specific tail conditions as needed. 

Lastly, we would like to point out an issue observed in our preliminary experiments when utilizing a na\"ive conditional generative model: it may assign excessive probability mass to the blank region between two distinct clusters (for example, in Model 1 around $(x,y)=(0.1,0.5)$). This possibly stems from the inherent continuity of neural networks. One possible solution is to consider using a mixture of multiple conditional generative models.

\newpage

\appendix

\section{Additional plots}\label{sec:MorePlots}

\begin{figure}[H] %[htbp]
\centering
\begin{subfigure}[H]{1\linewidth}
\centering
\includegraphics[width=0.90\textwidth]{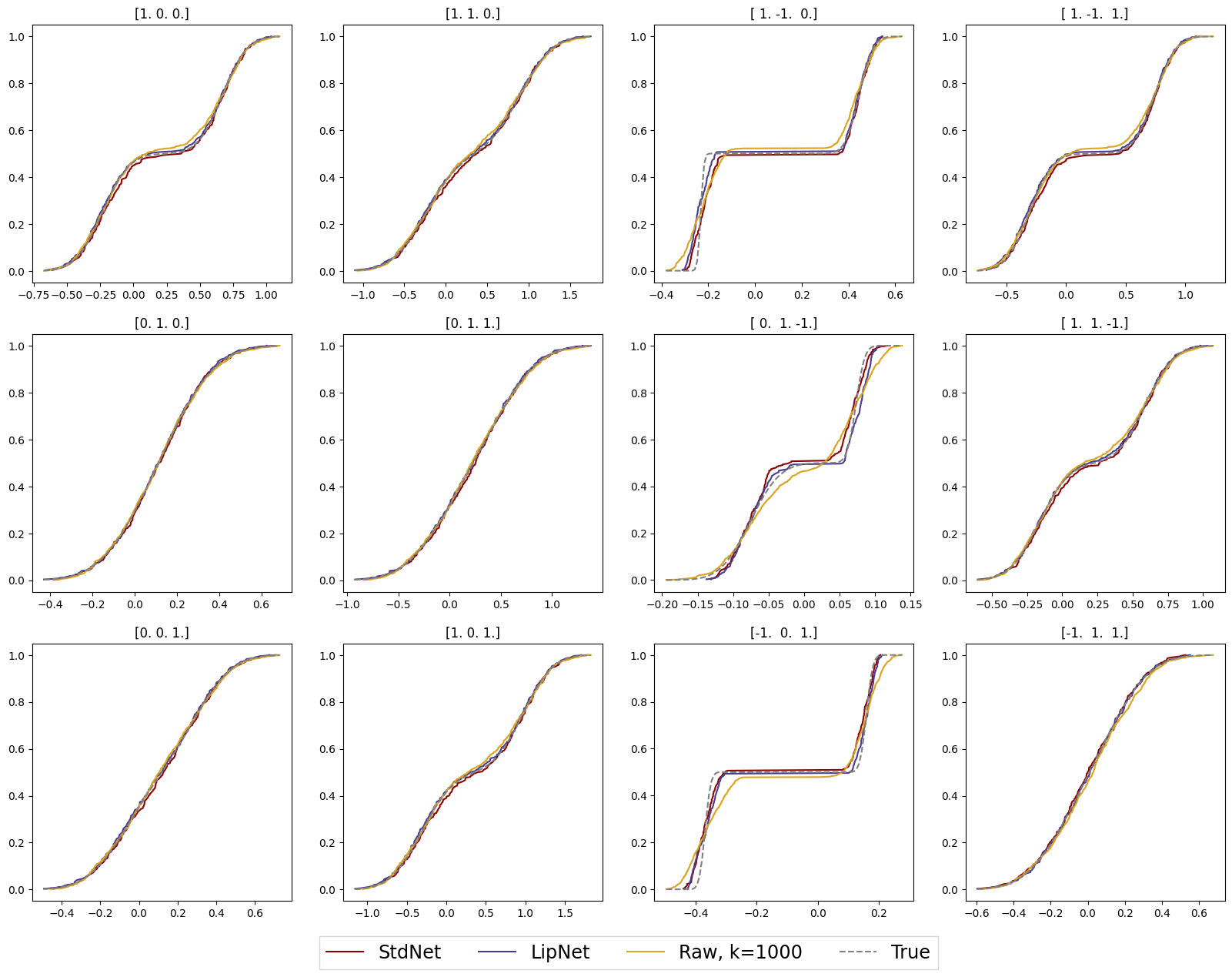}
\end{subfigure}

\caption{\centering Various estimators under Model 3, projections of conditional CDFs, $k=1000$ for $k$-NN estimator.}
\label{fig:3DCondCDFk1e3}
\medskip
\small
This follows the setting of Figure \ref{fig:3DCondCDF}, expect that we set $k=1000$ for the $k$-NN estimator plotted here. The neural estimator is trained under the same setting as that in Figure \ref{fig:3DCondCDF}  Plot titles display the vectors used for projection. Note the difference in the $x$ axis scale. 
\end{figure}

\begin{figure}[htbp]
\centering
\begin{subfigure}[t]{1\linewidth}
\centering
\includegraphics[width=0.90\textwidth]{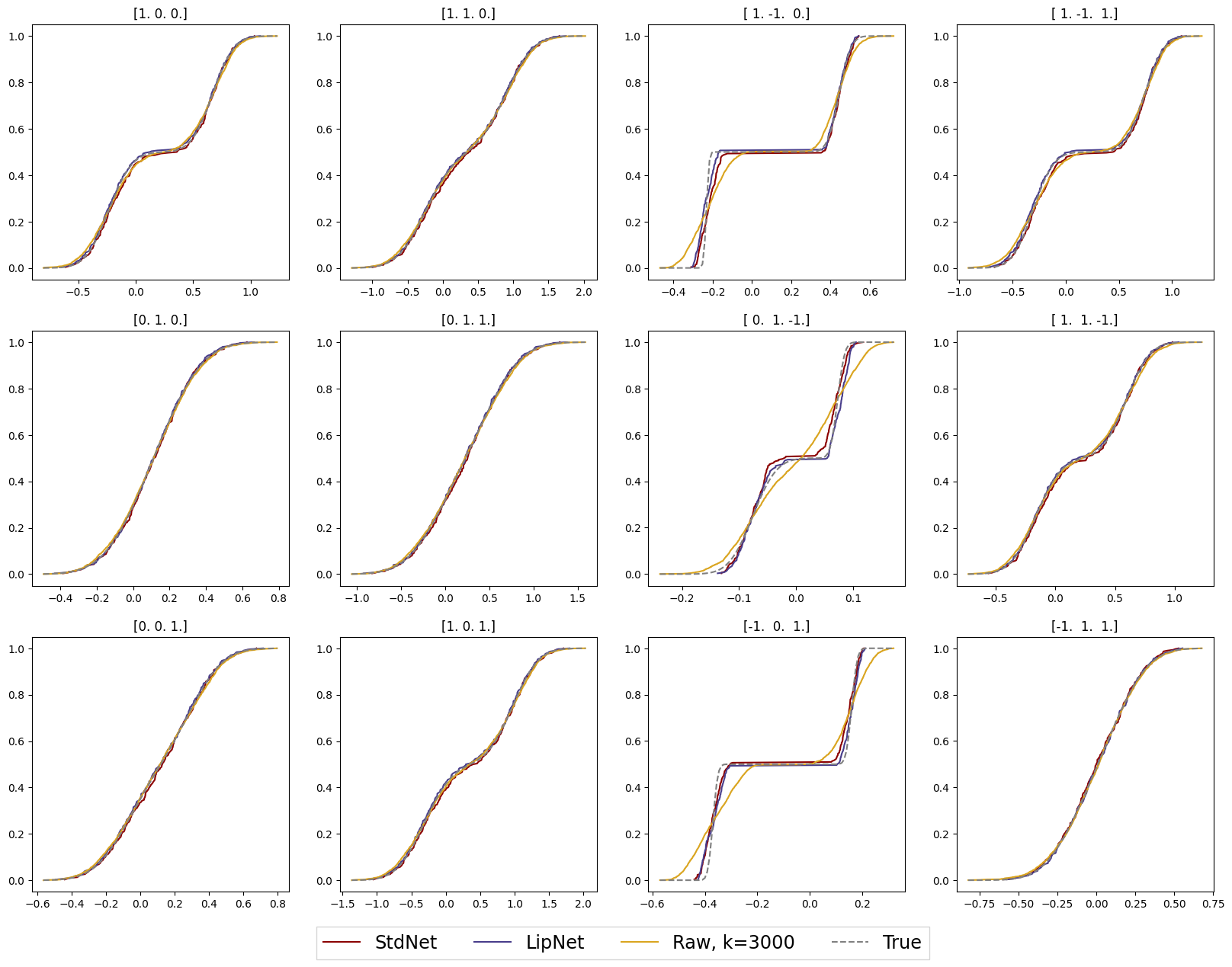}
\end{subfigure}

\caption{\centering Various estimators under Model 3, projections of conditional CDFs, $k=3000$ for $k$-NN estimator.}
\label{fig:3DCondCDFk3e3}
\medskip
\small
This follows the setting of Figure \ref{fig:3DCondCDF}, expect that we set $k=3000$ for the $k$-NN estimator plotted here. The neural estimator is trained under the same setting as that in Figure \ref{fig:3DCondCDF}  Plot titles display the vectors used for projection. Note the difference in the $x$ axis scale. 
\end{figure}

\begin{figure}[htbp]
\centering
\begin{subfigure}[t]{1\linewidth}
\centering
\includegraphics[width=0.90\textwidth]{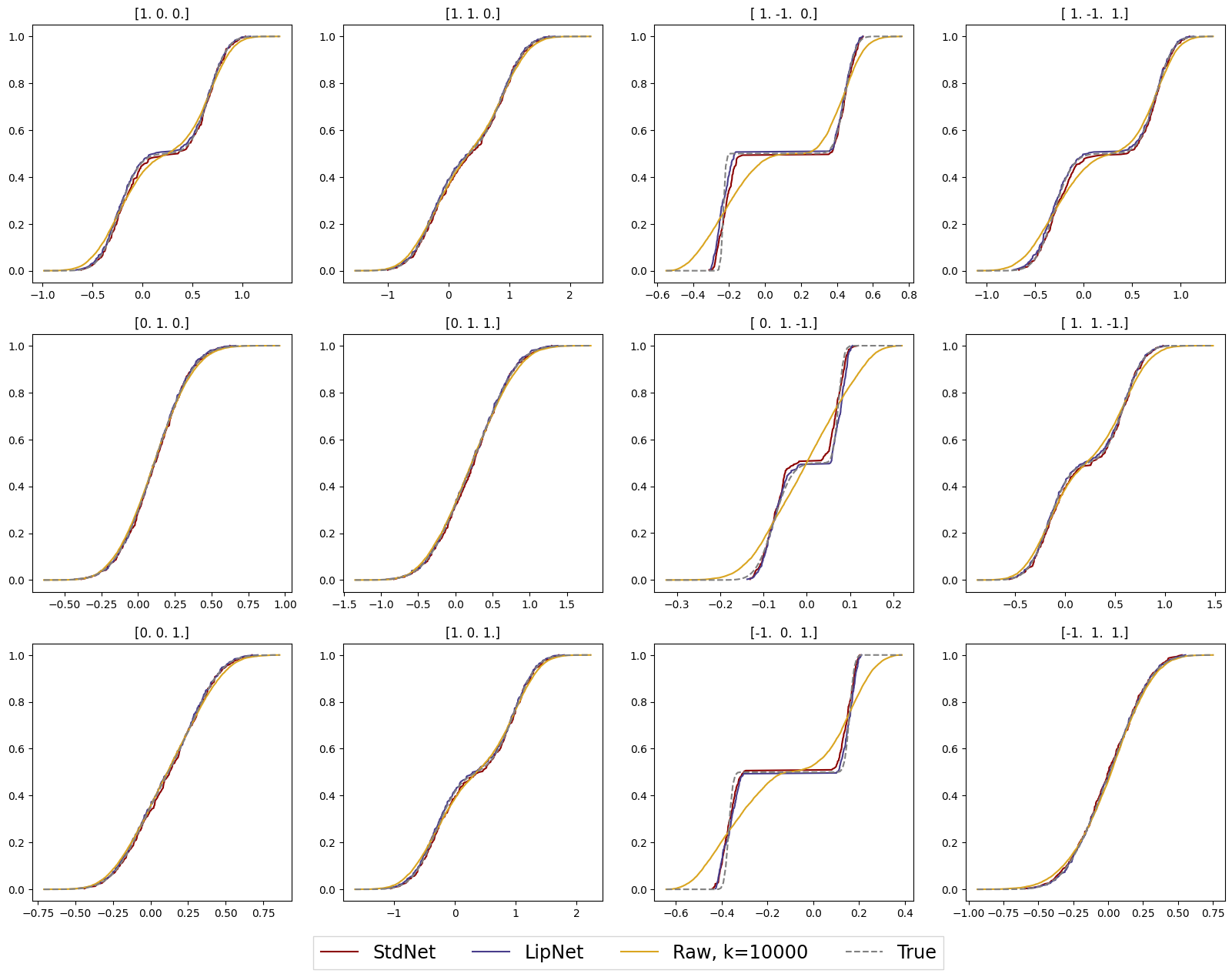}
\end{subfigure}

\caption{\centering Various estimators under Model 3, projections of conditional CDFs, $k=10^4$ for $k$-NN estimator.}
\label{fig:3DCondCDFk1e4}
\medskip
\small
This follows the setting of Figure \ref{fig:3DCondCDF}, expect that we set $k=10^4$ for the $k$-NN estimator plotted here. The neural estimator is trained under the same setting as that in Figure \ref{fig:3DCondCDF}  Plot titles display the vectors used for projection. Note the difference in the $x$ axis scale. 
\end{figure}

\begin{figure}[htbp]
\centering
\begin{subfigure}[t]{1\linewidth}
\centering
\includegraphics[width=0.90\textwidth]{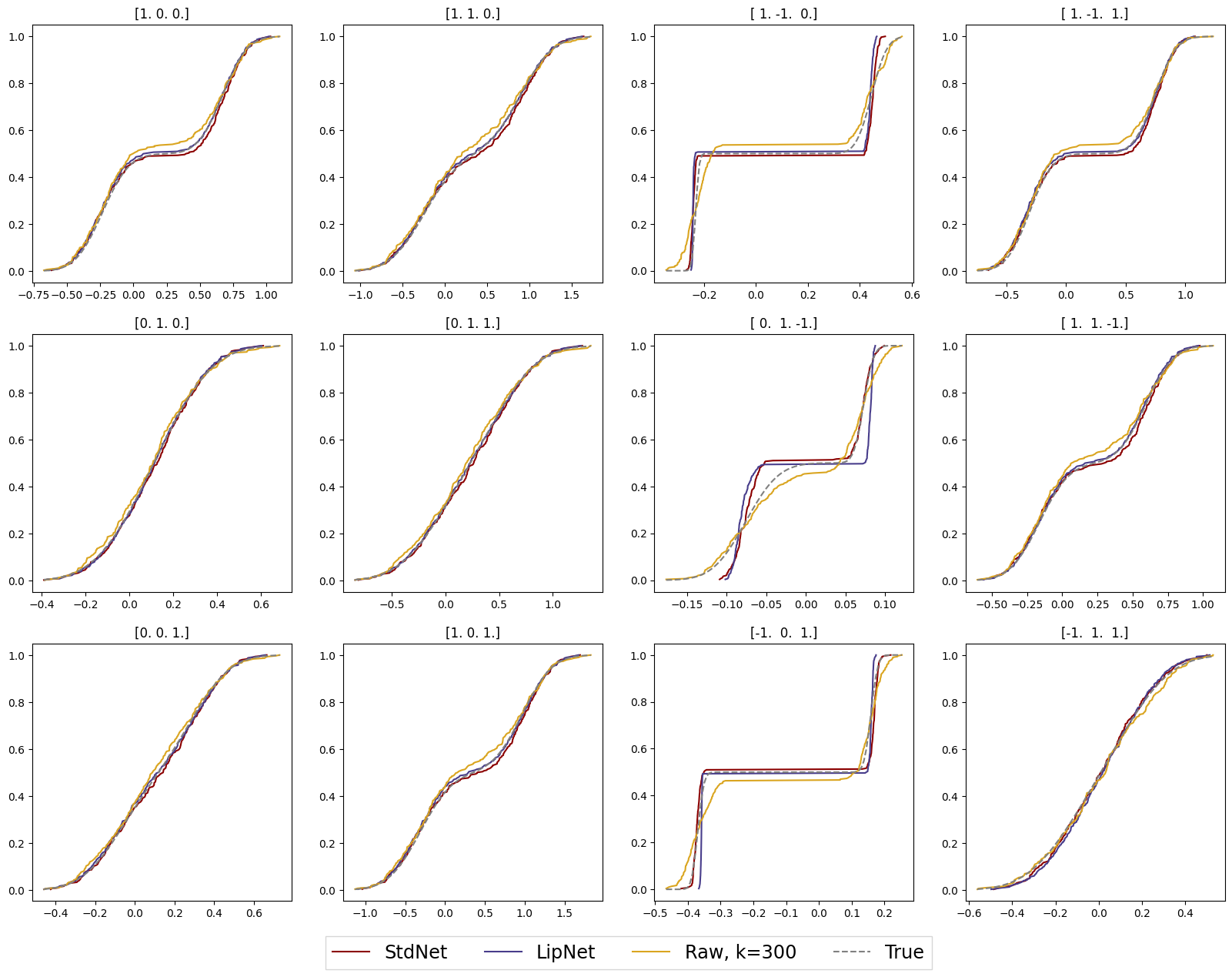}
\end{subfigure}

\caption{\centering Various estimators under Model 3, projections of conditional CDFs, both StdNet and Lipnet are trained without enforced sparsity.}
\label{fig:3DCondCDFNosparse}
\medskip
\small
This follows the setting of Figure \ref{fig:3DCondCDF}, expect that we do not enforce sparsity on the transport plan during the training of the neural estimator. Plot titles display the vectors used for projection. Note the difference in the $x$ axis scale. 
\end{figure}

\begin{figure}[htbp]
\centering
\begin{subfigure}[t]{0.24\linewidth}
\centering
% \makebox[1pt]{\raisebox{50pt}{\rotatebox[origin=c]{90}{Data}}}
\includegraphics[width=0.95\textwidth]{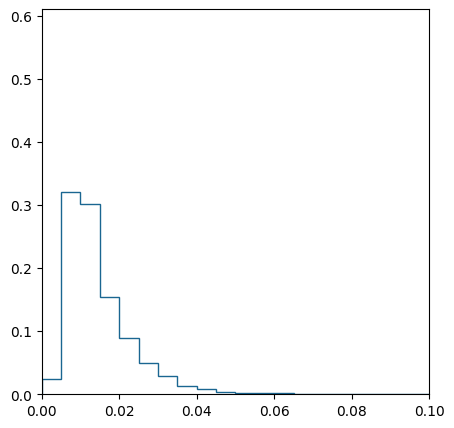}
\caption*{Raw $k=1000$}
\end{subfigure}
\begin{subfigure}[t]{0.24\linewidth}
\centering
\includegraphics[width=0.95\textwidth]{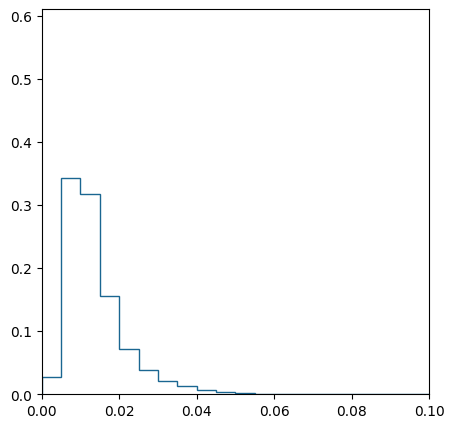}
\caption*{Raw $k=3,000$}
\end{subfigure}
\begin{subfigure}[t]{0.24\linewidth}
\centering
\includegraphics[width=0.95\textwidth]{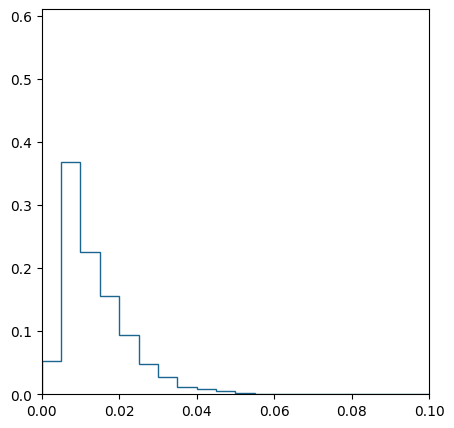}
\caption*{StdNet}
\end{subfigure}
\begin{subfigure}[t]{0.24\linewidth}
\centering
\includegraphics[width=0.95\textwidth]{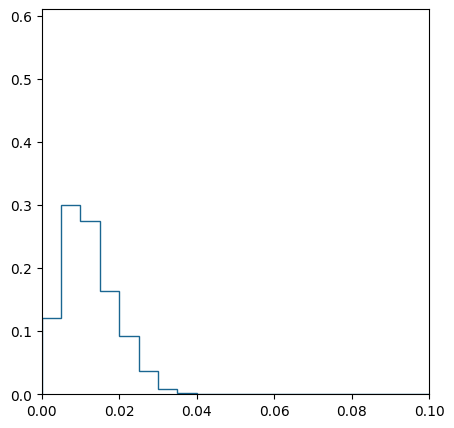}
\caption*{LipNet}
\end{subfigure}
\caption{Histogram of $10,000$ projected Wasserstein-$1$ errors, no enforced sparsity on transport plan.}
\label{fig:HistProjWNoSparsity}
This follows the setting of Figure \ref{fig:HistProjW}, expect that we do not enforce sparsity on the transport plan during the training of the neural estimator. Histograms for raw estimators remain the same. The histogram consists of $20$ uniformly positioned bins between $0$ to $0.1$. The errors of different estimators are computed with the same set of query points and projection vectors. Errors larger than $0.1$ will be placed in the right-most bins. 
\end{figure}

% \begin{figure}[htbp]
% \centering
% \begin{subfigure}[t]{1\linewidth}
% \centering
% \includegraphics[width=0.90\textwidth]{plot/data3D_condCDF_L1}
% \end{subfigure}

% \caption{\centering Various estimators under Model 3, projections of conditional CDFs, LipNet trained with $L=1$.}
% \label{fig:3DCondCDFL1}
% \medskip
% \small
% This follows the setting of Figure \ref{fig:3DCondCDF}, expect that the neural estimator is trained with $L=1$.  Plot titles display the vectors used for projection. Note the difference in the $x$ axis scale. 
% \end{figure}

\begin{figure}[htbp]
\centering
\begin{subfigure}[t]{0.24\linewidth}
\centering
\makebox[1pt]{\raisebox{50pt}{\rotatebox[origin=c]{90}{All atoms scattered}}}
\includegraphics[width=0.90\textwidth]{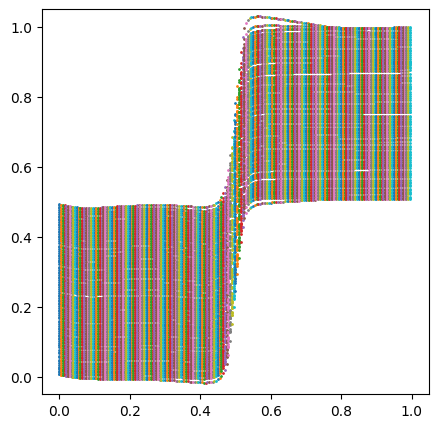}\\
\makebox[1pt]{\raisebox{50pt}{\rotatebox[origin=c]{90}{Traj.\! of 20 atoms}}}
\includegraphics[width=0.90\textwidth]{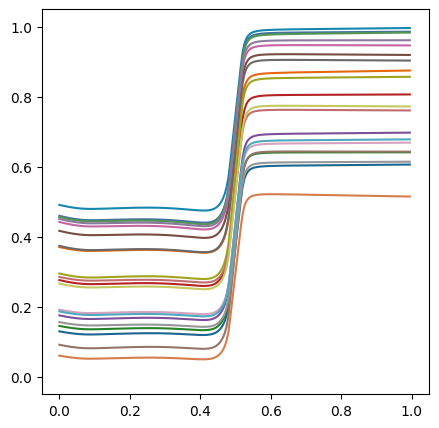}
\caption{$L=0.01$}
\end{subfigure}
\begin{subfigure}[t]{0.24\linewidth}
\centering
\includegraphics[width=0.90\textwidth]{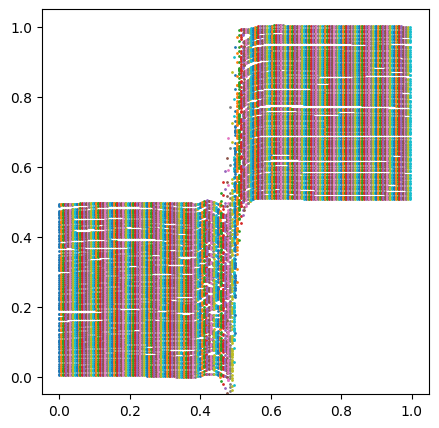}
\includegraphics[width=0.90\textwidth]{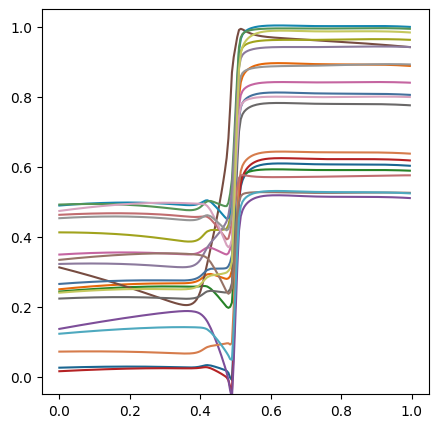}
\caption{$L=0.03$}
\end{subfigure}
\begin{subfigure}[t]{0.24\linewidth}
\centering
\includegraphics[width=0.90\textwidth]{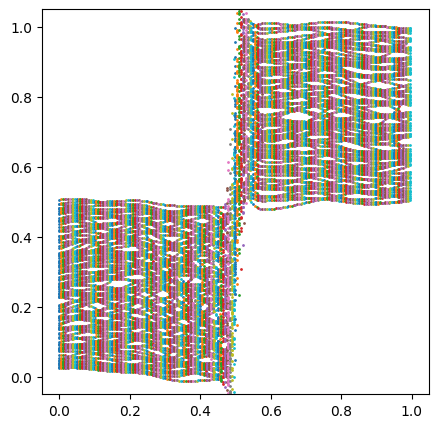}
\includegraphics[width=0.90\textwidth]{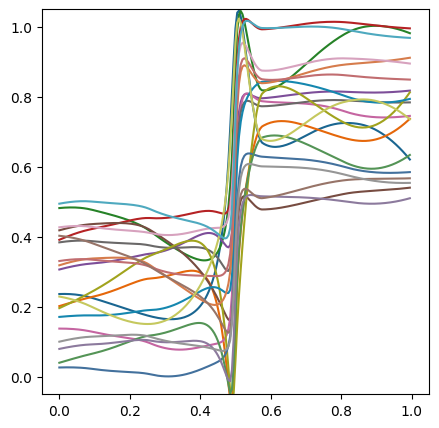}
\caption{$L=1$}
\end{subfigure}
\begin{subfigure}[t]{0.24\linewidth}
\centering
\includegraphics[width=0.90\textwidth]{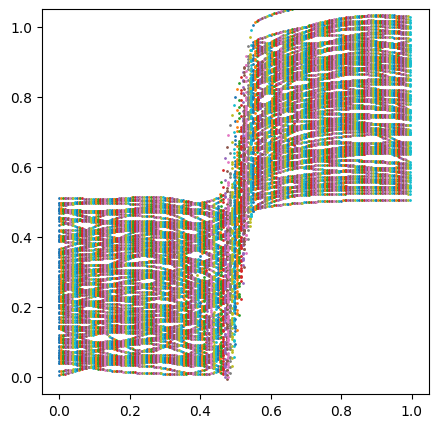}
\includegraphics[width=0.90\textwidth]{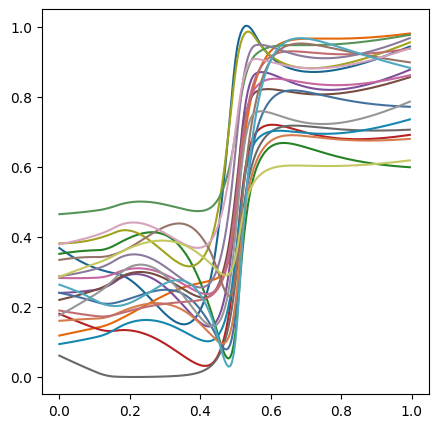}
\caption{$L=3$}
\end{subfigure}
\caption{LipNet under Model 2 with various $L$'s.}
\label{fig:1DLModel2}
\end{figure}

\begin{figure}[htbp]
\centering
\begin{subfigure}[t]{0.24\linewidth}
\centering
\makebox[1pt]{\raisebox{50pt}{\rotatebox[origin=c]{90}{All atoms scattered}}}
\includegraphics[width=0.90\textwidth]{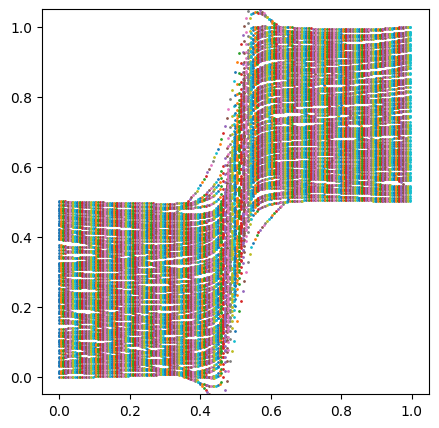}\\
\makebox[1pt]{\raisebox{50pt}{\rotatebox[origin=c]{90}{Traj.\! of 20 atoms}}}
\includegraphics[width=0.90\textwidth]{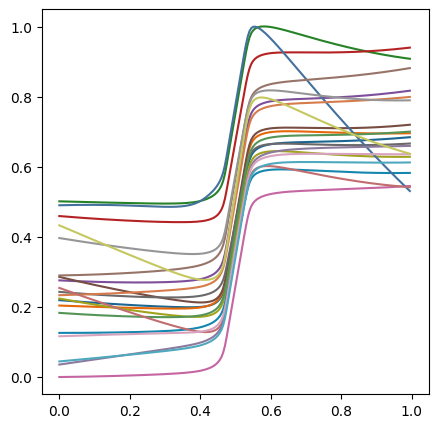}
\caption{$\tau=10^{-1}$}
\end{subfigure}
\begin{subfigure}[t]{0.24\linewidth}
\centering
\includegraphics[width=0.90\textwidth]{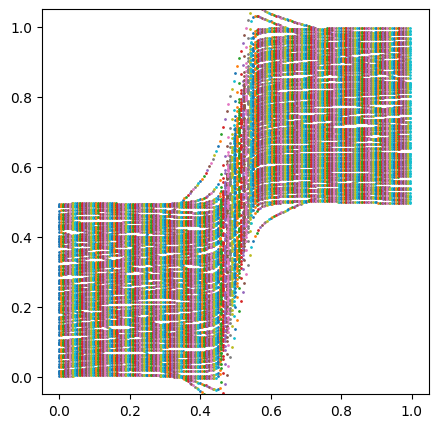}
\includegraphics[width=0.90\textwidth]{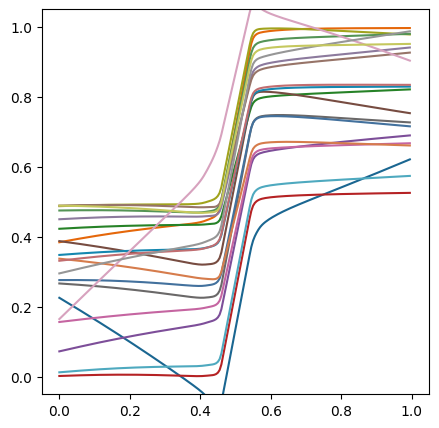}
\caption{$\tau=10^{-2}$}
\end{subfigure}
\begin{subfigure}[t]{0.24\linewidth}
\centering
\includegraphics[width=0.90\textwidth]{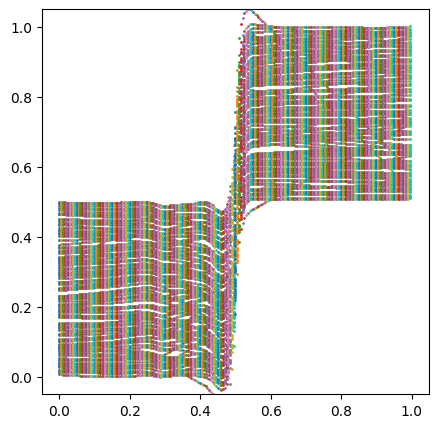}
\includegraphics[width=0.90\textwidth]{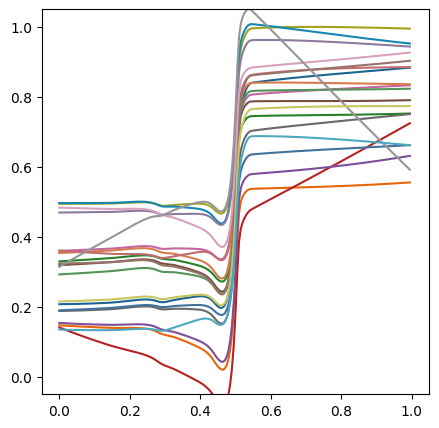}
\caption{$\tau=10^{-5}$}
\end{subfigure}
\begin{subfigure}[t]{0.24\linewidth}
\centering
\includegraphics[width=0.90\textwidth]{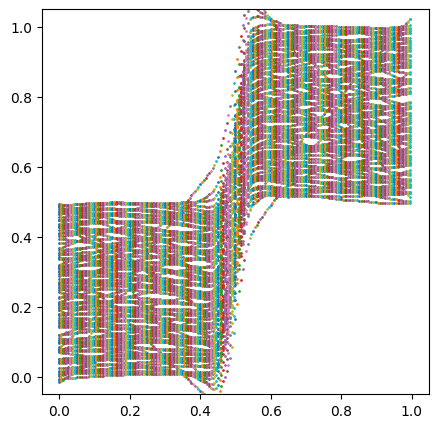}
\includegraphics[width=0.90\textwidth]{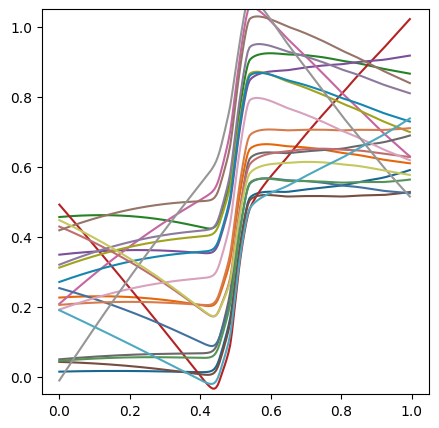}
\caption{$\tau=10^{-6}$}
\end{subfigure}
\caption{LipNet under Model 2 with various $\tau$'s.}
\label{fig:1Dtau2}
\end{figure}

\newpage

\begin{minipage}{0.92\linewidth}
\section{Configuration of network components and training parameters}\label{sec:Config}
The table below summarizes the configuration of the neural network and the training procedure. It applies to both StdNet and LipNet in all models.
\begin{center}
\resizebox{\columnwidth}{!}{
\begin{tabular}[ht]{c c c} 
\hline
\bf{\makecell{Network Component\\[-.2em] /Training parameters}} & \makecell{\bf{Configuration}} & \bf{Note}\\[.3em]
\hline\\[-.7em]

Sample size & 1e4 for Model 1 \& 2, 1e6 for Model 3 & \\[.5em]

$k$ & 100 for Model 1 \& 2, 300 for Model 3 & See Definition \ref{def:kNN} \\[.5em]

Network stucture & \makecell{StdNet: \makecell{Layer-wise residual connection \cite{He2016Deep},\\[-.3em] batch normalization \cite{Ioffe2015Batch} after\\[-.3em] affine transformation} \\[-.1em] LipNet: \makecell{Layer-wise residual connection \cite{He2016Deep}\\[-.3em] with convex potential layer \cite{Meunier2022Dynamical}} } & \\[.5em]

Input dimension & $d_\bX$ & \\[.5em]

Output dimension & $d_\bY\times N_{\text{atom}}$ & \makecell{$N_{\text{atom}}=k$, see \eqref{eq:LagrangianDisc}}\\[.5em]

Number of hidden layers & 5  & \\[.5em]

\makecell{Number of neurons\\[-.3em] each hidden layer} & $2k$  & $k$ as in Definition \ref{def:kNN}\\[.5em]

\makecell{Activation function} & \makecell{StdNet: ReLU\\[-.3em] LipNet: ELU}  & See Section \ref{subsubsec:Actvn} \\[.5em]

$L$ & 0.1 & Introduced in \eqref{eq:OutputLayer} \\[.5em]

$\tau$ & 1e-3 & See Algorithm \ref{algo:PwrIter} \\[.5em]

Optimizer & \makecell{Adam \cite{Kingma2017Adam} with learning rate $10^{-3}$} & \makecell{Learning rate is $0.01$ \\[-.3em] for StdNet in Model 1 \& 2} \\[.5em]

Batch size & \makecell{100 for Model 1 \& 2\\[-.3em] 256 for Model 3} &  \\[.5em]

Number of episodes & 5e3 for Model 1 \& 2, 1e4 for Model 3 & \\[.5em]

RBSP setting & $2^5$ partition, $8$ query points each part & See Section \ref{subsubsec:ANNS}\\[.5em]

Random bisecting ratio & $\sim\Uniform([0.45,0.55])$ & \makecell{ Introduced in Section \ref{subsubsec:ANNS}\\[-.3em]
See also Algorithm \ref{algo:RBSP}} \\[.5em]

\makecell{Ratio for mandatory slicing\\[-.2em] along the longest edge} & 5 & \makecell{ Introduced in Section \ref{subsubsec:ANNS}\\[-.3em]
See also $r_{\text{edge}}$ in Algorithm \ref{algo:RBSP}}\\[.5em]

Number of Sinkhorn iterations & \makecell{$5$, if epoch $\le 500$\\[-.3em] $10$, if epoch $> 500$ } &  \\[.5em]

$\epsilon$ & \makecell{1, if epoch $\le 100$\\[-.3em] $0.1$, if epoch $\in[100,500]$\\[-.3em] $0.05$, if epoch $> 500$ } & Introduced in \eqref{eq:Ktilde}\\[.5em]

Enforced sparsity & \makecell{Off, if epoch $\le 500$\\[-.3em] On, if epoch $>500$ } & See Section \ref{subsec:Sinkhorn} \\[.5em]

$\gamma$ & 0.5 & Introduced in \eqref{eq:Tbar} \\[.5em]

\hline\\
\end{tabular}
}
%\caption*{Configuration of neural networks and training procedure}

\end{center}
\end{minipage}

\section{Another set of results on fluctuation}\label{sec:AnotherFluc}

\subsection{On $r$-box estimator}
\begin{theorem}\label{thm:Concenrbox}
Under Assumptions \ref{hyp: kernel lip} and \ref{hyp: data}, and choosing $r$ as in Theorem \ref{thm:ExpectedRaterbox}, let $\nu \in \cP(\bX)$ be dominated by $\lambda_\bX$ with constant $\overline C>0$. Then, there is a constant $C>0$ (which depends only on ${d_\bX},\underline c$, $\overline C$ and the constants involved in $r$), such that, for any $\varepsilon\ge 0$, we have
\begin{align}\label{eq:Concenrbox}
\bP\left[ \int_{\bX} \cW\left(P_x, \Prbox_x\right) \dif \nu(x) \ge \esp{ \int_{\bX} \cW\left(P_x, \Prbox_x\right) \dif \nu(x) } + \varepsilon \right] \le \begin{cases}
\exp\left(-CM^{\frac{2}{{d_\bX}+2}}\varepsilon^2\right), & {d_\bY}=1,2,\\[.5em]
\exp\left(-CM^{\frac{{d_\bX}}{{d_\bX}+{d_\bY}}}\varepsilon^2\right), & {d_\bY}\ge 3.
\end{cases}
\end{align}
\end{theorem}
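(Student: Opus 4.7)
My plan is to obtain \eqref{eq:Concenrbox} from McDiarmid's bounded differences inequality (equivalently, Azuma-Hoeffding applied to the Doob martingale, as in \cite[Corollary 2.20]{Wainwright2019High}), viewing
$F(\cD) := \int_\bX \cW(P_x,\Prbox_x)\,\nu(\dif x)$
as a deterministic function of the i.i.d.\ data $\cD = \set{(X_m,Y_m)}_{m=1}^M$. The main computation is to estimate, for each $m$, the worst-case change $|F(\cD) - F(\cD^{(m)})|$, where $\cD^{(m)}$ is obtained from $\cD$ by replacing its $m$-th observation with an independent copy $(X'_m,Y'_m)$.

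This step mirrors the one carried out in the proof of Theorem \ref{thm:Concenrboxnew}. The estimators $\hat\mu^\cD_{\cB^r(x)}$ and $\hat\mu^{\cD^{(m)}}_{\cB^r(x)}$ coincide except when $x$ belongs to the random set $\{X_m \in \cB^r(x)\}\cup\{X'_m \in \cB^r(x)\}$, which by Remark \ref{rem: rbox radius} (using $\|x-\beta^r(x)\|_\infty \le r$) is contained in $B(X_m,2r)\cup B(X'_m,2r)$ and thus has $\lambda_\bX$-measure at most $2(4r)^{d_\bX}$. Combined with the deterministic bound $\cW(\hat\mu^\cD_{\cB^r(x)},\hat\mu^{\cD^{(m)}}_{\cB^r(x)}) \le \mathrm{diam}_{\|\cdot\|_\infty}(\bY) = 1$ and the domination $\nu \le \overline C\,\lambda_\bX$, this gives $|F(\cD) - F(\cD^{(m)})| \le c_m$ with $c_m = C_0\,\overline C\, r^{d_\bX}$ for a constant $C_0$ depending only on $d_\bX$. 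McDiarmid then yields $\bP[F - \bE F \ge \varepsilon] \le \exp\bigl(-\varepsilon^2/(C_1 M r^{2d_\bX})\bigr)$, and substituting the optimal $r$ from Theorem \ref{thm:ExpectedRaterbox} will produce an exponent of the form $M^{\alpha}\varepsilon^2$.

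The technically delicate point is matching the precise exponents $M^{2/(d_\bX+2)}$ and $M^{d_\bX/(d_\bX+d_\bY)}$ claimed in \eqref{eq:Concenrbox}. The uniform worst-case bound $\cW \le 1$ is pessimistic, because on the high-probability event that $N_{\cB^r(x)} := \sum_{m} \1_{\cB^r(x)}(X_m)$ is uniformly close to its mean $M\xi(\cB^r(x)) \sim M r^{d_\bX}$, one has $\cW(\hat\mu^\cD_{\cB^r(x)},\hat\mu^{\cD^{(m)}}_{\cB^r(x)}) \lesssim 1/N_{\cB^r(x)}$, so the \emph{typical} bounded difference is a factor of order $(M r^{d_\bX})^{-1/2}$ (or $(M r^{d_\bX})^{-1/d_\bY}$ when $d_\bY \ge 3$, in line with the empirical-Wasserstein rate \eqref{eq:UBWConvEmp}) smaller than $r^{d_\bX}$. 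Turning this heuristic into a rigorous deterministic $c_m$ usable in McDiarmid — for instance through a Warnke-style \emph{typical} bounded-differences inequality combined with a uniform-in-$x$ concentration of $N_{\cB^r(x)}$ in the spirit of Lemma \ref{lem:InfEmpConcen}, and controlling the small-probability complement separately — is the main obstacle and is what ultimately delivers the claimed rates.
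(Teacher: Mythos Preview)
Your setup is right, and you have correctly identified that the crude McDiarmid bound $c_m\sim r^{d_\bX}$ gives $\sum_m c_m^2\sim M r^{2d_\bX}$, which does \emph{not} match the claimed exponents. The gap is in how you propose to close it.

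You treat ``McDiarmid'' and ``Azuma--Hoeffding on the Doob martingale'' as interchangeable, and then compute the \emph{worst-case} coordinate difference $|F(\cD)-F(\cD^{(m)})|$. But the paper does not bound $|F(\cD)-F(\cD^{(m)})|$; it bounds the martingale increment $|Z_m-Z_{m-1}|$ directly, where $Z_m=\bE[Z\mid\sF_m]$. The point is that $Z_m$ still integrates over the law of $(X_{m+1},Y_{m+1}),\dots,(X_M,Y_M)$, so the bound on the increment is an \emph{average} over the future data, not a worst case. Concretely, on the event $\{X_m\in\cB^r(x)\}\cup\{x_m\in\cB^r(x)\}$ one has
\[
\cW\bigl(\hat\mu^{\cD^m}_{\cB^r(x)},\hat\mu^{\cD^{m-1}}_{\cB^r(x)}\bigr)\;\le\;\Bigl(1+\sum_{\ell=m+1}^M\1_{\cB^r(x)}(x_\ell)\Bigr)^{-1},
\]
and after integrating $x_{m+1},\dots,x_M$ i.i.d.\ from $\xi$ this becomes $\lesssim\bigl((M-m+1)\,\underline c\,(2r)^{d_\bX}\bigr)^{-1}\wedge 1$. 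Combined with the volume estimate $\nu\bigl(B(X_m,2r)\cup B(x_m,2r)\bigr)\le 2\overline C(4r)^{d_\bX}$ this gives a deterministic
\[
|Z_m-Z_{m-1}|\;\le\;C_m\;\sim\;r^{d_\bX}\wedge (M-m+1)^{-1},
\]
and then $\sum_m C_m^2\sim M^{-d_\bX/(d_\bX+2\vee d_\bY)}$ by a straightforward computation. No Warnke-type typical-bounded-differences argument and no uniform-in-$x$ concentration of $N_{\cB^r(x)}$ is needed.

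Two further remarks. First, your heuristic that the ``typical'' perturbation is smaller by a factor $(Mr^{d_\bX})^{-1/2}$ or $(Mr^{d_\bX})^{-1/d_\bY}$ conflates two different things: the empirical-Wasserstein rate \eqref{eq:UBWConvEmp} governs $\cW(P_x,\hat\mu)$, whereas here you need the distance between two empirical measures that differ in a \emph{single} atom out of $N$, which is simply $\le 1/N$. Second, your proposed route via typical bounded differences plus Lemma~\ref{lem:InfEmpConcen} could in principle be made to work, but it is substantially more involved than the paper's argument and, as stated, remains a sketch; the direct martingale-increment computation is both simpler and delivers the claimed rates without any conditioning on a good event.
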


\begin{proof}%[Proof of Theorem \ref{thm:Concenrbox}]
Let $\nu \in \cP(\bX)$ as in the statement of the Theorem. We define 
$$Z := \int_{\bX} \cW(P_x,  \Prbox_x) \ud \nu(x),$$ 
and introduce the following discrete time filtration:  $\sF_0:=\set{\emptyset,\Omega}$ and $\sF_m:=\sigma(\bigcup_{i=1}^m\sigma(X_i,Y_i))$ for $m=1,\dots,M$. We consider the Doob's martingale $Z_m := \esp{Z \,\middle|\, \cF_m},\, m=1,\dots,M$. Note that $Z_M=Z$. We will apply Azuma-Hoeffding inequality (cf. \cite[Corollary 2.20]{Wainwright2019High}) to complete the proof. 

Let us define 
\begin{align}\label{eq: def Dm}
\cD^m := \set{(X_1,Y_1),\dots, (X_m,Y_m), (x_{m+1},y_{m+1}), \dots, (x_M,y_M)}, \quad m=1,\dots,M,
\end{align}
$\cD^0:=\set{(x_\ell,y_\ell)}_{\ell=0}^M$, and $\cD^M:=\cD$. Note that, for all $m<M$, we have, by Assumptions \ref{hyp: data} (i), conditional Fubini-Tonelli theorem, and independent lemma, 
\begin{align*}
Z_m = \int_{\bX} \int_{\left(\bX\times\bY\right)^{M-m}}  \cW\left(P_x, \hat\mu^{\cD^m}_{\cB^r(x)}\right)
\bigotimes_{\ell=m+1}^{M} \psi(\dif x_\ell\dif y_\ell) \nu(\ud x).
\end{align*}
This together with the linearity of integral, the fact that $\psi$ is a probability, and the triangular inequality of $\cW$ implies that for $m=1,\dots,M$,
\begin{align}\label{eq:UBAbsDiffZ}
\left|Z_m-Z_{m-1}\right| \le \int_{\bX}\int_{\left(\bX\times\bY\right)^{M-m+1}}\cW\left(\hat\mu^{\cD^m}_{\cB^r(x)},\hat\mu^{\cD^{m-1}}_{\cB^r(x)}\right)
\bigotimes_{\ell=m}^{M} \psi(\dif x_\ell\dif y_\ell) \nu(\dif x).
\end{align}
Notice that, by definitions \eqref{eq: mu hat} and \eqref{eq: def Dm}, %and by Remark \ref{rem: rbox radius},
\begin{align}\label{eq:UBEventrBox}
\left\{\hat\mu^{\mD_m}_{\cB^r(x)}\neq\hat\mu^{\mD_{m-1}}_{\cB^r(x)}\right\}  
\subseteq \Big\{X_m \in \cB^r(x)\Big\}\cup\Big\{x_m \in \cB^r(x)\Big\}.
\end{align}   
Additionally, by definitions \eqref{eq: mu hat} and \eqref{eq: def Dm} again, on the event that $\left\{\hat\mu^{\mD_m}_{\cB^r(x)}\neq\hat\mu^{\mD_{m-1}}_{\cB^r(x)}\right\}$, we have 
\begin{align}\label{eq:UBWrBox}
\cW\left(\hat\mu^{\cD^m}_{\cB^r(x)}, \hat\mu^{\cD^{m-1}}_{\cB^r(x)}\right) \le \left(1+\sum_{\ell=1}^{m-1} \1_{\cB^r(x)}(X_\ell) + \sum_{\ell = m+1}^M \1_{\cB^r(x)}(x_\ell)\right)^{-1} \le \left(1+\sum_{\ell=m+1}^M \1_{\cB^r(x)}(x_\ell)\right)^{-1}.
\end{align}
Combining \eqref{eq:UBAbsDiffZ},\eqref{eq:UBEventrBox}, \eqref{eq:UBWrBox}, and Fubini-Tonelli theorem, we get
\begin{align}\label{eq:UBAbsDiffZ2}
\left|Z_m-Z_{m-1}\right| & \le \int_{\bX} \int_{B(X_m,2r)\cup B(x_m,2r)}\int_{\bX^{M+1-m}} \left(1+\sum_{\ell=m+1}^M \1_{\cB^r(x)}(x_\ell)\right)^{-1} \bigotimes_{\ell=m+1}^{M} \xi(\dif x_\ell) \nu(\dif x) \xi(\dif x_m)\nonumber\\
&\le \sup_{x_m\in\bX} \int_{B(X_m,2r)\cup B(x_m,2r)}\int_{\bX^{M+1-m}} \left(1+\sum_{\ell=m+1}^M \1_{\cB^r(x)}(x_\ell)\right)^{-1}
\bigotimes_{\ell=m+1}^{M} \xi(\dif x_\ell) \nu(\dif x)
\end{align}
where the $2r$ in the domain of the integral stems from the usage of $\beta^r$ in the definition of $\cB^r$ (see Definition \ref{def:rbox}).  Now, for fixed $x,x_m \in \bX$, we have
\begin{align*}
&\int_{\bX^{M-m+1}} \left(1+\sum_{\ell=m+1}^M \1_{\cB^r(x)}(x_\ell)\right)^{-1}
\bigotimes_{\ell=m+1}^{M} \xi(\dif x_\ell) \\
&\quad= \sum_{\ell=0}^{M-m} \binom{M-m}{\ell} \xi\big(\cB^r(x)\big)^\ell \Big(1-\xi^r\big(\cB^r(x)\big)\Big)^{M-m-\ell} \left(1+\ell\right)^{-1} \\
&\quad= \frac{1}{(M-m+1)\xi\big(\cB^r(x)\big)} \sum_{\ell=0}^{M-m} \binom{M-m+1}{\ell+1} \xi\big(\cB^r(x)\big)^{\ell+1} \Big(1-\xi\big(\cB^r(x)\big)\Big)^{M-m-\ell} \\
&\quad= \frac{1}{(M-m+1)\xi\big(\cB^r(x)\big)} \sum_{\ell=1}^{M-m+1} \binom{M-m+1}{\ell} \xi\big(\cB^r(x)\big)^\ell \Big(1-\xi\big(\cB^r(x)\big)\Big)^{M-m+1-\ell} \\
&\quad= \frac{1- \Big(1-\xi\big(\cB^r(x)\big)\Big)^{M-m+1}}{(M-m+1)\xi\big(\cB^r(x)\big)} \le 1 \wedge \left( (M-m+1)\xi\big(\cB^r(x)\big)\right)^{-1} \le 1 \wedge \left( (M-m+1) \underline c (2r)^{\dX}\right)^{-1}, 
\end{align*}
where we have used Assumption \ref{hyp: data} (ii) in the last inequality. Recall $\overline C$ introduced in Theorem \ref{thm:Concenrbox}. In view of \eqref{eq:UBAbsDiffZ2}, we have 
\begin{align*}
\left|Z_m-Z_{m-1}\right| &\le \sup_{x_m\in\bX} \int_{B(X_m,2r) \cup B(x_m,2r)} 1 \wedge \left( (M-m+1) \underline c (2r)^{\dX}\right)^{-1} \nu(\ud x) \\
&\le 2 \overline C (4r)^{\dX} \left( 1 \wedge \left( (M-m+1) \underline c (2r)^{\dX}\right)^{-1} \right) = \left(\overline C 2^{2\dX+1}r^{\dX}\right) \wedge \frac{\overline C 2^{\dX+1}}{\underline c (M-m+1)} := C_m.
\end{align*}
By Azuma-Hoeffding inequality (cf. \cite[Corollary 2.20]{Wainwright2019High}), one obtains
\begin{align}\label{eq:ConceIneqrBox}
\bP(Z-\esp{Z} \ge \varepsilon) \le \exp\left(-\frac{2\varepsilon^2}{\sum_{m=1}^M C_m^2}\right).
\end{align}
To complete the proof, we substitute in the configuration of Theorem \ref{thm:ExpectedRaterbox}. Since we only aim to investigate the rate of $\sum_{m=1}^M C_m^2$ as $M\to\infty$, we simply set 
\begin{align*}
r = M^{-\frac{1}{d_\bX+d}} \quad \text{with} \quad d := 2 \vee \dY.%\begin{cases}
%2, & d_\bY=1,2,\\ 
%d_\bY, & d_\bY\ge 3.
%\end{cases}
\end{align*}
It follows that
\begin{align*}
\sum_{m=1}^M C_m^2 &\sim \sum_{m=1}^{M} M^{-\frac{2\dX}{\dX+d}} \wedge m^{-2} \lesssim \int_1^\infty  M^{-\frac{2\dX}{\dX+d}} \wedge z^{-2} \ud z \sim \int_1^{M^{\frac{\dX}{\dX+d}}} M^{-\frac{2\dX}{\dX+d}} \ud z + \int_{ M^{\frac{\dX}{\dX+d}}}^{\infty} z^{-2} \ud z \sim M^{-\frac{\dX}{\dX+d}},
\end{align*}
which completes the proof.
\end{proof}

\subsection{On $k$-nearest-neighbor estimator}
\begin{theorem}\label{thm:ConcenkNN}
Under Assumptions \ref{hyp: kernel lip} and \ref{hyp: data}, and the choice of $k$ as in Theorem \ref{thm:ExpectedRatekNN}, there is a constant $C>0$ (which depends only on $\underline c$ and the constants involved in $k$), such that, for any $\nu \in \cP(\bX)$ and $\varepsilon\ge 0$, we have
\begin{align}\label{eq:ConcenkNN}
\bP\left[ \int_{\bX} \cW\left(P_x, \PkNN_x\right) \nu(\dif x) \ge \esp{ \int_{\bX} \cW\left(P_x, \PkNN_x\right) \nu(\dif x) } + \varepsilon \right] \le \begin{cases}
\exp\left(-CM^{\frac{2}{{d_\bX}+2}}\varepsilon^2\right), & {d_\bY}=1,2,\\[.5em]
\exp\left(-CM^{\frac{{d_\bY}}{{d_\bX}+{d_\bY}}}\varepsilon^2\right), & {d_\bY}\ge 3.
\end{cases}
\end{align}
\end{theorem}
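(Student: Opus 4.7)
The plan is to adapt the martingale argument used to prove Theorem \ref{thm:Concenrbox}: set $Z := \int_\bX \cW(P_x, \PkNN_x)\,\nu(\dif x)$, consider the Doob martingale $Z_m := \esp{Z \mid \cF_m}$ with respect to the natural filtration $\cF_m := \sigma(X_1, Y_1, \dots, X_m, Y_m)$, bound the martingale differences, and conclude by Azuma-Hoeffding. Exactly as in the $r$-box proof, one obtains
\begin{align*}
|Z_m - Z_{m-1}| \le \int_\bX \int \cW\left(\hat\mu^{\cD^m}_{\cN^{k,\cD^m_\bX}(x)}, \hat\mu^{\cD^{m-1}}_{\cN^{k,\cD^{m-1}_\bX}(x)}\right) \bigotimes_{\ell \ge m} \psi(\dif x_\ell \dif y_\ell)\, \nu(\dif x),
\end{align*}
where $\cD^m$ is as in \eqref{eq: def Dm}.

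Next I would exploit two structural facts specific to $k$-NN. First, the symmetric difference of $\cN^{k,\cD^m_\bX}(x)$ and $\cN^{k,\cD^{m-1}_\bX}(x)$ has cardinality at most two, so the corresponding empirical measures are always within $\cW$-distance at most $2/k$ in the sup-norm on $\bY$; this yields the deterministic a.s.\ bound $|Z_m - Z_{m-1}| \le 2/k$. Second, the two $k$-NN sets differ only when $X_m$ or $x_m$ lies in the respective $k$-NN of $x$. Taking $\sup_{x_m}$ inside the integral as in the $r$-box proof and exploiting the exchangeability of $X_m, x_{m+1}, \dots, x_M$ under $\xi$, combined with the lower density bound $\underline c$ from Assumption \ref{hyp: data}(ii) to control the ``sphere of influence'' around $X_m$ via the typical $k$-NN radius, should give a second a.s.\ bound of the form $|Z_m - Z_{m-1}| \lesssim 1/(M - m + 1)$. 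Combining the two yields $|Z_m - Z_{m-1}| \le C_m$ with $C_m := (C_1/k) \wedge (C_2/(M-m+1))$.

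A crossover calculation at $M - m + 1 \sim k$ then gives $\sum_{m=1}^M C_m^2 \lesssim 1/k$: the last $\sim k$ indices contribute $k \cdot k^{-2} = 1/k$, while the head contributes $\sum_{j > k} j^{-2} \lesssim 1/k$. Azuma-Hoeffding (\cite[Corollary 2.20]{Wainwright2019High}) then yields $\bP(Z - \esp{Z} \ge \varepsilon) \le \exp(-c \varepsilon^2 k)$ for some absolute constant $c > 0$, and substituting the optimal $k \sim M^{d/(d_\bX + d)}$ with $d := 2 \vee d_\bY$ from Theorem \ref{thm:ExpectedRatekNN} delivers the two cases in \eqref{eq:ConcenkNN}.

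The main obstacle is the a.s.\ bound $|Z_m - Z_{m-1}| \lesssim 1/(M - m + 1)$: unlike the $r$-box case, where the influence region of a single swap is a ball of deterministic radius $r$ (so after $\sup_{x_m}$ the computation reduces to the explicit expectation of a Binomial reciprocal), here the effective radius is the data-dependent $k$-th nearest neighbor distance, and its uniform control must go through $\underline c$ to prevent degenerate configurations where a specific $X_m$ is close to too many query points under $\nu$. Should this pointwise control prove elusive for arbitrary $\nu \in \cP(\bX)$, a natural fallback is the conditional-variance proxy $\esp{(Z_m - Z_{m-1})^2 \mid \cF_{m-1}} \lesssim 1/(k(M-m+1))$, obtained by combining the a.s.\ bound $2/k$ with the exchangeability estimate in expectation, together with a Freedman-type martingale concentration that reaches the stated rate up to a logarithmic factor absorbable into the constant $C$.
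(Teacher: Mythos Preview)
Your plan is the paper's: Doob martingale, the universal bound $|Z_m - Z_{m-1}| \le 1/k$ (your $2/k$ is a harmless over-count; swapping one atom among $k$ costs at most $1/k$ since $\bY$ has diameter $1$), a refined bound of order $1/(M-m)$, the crossover $\sum_m C_m^2 \sim 1/k$, then Azuma--Hoeffding. For the refined bound the paper does not take $\sup_{x_m}$; instead it splits the event that the two $k$-NN empirical measures differ according to whether the realized $X_m$ or the integration variable $x_m$ lies in the $k$-NN of $x$, producing two terms $I_1^m + I_2^m$. The $x_m$-term is handled for \emph{any} $\nu$ by pure exchangeability: since $x_m, x_{m+1}, \dots, x_M$ are i.i.d.\ from $\xi$, the probability that $x_m$ is among the $k$ nearest of these $M-m+1$ points to $x$ equals $k/(M-m+1)$ uniformly in $x$, and integrating against $\nu$ preserves this, giving $I_2^m \le k/(M-m)$. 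The $X_m$-term is precisely your ``main obstacle'': the paper bounds the inner binomial tail via Assumption~\ref{hyp: data}(ii) and stochastic dominance, then passes from $\int \cdot\,\nu(\dif x)$ to $\int \cdot\,\lambda_\bX(\dif x)$, after which an explicit Beta-function computation delivers $I_1^m \lesssim k/(M-m)$. Your instinct that this passage is the delicate step for general $\nu$ is on point.

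One correction to your fallback: the conditional-variance route gives $\sum_m \esp{(Z_m - Z_{m-1})^2\mid\cF_{m-1}} \lesssim \log(M)/k$, and this $\log M$ sits in the denominator of Freedman's exponent, so it is \emph{not} absorbable into a constant $C$ independent of $M$; that path would yield a strictly weaker statement than \eqref{eq:ConcenkNN}.
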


%\subsection{Proof of Theorem \ref{thm:ConcenkNN}} \label{subsec:Pf:thm:ConcenkNN}
\begin{proof}[Proof of Theorem \ref{thm:ConcenkNN}]
For notational convenience, we will write $\hat\mu^\mD_{\cN^{k}(x)}$ for $\hat\mu^{\mD}_{\cN^{k,\mD}(x)}$. Clearly, with $\mD=\cD$, we recover $\hat\mu^\cD_{\cN^{k}(x)}=\hat\mu^\cD_{\cN^{k,\cD}(x)}=\PkNN_x$. In what follows, we let
\begin{align*}
Z := \int_{\bX} \cW\left(P_x, \PkNN_x\right) \nu(\dif x). 
\end{align*}
We also define $\sF_0:=\set{\emptyset,\Omega}$ and $\sF_m:=\sigma(\bigcup_{i=1}^m\sigma(X_i,Y_i))$ for $m=1,\dots,M$. The proof relies on an application of Azuma-Hoeffding inequality (cf. \cite[Corollary 2.20]{Wainwright2019High}) to the Doob's martingale $\set{\esp{Z|\sF_m}}_{m=0}^M$. In order to proceed, we introduce a few more notations:
\begin{gather}
\bm x := (x_1,\dots,x_M), \quad \cX := (X_1,\dots,X_M), \\
\cX^m := (X_1,\dots,X_m, x_{m+1},\dots, x_M),\\
\cD^m := \set{(X_1,Y_1),\dots, (X_m,Y_m), (x_{m+1},y_{m+1}), \dots, (x_M,y_M)},\\
\eta^{k,\bm x}_x := \text{ the $k$-th smallest of } \set{\|x_m-x\|_\infty}_{m=1}^M. 
\end{gather}
By independence lemma, we have 
\begin{align*}
\esp{ Z \big| \sF_m } &= \int_{(\bX\times\bY)^{M-m}} \int_{\bX} \cW\left(P_x, \hat\mu^{\cD^m}_{\cN^{k}(x)}\right) \nu(\dif x) \bigotimes_{\ell=m+1}^M \psi(\dif x_\ell\dif y_\ell) \\
&= \int_{(\bX\times\bY)^{M-m}} \int_\bX \cW\left( P_x, \frac1k\left( \sum_{i=1}^m \1_{\|X_i-x\|_\infty \le \eta^{k,\cX^m}_x} \delta_{Y_i} + \sum_{\ell=m+1}^M \1_{\|x_\ell-x\|_\infty \le \eta^{k,\cX^m}_x} \delta_{y_\ell} \right) \right) \nu(\dif x)  \bigotimes_{\ell=m+1}^M \psi(\dif x_\ell\dif y_\ell),
\end{align*}
where we note that $(\bX\times\bY)^{M-m}$ and $\bigotimes_{\ell=m+1}^M \psi(\dif x_\ell\dif y_\ell)$ in the right hand side can be replaced by $(\bX\times\bY)^{M-m+1}$ and $\bigotimes_{\ell=m}^M \psi(\dif x_\ell\dif y_\ell)$ as the integrand is constant in $x_m$ and $\psi$ is a probability measure. Therefore, by Fubini's theorem  and triangle inequality for $\cW$, we have 
\begin{align}\label{eq:UBAbsDiffCondEsp}
&\left| \esp{ Z \big| \sF_m } - \esp{ Z \big| \sF_{m-1} } \right| \nonumber\\
&\quad\le    \int_\bX \int_{(\bX\times\bY)^{M-m+1}}\!\!\!\! \cW\left( \frac1k\left( \sum_{i=1}^m \1_{\|X_i-x\|_\infty \le \eta^{k,\cX^m}_x} \delta_{Y_i} \! + \!\!\! \sum_{\ell=m+1}^M \1_{\|x_\ell-x\|_\infty \le \eta^{k,\cX^m}_x} \delta_{y_\ell} \right) \right.,   \nonumber \\
&\;\quad\qquad\qquad\qquad\qquad \left. \frac1k\left( \sum_{i=1}^{m-1} \1_{\|X_i-x\|_\infty \le \eta^{k,\cX^{m-1}}_x} \delta_{Y_i} \! + \! \sum_{\ell=m}^M \1_{\|x_\ell-x\|_\infty \le \eta^{k,\cX^{m-1}}_x} \delta_{y_\ell} \right) \! \right)  \bigotimes_{\ell=m}^M \psi(\dif x_\ell\dif y_\ell) \dif x.
\end{align}
Above, the only difference between the two measures inside $\cW$ is the $m$-th summand. Due to the definition of $\cW$ and the boundedness of $\bX$, the transport cost induced by altering the $m$-th summand is at most $k^{-1}$. It follows that 
\begin{align}\label{eq:UBAbsDiffEspZFm1}
\left| \esp{ Z \big| \sF_m } - \esp{ Z \big| \sF_{m-1} } \right| \le \frac1k, \quad m=1,\dots,M.
\end{align}

Below we further refine the upper bound of the absolute difference in the left hand side of \eqref{eq:UBAbsDiffCondEsp} when $m=1,\dots,M-k$. For the integrand in the right hand side of \eqref{eq:UBAbsDiffCondEsp} to be positive, it is necessary that 
\begin{align*}
\1_{\|X_m-x\|_\infty \le \eta^{k,\cX^m}_x} + \1_{\|x_m-x\|_\infty \le \eta^{k,\cX^{m-1}}_x} \ge 1.
\end{align*}
This, together with the tie breaking rule stipulated in Definition \ref{def:kNN}, further implies that 
$$\1_{A^m_1} + \1_{A^m_2} \ge 1, $$ 
where
\begin{gather*}
A_1^m := \left\{ \text{at most $(k-1)$ of $x_\ell,\ell=m+1,\dots,M-m$, falls into } B^{\|X_m-x\|_{\infty}}_{x} \right\}, \\
A_2^m := \left\{ \text{at most $(k-1)$ of $x_\ell,\ell=m+1,\dots,M-m$, falls into } B^{\|x_m-x\|_{\infty}}_{x} \right\}.
\end{gather*}
Combining the above with the reasoning leading to \eqref{eq:UBAbsDiffEspZFm1}, we yield
\begin{align*}
&\left| \esp{ Z \big| \sF_m } - \esp{ Z \big| \sF_{m-1} } \right| \nonumber \\
&\quad \le \frac1k \left( \int_\bX \int_{(\bX\times\bY)^{M-m}} \!\! \1_{A_1^m}  \bigotimes_{\ell=m+1}^M \xi(\dif x_\ell) \nu(\dif x) + \int_\bX \int_{(\bX\times\bY)^{M-m+1}} \!\! \1_{A_2^m} \bigotimes_{\ell=m}^M \xi(\dif x_\ell) \nu(\dif x)  \right)
\end{align*}
Above, we have replaced $\psi$ in \eqref{eq:UBAbsDiffCondEsp} by $\xi$ because $A_1^m$ and $A_2^m$ no longer depend on $y_\ell,\ell=m+1,\dots,M$. The analogue applies to the domain of integral as well. We continue to have
\begin{align}\label{eq:UBAbsDiffEspZFmDecomp}
&\left| \esp{ Z \big| \sF_m } - \esp{ Z \big| \sF_{m-1} } \right| \nonumber \\
&\quad \le \frac{1}k \left( \int_\bX \int_{(\bX\times\bY)^{M-m}} \!\! \1_{A_1^m}  \bigotimes_{\ell=m+1}^M \xi(\dif x_\ell) \nu(\dif x) + \int_\bX \int_{(\bX\times\bY)^{M-m+1}} \!\! \1_{A_2^m} \bigotimes_{\ell=m}^M \xi(\dif x_\ell) \nu(\dif x)  \right) =: \frac{1}k (I_1^m + I_2^m).
\end{align}

Regarding $I^1_m$ defined in \eqref{eq:UBAbsDiffEspZFmDecomp}, note that by Assumption \ref{hyp: data},
\begin{align*}
\int_{(\bX\times\bY)^{M-m}} \1_{A_1^m}  \bigotimes_{\ell=m+1}^M \xi(\dif x_\ell) = \bP\left[ \text{at most $(k-1)$ of $\check X_1,\dots,\check X_{M-m}$ falls into } B^{\|x'-x\|_{\infty}}_{x} \right] \Big|_{x'=X_m},
\end{align*}
where $\check X_1,\dots,\check X_{M-m}\stackrel{\text{i.i.d.}}{\sim}\xi$. Below we define a CDF $G(r):=\underline c r^d, r\in[0,\underline c^{-\frac1d}]$. By Assumption \ref{hyp: data} (ii), for any $x,x'\in\bX$, we have
\begin{align*}
\int_{\bX} \1_{\check x\in B_x^{\|x'-x\|_\infty}} \xi(\dif\check x) \ge \underline c \int_{\bX} \1_{\check x\in B_x^{\|x'-x\|_\infty}} \dif \check x \ge \underline c \int_{\bX} \1_{\check x\in B_{\bm 0}^{\|x'-x\|_\infty}} \dif \check x = G(\|x'-x\|_\infty),
\end{align*}
where we have used the fact that $\|x'-x\|_\infty\le 1 \le \underline c^{-\frac1d}$ in the last equality. It follows from Lemma \ref{lem:BinStochDom} that 
\begin{align*}
\int_{(\bX\times\bY)^{M-m}} \1_{A_1^m}  \bigotimes_{\ell=m+1}^M \xi(\dif x_\ell) \le \sum_{j=0}^{k-1} \binom{M-m}{j} G(\|X_m-x\|_\infty)^{j} \big(1-G(\|X_m-x\|_\infty)\big)^{M-m-j},
\end{align*}
and thus, by letting $U\sim\Uniform(\bX)$,
\begin{align*}
I_1^m &\le \int_{\bX} \sum_{j=0}^{k-1} \binom{M-m}{j} G(\|X_m-x\|_\infty)^{j} \big(1-G(\|X_m-x\|_\infty)\big)^{M-m-j} \dif x \\
&= \esp{ \sum_{j=0}^{k-1} \binom{M-m}{j} G(\|x'-U\|_\infty)^{j} \big(1-G(\|x'-U\|_\infty)\big)^{M-m-j} } \Big|_{x'=X_m},
\end{align*}
where we note that the upper bounded no longer involves $\nu$. For $x'\in\bX$, it is obvious that
\begin{align*}
\bP\big[\|x'-U\|_\infty \le r\big] \ge \bP\big[\|U\|_\infty\le r\big], \quad r\in\bR,
\end{align*}
i.e., $\|U\|_\infty$ stochastically dominates $\|x'-U\|_\infty$. Note additionally that, by Lemma \ref{lem:BinStochDom} again, below is a non-decreasing function,
\begin{align*}
r \mapsto \sum_{j=0}^{k-1}\binom{M-m}{j} G(r)^{j} \big(1-G(r)\big)^{M-m-j}.
\end{align*}
Consequently,
\begin{align*}
I_1^m \le \esp{ \sum_{j=0}^{k-1} \binom{M-m}{j} G(\|U\|_\infty)^{j} \big(1-G(\|U\|_\infty)\big)^{M-m-j} \dif x }.
\end{align*}
Since $\|U\|_\infty$ has CDF $r\mapsto r^{d_\bX}, r\in[0,1]$ and $G(r)=\underline c r^d, r\in[0,\underline c^{-\frac1d}]$, we continue to obtain
\begin{align*}
I^m_1 \le  \sum_{j=0}^{k-1} \binom{M-m}{j}  \int_{r=0}^1  \underline c r^{{d_\bX}j} (1- \underline c r^{d_\bX})^{M-m-j} \dif r^{d_\bX} \le \underline c^{-1} \sum_{j=0}^{k-1} \frac{(M-m)!}{j! (M-m-j)!}  \int_0^1 r^j (1-r)^{M-m-j} \dif r.
\end{align*}
With a similar calculation as in \eqref{eq:EspZ(m)x}, which involves beta distribution and gamma function, we arrive at 
\begin{align}\label{eq:UBI1m}
I^m_1 \le \underline c \sum_{j=0}^{k-1} \frac{(M-m)!}{j! (M-m-j)!}  \frac{j! (M-m-j)!}{(M-m+1)!} \le  \frac{\underline c^{-1} k}{M-m}.
\end{align}

Regarding $I^m_2$ defined in \eqref{eq:UBAbsDiffEspZFmDecomp}, we first let $\check X_0, \check X_1,\dots,\check X_{M-m}\stackrel{\text{i.i.d.}}{\sim}\xi$. Then, note that 
\begin{align*}
\int_{(\bX\times\bY)^{M-m}} \1_{A_2^m}  \bigotimes_{\ell=m+1}^M \xi(\dif x_\ell) &\le \bP\left[ \text{at most $(k-1)$ of $\check X_1,\dots,\check X_{M-m}$ falls into } B^{\|\check X_0-x\|_{\infty}}_{x} \right] \\
&\le \binom{M-m-1}{k-1}\binom{M-m}{k}^{-1} = \frac{k}{M-m},
\end{align*}
where the inequality in the second line is due to the symmetry stemming from Assumption \ref{hyp: data} (i), and the fact that congestion along with the tie-breaking rule specified in Definition \ref{def:kNN} may potentially rules out certain permutations. Consequently,
\begin{align}\label{eq:UBI2m}
I^m_2 \le \frac{k}{M-m}.
\end{align}

Putting together \eqref{eq:UBAbsDiffEspZFm1}, \eqref{eq:UBAbsDiffEspZFmDecomp}, \eqref{eq:UBI1m}, and \eqref{eq:UBI2m}, we yield
\begin{align*}
\left| \esp{ Z \big| \sF_m } - \esp{ Z \big| \sF_{m-1} } \right| \le C_m := \frac{\overline{C}(\underline c^{-1}+1)}{M-m}\wedge\frac1k,\quad m=1,\dots,M.
\end{align*}
By Azuma-Hoeffding inequality (cf. \cite[Corollary 2.20]{Wainwright2019High}),
\begin{align}\label{eq:ConceIneqkNN}
\bP\left[ \int_{\bX} \cW\left(P_x, \hat\mu^\cD_{\cN^{k}(x)}\right) \nu(\dif x) - \esp{ \int_{\bX} \cW\left(P_x, \hat\mu^\cD_{\cN^{k}(x)}\right) \nu(\dif x) } \ge \varepsilon \right] \le \exp\left( - \frac{\varepsilon^2}{2\sum_{m=1}^M C_m^2} \right), \quad \varepsilon\ge 0.
\end{align}
To complete the proof, we substitute in the configuration of Theorem \ref{thm:ExpectedRatekNN}. Below we only investigate the rate of $\sum_{m=1}^M C_m^2$ as $M\to\infty$, and do not keep track of the constant.  For simplicity, we set
\begin{align*}
k = k\sim M^{\frac{d}{d_\bX+d}} \quad \text{with} \quad d := 2 \vee \dY 
%d = \begin{cases}
% 2, & d_\bY=1,2,\\ 
% d_\bY, & d_\bY\ge 3.
% \end{cases}
\end{align*}
It follows that 
\begin{align*}
\sum_{m=1}^M C_m^2 \sim \sum_{m=1}^{\lfloor M - M^{\frac{d}{{d_\bX}+{d}}} \rfloor} \frac{1}{(M-m)^2} + \frac{M^{\frac{d}{{d_\bX}+d}}}{M^{\frac{2d}{{d_\bX}+d}}} \sim \int_{M^{\frac{d}{{d_\bX}+d}}}^\infty \frac{1}{r^2} \dif r + \frac{1}{M^{\frac{d}{{d_\bX}+d}}} \sim M^{-\frac{d}{d_\bX+d}},
\end{align*} 
which completes the proof.
\end{proof}

\bibliographystyle{alpha}
\bibliography{refs}

\end{document}